\newcolumntype{C}[1]{>{\hangindent=1em\centering\arraybackslash}m{#1}}
\newcolumntype{R}[1]{>{\hangindent=1em\raggedleft\arraybackslash}m{#1}}
\newcolumntype{L}[1]{>{\hangindent=1em\raggedright\arraybackslash}m{#1}}
\newtheorem{theorem}{Property}
\newtheorem{definition}{Definition}
\newtheorem{corollary}{Corollary}[theorem]
\newtheorem*{remark}{Remark}
\begin{document}
%
\title{A Bayesian Approach to Rule Mining}
%
%
%
%

\author{Luis~Ignacio~Lopera~Gonz\'alez,~\IEEEmembership{Member,~IEEE,}
		Adrian~Derungs,~\IEEEmembership{Member,~IEEE,}
        Oliver~Amft,~\IEEEmembership{Member,~IEEE,}
\IEEEcompsocitemizethanks{\IEEEcompsocthanksitem 
	L.I. Lopera Gonz\'alez, A. Derungs, and O. Amft are with the 
	Chair of Digital Helath at the Erlangen-N\"urnberg Univesity,
    Erlangen, Germany, 91052.\protect\\
E-mail: luis.i.lopera@fau.de}
\thanks{
}}

%
%

\markboth{}%
{Lopera \MakeLowercase{\textit{et al.}}: \title}
%



\IEEEtitleabstractindextext{%
\begin{abstract}
In this paper, we introduce the increasing belief criterion in association rule
mining. The criterion uses a recursive application of Bayes' theorem to compute
a rule's belief. Extracted rules are required to have their belief increase with
their last observation. We extend the taxonomy of association rule mining
algorithms with a new branch for Bayesian rule mining~(BRM), which uses increasing
belief as the rule selection criterion. In contrast, the well-established
frequent association rule mining~(FRM) branch relies on the minimum-support
concept to extract rules.

We derive properties of the increasing belief criterion, such as the increasing
belief boundary, no-prior-worries, and conjunctive premises. Subsequently, we
implement a BRM algorithm using the increasing belief criterion, and illustrate
its functionality in three experiments: (1)~a proof-of-concept to illustrate BRM
properties, (2)~an analysis relating socioeconomic information and chemical exposure
data, and (3)~mining behaviour routines in patients undergoing neurological
rehabilitation. We illustrate how BRM is capable of extracting rare rules and
does not suffer from support dilution. Furthermore, we show that BRM focuses on
the individual event generating processes, while FRM focuses on their
commonalities. We consider BRM's increasing belief as an alternative criterion
to thresholds on rule support, as often applied in FRM, to determine rule
usefulness.
\end{abstract}

\begin{IEEEkeywords}
Association Rule mining, Bayesian Rule Mining, Increasing belief criterion,
Increasing belief boundary.
\end{IEEEkeywords}}

\maketitle

\IEEEdisplaynontitleabstractindextext

%

\IEEEraisesectionheading{\section{Introduction}} 
\IEEEPARstart{A}{ssociation} rules can model a process by describing the
relationship between its variables. In a dynamic process, for example, a rule
states that a change on an input will cause a change on an output. As the
process' evolution is stored in a dataset, rule mining can extract the original
relationship between the process' inputs and outputs. The frequently applied
paradigm for mining rules is frequent association rule mining~(FRM). Rules
extracted by FRM, e.g., $a \rightarrow b$, have their support greater than a set
threshold called minimum-support. The literature defines support as in
Equation~\ref{eq_support}, where $\#(a \rightarrow b )$ is the rule's occurrence
count and $|D|$ is the dataset
size~\cite{Agrawal1993MiningAssociationRulesSets}. 
\begin{equation}
	\mathrm{Support}(a \rightarrow b )=\frac{\#(a \rightarrow b )}{|D|}
 	\label{eq_support}
\end{equation}
 
The following thought experiment illustrates one of FRM's limitations. Suppose
we use all supermarket receipts from the winter holidays for rule mining. Then,
we would see rules that associate the ingredients used for winter holiday meals.
However, if we now consider a year worth of receipts from the same supermarket,
then, the winter holiday meal ingredients would not have enough support to be
extracted. In other words, the minimum-support threshold used to extract rules
in a small dataset will not work in  an extended  version of the dataset due to
the definition of rule support. We refer to FRM's dependency on the dataset size
as support dilution.
 
Another FRM limitation appears when a dataset contains multiple processes. FRM
has the implicit assumption that all processes generate symbols at the same
rate. However, in practice, processes can generate symbols at different rates.
For example, people in Germany occasionally buy white sausages, but when they
do, they always buy wheat beer. So the rule `if white sausage then wheat beer' is
a rare rule when compared to frequent rules, e.g.,  `if milk then eggs'. FRM can
extract rare rules by using a low minimum-support threshold. Unfortunately, 
spurious symbol associations may create unwanted rules that FRM's threshold
cannot eliminate. Filtering out unwanted rules for FRM has been
addressed in the past
\cite{LoperaGonzalez2016MiningHierarchicalRelationsBuilding,
	Liu2018RareItemsetsMiningAlgorithm}. However, we observed that the processing
required to separate rules does not generalise.
 
To bypass FRM's support dilution and rare rule extraction limitations, we
propose to exploit the belief concept of Bayesian filtering and derive an
increasing belief criterion. We drew inspiration from Price’s account of Bayes
views on updating beliefs~\cite{Bayes1763EssaySolvingProblemDoctrine}. Price
illustrates the idea with an example of a group of cavemen coming out into the
world for the first time. The first thing the cavemen will probably notice is
the sun, and how the sun moves through the sky until it disappears. At this
point, the cavemen are unaware if the sun will appear again. As the sun rose and
set during the next days, the cavemen will update their belief about the sun
setting and rising as a defining feature of the way the world works outside the
cave.
 
In this work, we present the following contributions: 
\\(1)~We extend the taxonomy
of association rule mining by a branch for Bayesian rule mining~(BRM), which
uses increasing belief as rule mining criterion. 
\\
(2)~We introduce the increasing belief criterion, derive key properties, and show their
application. 
\\
(3)~We implement an exhaustive search BRM
algorithm and demonstrate its viability as solution for support dilution and 
rare rule extraction, by extracting rules in three datasets: a synthetic time
series, a dataset linking socio-economic variables with chemical-exposure
information, and a dataset of daily behaviour routine annotations of patients
with hemiparesis. In each experiment, we compared rules extracted by BRM and FRM
and evaluated their quality and usefulness for each application.

\begin{figure*}[th!]
	\includegraphics[width=\linewidth]{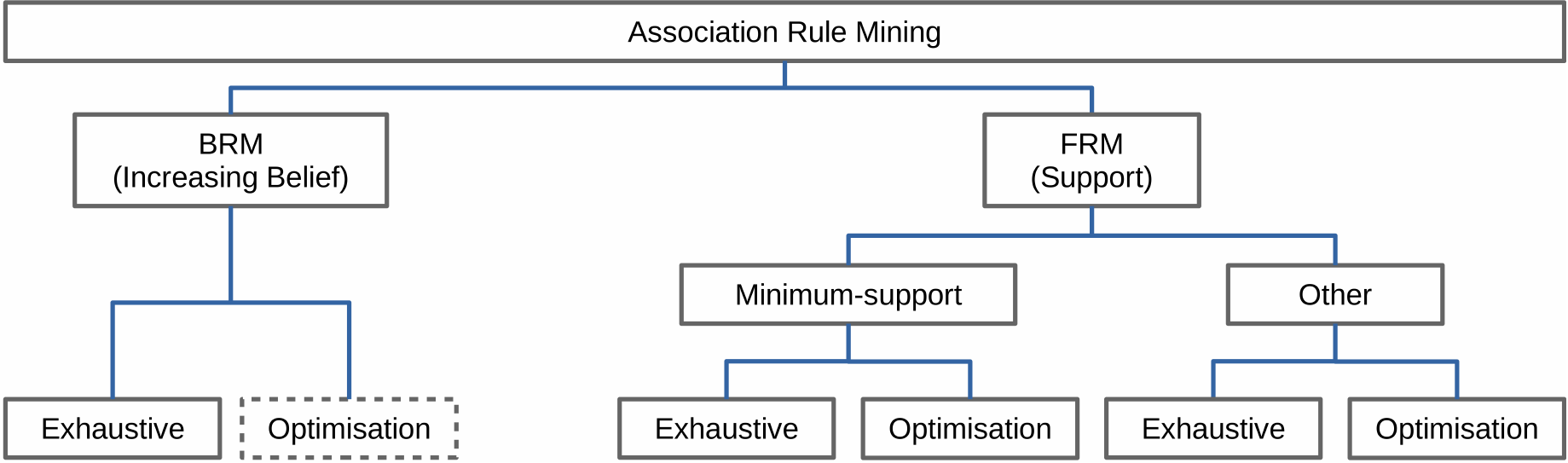}
	\caption[Extended taxonomy of association rule mining]{Extended taxonomy of association rule
	mining. We introduce a new branch for Bayesian rule mining~(BRM), which employs
	increasing belief as rule selection criteria. The method presented in this
	paper is an exhaustive search algorithm. Nevertheless, further algorithms based on the increasing belief criterion are conceivable. The taxonomy expands on Beiranvand et
	al.~\cite{Beiranvand2014MultiobjectivePSOAlgorithmMining}. }
	\label{fig_taxonomy}
\end{figure*}  

\section{Related Work} Association rule mining algorithms can be organised into
branches by their main rule selection criteria.  We consider two algorithm
branches: the FRM branch, which uses support
to extract rules, and the BRM branch, which uses
increasing belief. Figure~\ref{fig_taxonomy} illustrates the first four levels
of our proposed taxonomy. The use of minimum-support determines the first subclasses level for FRM, and the method used for finding frequent symbol sets
defines the next level. Padillo et
al.~\cite{Padillo2017MiningAssociationRulesBig} provided a comprehensive list of
the FRM algorithms, and we expanded on the FRM taxonomy of optimisation
algorithms that use minimum-support proposed by Beiranvand et
al.~\cite{Beiranvand2014MultiobjectivePSOAlgorithmMining}. To our knowledge, and
except for our initial work on
BRM~\cite{LoperaGonzalez2018MiningFunctionalStructuralRelationships}, the
entirety of research on association rule mining belongs to the FRM branch. In the BRM branch, the
method used to find relevant symbol sets determines the first subclass level. We
have added the optimisation block as a place holder for future research.

Below, we elaborate on the FRM methods along the taxonomy. The related work is
organized in three subsections. First, we follow the taxonomy illustrated in
Fig.~\ref{fig_taxonomy} to present FRM methods. Next, we provide examples of
work that considered the challenge of rare rule extraction. Finally, we describe
our previous work in BRM.

\subsection{Frequent Association Rule Mining Branch}
\subsubsection{Minimum-support, Exhaustive}
An exhaustive search algorithm has to traverse the lattice created from the
symbol's power set. The downward-closure
property~\cite{Tan2005IntroductionDataMining} can be used to facilitate lattice
traversal. The property states that the support of any symbol set is less or
equal to the support of any of its possible subsets. Therefore, exhaustive
search algorithms can ignore entire branches of the symbol set lattice when a
given set does not pass the minimum-support threshold.

The use of the downward-closure property divides FRM's exhaustive search class
into two subclasses. \emph{OPUS} is an example of exhaustive search algorithms
that do not use the downward-closure property to traverse the symbol lattice.
Webb~\cite{Webb1995OPUSEfficientAdmissibleAlgorithm} created \emph{OPUS} as a
frequent-set mining algorithm. Subsequently,
Webb~\cite{Webb2000EfficientSearchAssociationRulesa} presented an extension that
converted \emph{OPUS} into a rule mining algorithm. The
\emph{Apriori}~\cite{Agrawal:1994:FAM:645920.672836} and
\emph{ECLAT}~\cite{Zaki2000ScalableAlgorithmsAssociationMininga} algorithms use
the downward-closure property and define two distinct approaches on how to
traverse the symbol set lattice.  The \emph{Apriori} algorithm uses a
breadth-first approach. The algorithm starts with all one symbol sets and
traverses the lattice upwards using the downward-closure property.  The
\emph{ECLAT} algorithm uses a depth-first approach, starting with one symbol and
traversing the lattice in depth using the downward-closure property to avoid
infrequent branches. Exhaustive search algorithms in BRM can use the
premise-conjunction property~(Property~\ref{prop_premise_conjunction_rules}) to
simplify the search for premise symbol sets with more than one element. Our
proposed algorithm uses a breadth-first approach to traverse the symbol lattice.

\subsubsection{Minimum-support, Optimisation} Evolutionary algorithms use
genetic optimization or swarms to find the most common symbol sets. The general
approach is to write maximization objectives which are usually functions of
support and confidence~\cite{Bashir2009FastAlgorithmsMiningInteresting,
	Djenouri2018NewFrameworkMetaheuristicbasedFrequent,
	Tahyudin2017CombinationEvolutionaryAlgorithmMethod}. However, most approaches
still need a minimum-support threshold to select rules. BRM opens a new area of
research, as the increasing belief criterion can be integrated with evolutionary
algorithms, thus replacing support and related measures of rule interest.

Rule grammars and machine learning approaches have been presented as
alternatives of the downward-closure property to restrict rule search space. For
example, Padillo et al.~\cite{Padillo2017MiningAssociationRulesBig} uses rule
grammar and map reduce to optimize the mining process in large datasets. Rule
grammars can be used seamlessly with increasing belief, and rule belief can be
maximized using map reduce by means of Prop.~\ref{prop_inc_belief}. In
timeseries, Guillam-Bert et
al.~\cite{GuillameBert2012LearningTemporalAssociationRules} proposed the TITARl
algorithm. They used decision trees to improve rule description and specificity,
specially when regarding time.

\subsubsection{Other FRM Algorithms} Other algorithms in FRM do not use
minimum-support as part of the rule selection process. ARMGA and EARMGA are
optimisation algorithms proposed by Yan et
al.~\cite{Yan2009GeneticAlgorithmbasedStrategyIdentifying}. (E)ARMGA uses a
genetic programming approach with relative confidence as the fitness function.
The algorithm searches for the best $k$ rules until it reaches a maximum number
of generations or the difference between the relative confidence of the best and
worst individuals is less than a parameter $\alpha$. By avoiding the minimum-support
threshold, (E)ARMGA does not suffer from support dilution. However, as data is
added that does not maintain the distribution of initial rule's symbols,
relative confidence increases and tends to one, regardless of the relative
frequencies of the premise and conclusion sets. Furthermore, the parameter
$\alpha$ is difficult to set without solving the rule mining problem, i.e.,
without doing a parametric search.

Bashir et al.~\cite{Bashir2009FastAlgorithmsMiningInteresting} proposed an
exhaustive search algorithm, which starts by selecting $n$ symbol sets. Then,
their algorithm selects the smallest support values of the chosen symbols sets,
and prunes the search space using the downward-closure property. The search
process is repeated looking for sets with higher support. As there are always
$n$ selected sets, the value for pruning the search spaced is set to the
smallest support value.

\subsection{Rare Rule Extraction} When researchers applied rule mining to
practical problems, they noted that the most-frequent rules were not always the
most interesting. For example, in activities of daily living, some patterns,
like walking and talking,  occur frequently, but do not yield interesting rules.
Therefore, several rule interest metrics were proposed, including
lift~\cite{Brin1997MarketBasketsGeneralizingAssociation},
conviction~\cite{Brin1997DynamicItemsetCountingImplication}, among others, to
prune the final rule set. Nevertheless, rule interest metrics are either based
on support or depend on frameworks that use minimum-support to select the
initial candidate rule set. We argue that the use of support to preselect rules
causes the mining algorithm to miss rare but essential rules. For example,
consider a ceiling lamp controlled by motion and the absence of daylight. While
motion may often trigger the lamp, in comparison, absence of daylight will only
trigger the lamp a couple of times a day.  As a result, finding the relation
between the absence of daylight and the lamp being triggered is difficult based
on support alone~\cite{LoperaGonzalez2016MiningHierarchicalRelationsBuilding}.

The common tactic to solve rare rule problems is to use a sufficiently low
minimum-support threshold. Then, algorithms use rule interest metrics to extract
interesting and rare rules. For example, Liu et
al.~\cite{Liu2016MiningIntricateTemporalRules} proposed a methodology to extract
intricate activity patterns from timeseries. Their method searches for rules in
a region bounded by minimum-support and confidence thresholds. The final rules
are selected using a threshold on information gain. A similar approach was
implemented by Liu~et al.~\cite{Liu2018RareItemsetsMiningAlgorithm}. They
proposed a support band to determine a rule's rarity. In contrast,
Srinivasan~{et al.}\cite{Srinivasan2018RuleSelectorSelectingConditionalAction}
proposed a method for extracting conditional action rules using four new rule
selection criteria. However, their framework uses a minimum-support threshold to
mine frequent symbol sets.

\subsection{Bayesian Rule Mining Branch} We expand on our initial BRM
work~\cite{LoperaGonzalez2018MiningFunctionalStructuralRelationships} by
refining the theoretical derivation and mined rule handling. Moreover, we
introduced two additional properties of increasing belief, add an optional
control parameter, extend the algorithm for database mining, and present three
new evaluation scenarios.

\section{Increasing Belief Criterion}
We defined belief in a rule as the recursive application of the Bayes theorem,
as shown in Definition~\ref{def_belief}. The initial belief $B_1(r)$ of a rule
$r = a \rightarrow b$ is the probability of observing the conclusion $b$ given
that the premise $a$ was observed, i.e., $P(b|a)$. Using the Bayes theorem,
$B_1(r)$ is calculated using the probability $P(a|b)_1$ and the prior $p$. In
general, $P(a|b)_k$ can be estimated by the ratio $k / \#b_k$, where $k$ is the
$k$th observation of the rule $r$ and $\#b_k$ is the number of times $b$ is
observed at the $k$th observation of $r$. Thus, in the initial state, $P(a|b)_1
= 1/\#b_1$. The prior $p$ is an algorithm parameter. $P(b|a)_k'$ and $p'$ denote
the compliments of $P(b|a)_k$ and $p$, respectively. Finally, in the recursive
evaluation of $B_k(r)$, the previous belief evaluation~($B_{k-1}(r)$), replaces
the prior $p$. A common criticism of Bayesian methods is the challenge of
selecting the correct prior probabilities. However, as explained in 
Prop.~\ref{prop_no_prior},  a prior in range $(0, 1)$ has no effect on the
rule selection process.
\begin{definition}[Belief] 
	\label{def_belief}
	Recursive definition of belief.
	\begin{equation*}
	\begin{aligned}
	B(r)_1 &= \frac{P(a|b)_1\cdot p}{P(a|b)_1\cdot p + P(b|a)_1'\cdot p'} \\
	B(r)_k &= \frac{P(a|b)_k\cdot B(r)_{k-1}}{P(a|b)_k\cdot B(r)_{k-1} + P(a|b)_k'\cdot B(r)_{k-1}'}
	\end{aligned}
	\end{equation*}
\end{definition}

The increasing belief criterion requires that a rule's belief does not decrease
with respect to the previous observation, as shown in
Def.~\ref{def_inc_belief}.
\begin{definition}[Increasing Belief Criterion]
	\label{def_inc_belief} 
	A rule has increasing belief, at observation $k$, when $B(r)_k \geq
	B(r)_{k-1}$.
\end{definition}

\subsection{Properties and Proofs} Here, we present the properties we derived
from the rule selection criterion of increasing belief~(Def.~\ref{def_inc_belief}).

\begin{theorem}[Increasing Belief Boundary]
	\label{prop_inc_belief}
	The belief of a rule $r = a \rightarrow b$ will increase or stay constant if
	the conditional probability $P(a | b)$ is equal or greater than 0.5.
\end{theorem}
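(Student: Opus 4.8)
The plan is to argue directly from the recursive formula in Definition~\ref{def_belief}. Fix an observation index $k\geq 2$ and abbreviate $x = B(r)_{k-1}$ and $q = P(a|b)_k$, so that $P(a|b)_k' = 1-q$, $B(r)_{k-1}' = 1-x$, and the recursion reads $B(r)_k = qx / \bigl(qx + (1-q)(1-x)\bigr)$. The claim $B(r)_k \geq B(r)_{k-1}$ is then the inequality $qx / \bigl(qx + (1-q)(1-x)\bigr) \geq x$, which I want to deduce from the hypothesis $q \geq 0.5$.

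First I would establish that the recursion never leaves the half-open interval $(0,1]$ and that its denominator is strictly positive, so that the subsequent manipulations are legitimate. The initial numerator $P(a|b)_1\cdot p$ is strictly positive because $P(a|b)_1 = 1/\#b_1 > 0$ and $p \in (0,1)$; and since every recursive step has the shape $N/(N+M)$ with $N>0$ and $M\geq 0$, a short induction gives $B(r)_j \in (0,1]$ for all $j$, while $q = k/\#b_k > 0$. Consequently, for $x \in (0,1)$ the denominator $qx + (1-q)(1-x)$ contains the strictly positive summand $qx$ and is therefore positive, and for $x = 1$ it equals $q > 0$; in particular the expression is always well defined and no division-by-zero case occurs.

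The remaining step is elementary algebra. If $x = 1$, then $B(r)_k = q/q = 1 = x$, so the belief stays constant. If $0 < x < 1$, I clear the positive denominator in $qx / \bigl(qx + (1-q)(1-x)\bigr) \geq x$, cancel the common positive factor $x$, expand $(1-q)(1-x)$, and collect terms; the inequality collapses to $2q(1-x) \geq 1-x$, and dividing by $1-x > 0$ leaves precisely $q \geq \tfrac12$, i.e.\ $P(a|b)_k \geq 0.5$. Every step in this chain is an equivalence (only multiplication and division by positive quantities were used), so on $0 < x < 1$ the hypothesis is in fact \emph{equivalent} to $B(r)_k \geq B(r)_{k-1}$, with equality exactly when $q = \tfrac12$; together with the $x = 1$ case this proves the property. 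The base case $k = 1$ is analogous, with the prior $p$ playing the role of $B(r)_{k-1}$.

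There is no genuine obstacle here; the only thing demanding care rather than ingenuity is the bookkeeping that confines the belief to $(0,1]$ and thereby keeps the denominator away from zero, after which the statement reduces to the single equivalence $q \geq \tfrac12 \iff 2q(1-x) \geq 1-x$. It is worth noting in passing that the same computation yields the sharper two-sided reading: belief strictly increases when $P(a|b)_k > 0.5$ and $B(r)_{k-1} \in (0,1)$, and strictly decreases when $P(a|b)_k < 0.5$ --- which is the behaviour the increasing-belief criterion is meant to capture.
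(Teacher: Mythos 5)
Your argument is essentially the paper's own proof: both reduce the inequality $B(r)_k \geq B(r)_{k-1}$ via the recursive formula of Definition~\ref{def_belief}, cancel the common factors $B(r)_{k-1}$ and $B(r)'_{k-1}$, and arrive at the equivalent condition $P(a|b) \geq 0.5$. The only difference is that you make explicit the bookkeeping the paper leaves implicit --- positivity of the denominator, the confinement of belief to $(0,1]$, the saturated case $B(r)_{k-1}=1$, and the base case $k=1$ --- which is a correct and slightly more careful rendering of the same route.
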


\begin{proof}
	The increasing belief boundary property is proven by simplifying the expression
	$B(r)_k \geq B(r)_{k-1} $. 
	\centering 
	\small
	\begin{align*}
		B(r)_{k-1} &\leq B(r)_k &&  \text{Def.~\ref{def_inc_belief}}\\
		B(r)_{k-1} &\leq \frac{P(a | b) B(r)_{k-1}}{P(a | b) B(r)_{k-1} 
			+     
			P(a | b)' B(r)'_{k-1}} && \text{Def.~\ref{def_belief}}\\
		1 &\leq \frac{P(a | b)}{P(a | b) B(r)_{k-1} +     
			P(a | b)' B(r)'_{k-1}}\\
		P(a | b) &\geq P(a | b) B(r)_{k-1} + P(a | b)' B(r)'_{k-1}\\ 
		P(a | b)\left(1 - B(r)_{k-1}\right) &\geq P(a | b)' B(r)'_{k-1}\\ 
		P(a | b)B(r)'_{k-1} &\geq P(a | b)' B(r)'_{k-1} \\
		P(a | b) &\geq P(a | b)' = 1 - P(a | b)  \\
		2P(a | b) &\geq 1 \\
		P(a | b) &\geq 0.5
	\end{align*}
\end{proof}

\begin{theorem}[Saturation]
	\label{prop_saturation} If a rule reaches belief of one, new independent symbol
	observations will not alter the rule's belief.
\end{theorem}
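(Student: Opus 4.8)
The plan is to recognise that belief equal to one is a fixed point of the update map in Def.~\ref{def_belief}, and then to propagate this forward by induction on the observation index. First I would set $B(r)_{k_0} = 1$ at the observation $k_0$ at which the rule $r = a \rightarrow b$ saturates; then its complement is $B(r)_{k_0}' = 1 - 1 = 0$, so in the recursion for $B(r)_{k_0+1}$ the term $P(a|b)_{k_0+1}'\, B(r)_{k_0}'$ vanishes. Both numerator and denominator collapse to $P(a|b)_{k_0+1}\, B(r)_{k_0} = P(a|b)_{k_0+1}$, whence $B(r)_{k_0+1} = 1$ as well. The crucial observation is that the concrete value of the likelihood $P(a|b)_{k_0+1}$ never enters the result; it only has to be nonzero.

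Next I would close the induction: assuming $B(r)_k = 1$, the same one-line computation gives $B(r)_{k+1} = 1$, so $B(r)_k = 1$ for every $k \ge k_0$. Since the belief of $r$ is modified only through the update in Def.~\ref{def_belief}, and each such update returns one, no later observation can change it. This also settles the role of \emph{independent} observations: an observation of $b$ that is not associated with $a$ is not an observation of the rule, so it triggers no update at all — it merely increments $\#b$ and thereby lowers the ratio $P(a|b) = k/\#b$ used at the next genuine rule observation. But since that value is irrelevant once the belief is one, the belief is left unperturbed.

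The main obstacle I anticipate is not the algebra, which is immediate, but the well-definedness caveat behind the last step, which divides $P(a|b)_{k_0+1}$ by itself: I must argue $P(a|b)_{k_0+1} \neq 0$. This holds because at the $(k_0+1)$th observation of the rule $b$ has been observed at least $k_0+1$ times, so $\#b_{k_0+1}$ is finite and positive and $P(a|b)_{k_0+1} = (k_0+1)/\#b_{k_0+1} \in (0,1]$. A secondary point I would state carefully is the precise meaning of ``independent symbol observations'' in the statement, pinned down exactly as above so that the claim is unambiguous. I would also remark, for context, that this is consistent with Property~\ref{prop_inc_belief}: once $B(r) = 1$ the belief is non-decreasing for any $P(a|b) \ge 0.5$, and here we see it is in fact constant for any $P(a|b) > 0$.
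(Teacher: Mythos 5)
Your proposal is correct and follows essentially the same route as the paper's own proof: substitute $B(r)_{k-1}=1$ into the recursion of Def.~\ref{def_belief}, note the complement term vanishes, and conclude the updated belief is again one regardless of $P(a|b)$. Your added induction step, the check that $P(a|b)>0$ so the ratio is well defined, and the clarification of what an ``independent'' observation does are reasonable refinements of details the paper leaves implicit, not a different argument.
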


\begin{proof}
	Assume the belief $B(r)_{k-1}$ is one.
	\begin{equation*}
		\begin{aligned}
			B(r)_k &= \frac{P(a | b)B(r)_{k-1}}{P(a | b) B(r)_{k-1} +     
			P(a | b)' B(r)'_{k-1}} && \text{Def.~\ref{def_belief}} \\
			& \text{Using the assumption} \\
			B(r)_{k-1} &= 1 \wedge B(r)'_{k-1} = 0 \\
			&\Rightarrow\\
			B(r)_k &= \frac{P(a | b)\cdot 1 }{P(a | b) \cdot 1 +     
				P(a | b)'\cdot 0} \\
			B(r)_k &= \frac{P(a | b)}{P(a | b)}\\
			B(r)_k &= 1	
		\end{aligned}
	\end{equation*}

	In saturation, changes to the probability $P(a | b)$, i.e., the ratio between
	rule observations and the conclusion symbol observations, have no effect on the
	rule's belief.
\end{proof}

\begin{corollary}
	Rules created with the first observation of a symbol as conclusion have
	saturated belief.
\end{corollary}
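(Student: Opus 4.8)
The plan is to show that any such rule already has belief one at its very first observation, and then to invoke the Saturation property (Prop.~\ref{prop_saturation}) to conclude that this value is never changed by later observations.

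First I would unpack the phrase \emph{``the first observation of a symbol as conclusion''}. If the rule $r = a \rightarrow b$ is registered at the moment the symbol $b$ is observed for the first time, then at the rule's first observation ($k=1$) the conclusion symbol $b$ has been seen exactly once, i.e.\ $\#b_1 = 1$. By the estimator $P(a|b)_k = k/\#b_k$ stated in the text, this gives $P(a|b)_1 = 1/\#b_1 = 1$, and hence its complement $P(a|b)_1' = 1 - P(a|b)_1 = 0$.

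Next I would substitute these values into the base case of Definition~\ref{def_belief}:
\begin{equation*}
B(r)_1 = \frac{P(a|b)_1\cdot p}{P(a|b)_1\cdot p + P(a|b)_1'\cdot p'} = \frac{1\cdot p}{1\cdot p + 0\cdot p'} = \frac{p}{p} = 1,
\end{equation*}
which holds for every admissible prior $p \in (0,1)$, consistent with the no-prior-worries property (Prop.~\ref{prop_no_prior}). Thus the rule enters saturation already at $k=1$. Applying Prop.~\ref{prop_saturation} with $B(r)_1 = 1$, every subsequent independent observation leaves the belief equal to one, so the rule has saturated belief, as claimed.

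The only genuinely delicate point is the first step: one must argue, purely from the meaning of $\#b_k$, that creating the rule \emph{together with} the first occurrence of $b$ forces $\#b_1 = 1$ (and not, say, $\#b_1 = 0$, which would make $P(a|b)_1$ ill-defined). Once that bookkeeping is settled, the remainder is a one-line substitution into Definition~\ref{def_belief} followed by a citation of the Saturation property, so no further computation is needed.
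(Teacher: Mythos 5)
Your argument is correct and matches the paper's own proof: both compute $P(a|b)_1 = \#r/\#b = 1/1 = 1$ at the rule's first observation, substitute into the base case of Definition~\ref{def_belief} to get $B(r)_1 = 1$ for any admissible prior, and conclude saturation via Prop.~\ref{prop_saturation}. Your version merely spells out the substitution and the bookkeeping on $\#b_1$ a bit more explicitly than the paper does.
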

\begin{proof}
	Note that when the rule and its conclusion symbol are observed for the first
	time, the probability $P(a|b)=1$
\begin{equation*}
	P(a|b) = \frac{\#r}{\#b} = \frac{1}{1} = 1 
\end{equation*}
	Replacing the value of $P(a|b)$ in Def.~\ref{def_belief} for $k=1$ yields
	$B(b|a)_1 = 1$. Therefore, the rule is in saturation.
\end{proof}

\begin{corollary}
	A similar saturation effect occurs when the rule's belief is equal to zero.
\end{corollary}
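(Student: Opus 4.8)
The plan is to mirror the proof of Property~\ref{prop_saturation}, interchanging the roles of the two absorbing values $1$ and $0$. First I would assume $B(r)_{k-1} = 0$, so that its complement is $B(r)'_{k-1} = 1 - B(r)_{k-1} = 1$. Substituting these two values into the recursive case of Definition~\ref{def_belief}, the numerator $P(a|b)_k\, B(r)_{k-1}$ collapses to $0$ and the denominator $P(a|b)_k\, B(r)_{k-1} + P(a|b)'_k\, B(r)'_{k-1}$ collapses to $P(a|b)'_k$, so that
\begin{equation*}
	B(r)_k = \frac{0}{P(a|b)'_k} = 0 .
\end{equation*}
Hence $B(r)_k = B(r)_{k-1} = 0$, and iterating this one step at a time (a short induction over the remaining observations) shows the belief stays at $0$ for every later $k$. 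Exactly as in Property~\ref{prop_saturation}, the factor $P(a|b)_k$, i.e.\ the ratio between rule firings and conclusion-symbol observations, cancels out of the expression entirely; this is the claimed ``saturation'' effect: new independent observations cannot move the belief away from $0$.

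The only delicate point, and the main obstacle, is that the quotient above is well defined only when $P(a|b)'_k \neq 0$, i.e.\ when $P(a|b)_k \neq 1$. I would dispose of this exactly the way the proof of Property~\ref{prop_saturation} tacitly disposes of the symmetric degeneracy $P(a|b) = 0$: the combination $B(r)_{k-1} = 0$ together with $P(a|b)_k = 1$ is a genuine $0/0$ singularity of the recursion, and it is the precise mirror of the $0/0$ silently excluded in the saturation-at-one argument; moreover $P(a|b)_k = k/\#b_k = 1$ forces $\#b_k = k$, in which case the same substitution gives $B(r)_k = 1$, i.e.\ the other absorbing state. I would dispatch this in a single sentence rather than belabour it. I would also add a short remark that, since the prior $p$ is taken in $(0,1)$ and $P(a|b)_1 = 1/\#b_1 > 0$, the belief is strictly positive from the first observation onward, so $B(r) = 0$ is best read as a fixed point / limiting value of the recursion rather than a value that is actually attained, mirroring the status of $B(r) = 1$.

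Finally I would close with the interpretation that parallels Property~\ref{prop_saturation}: $B(r) = 0$ and $B(r) = 1$ are the two fixed points of the belief recursion, and once a rule's belief reaches either of them it is frozen. A rule stuck at $B(r) = 0$ formally meets the increasing-belief criterion of Definition~\ref{def_inc_belief}, since $B(r)_k \geq B(r)_{k-1}$ holds with equality, yet it accumulates no support, which is why such rules may be treated as terminal and of no interest.
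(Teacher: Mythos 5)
Your core argument---substituting $B(r)_{k-1}=0$ into the recursive case of the belief definition so that the numerator vanishes and $B(r)_k=0$ regardless of $P(a|b)_k$---is exactly the paper's (one-line) proof, so the proposal is correct and takes the same approach; the additional fixed-point commentary is fine but not needed. One small caution about your aside: when $B(r)_{k-1}=0$ \emph{and} $P(a|b)_k=1$ the recursion gives a $0/0$ indeterminacy, not $B(r)_k=1$ (that value only follows when $B(r)_{k-1}>0$), but this lies outside what the corollary asks.
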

\begin{proof}
	When $B_{k-1}(r) = 0$, then, by Def.~\ref{def_belief}, $B_{k}(r) = 0$,
	regardless of the value of $P(a|b)$.
\end{proof}
\begin{theorem}[Premise Conjunction]
	\label{prop_premise_conjunction_rules} 
	
	If a rule $r$ with a conjunction of symbols as premise has increasing belief,
	its atomic rule constituents also have increasing belief.
\end{theorem}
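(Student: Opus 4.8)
The plan is to reduce the statement to the Increasing Belief Boundary (Property~\ref{prop_inc_belief}) together with an elementary counting comparison between the conjunctive rule and its atomic constituents. Write $r = a_1 \wedge a_2 \wedge \cdots \wedge a_n \rightarrow b$, and let $r_i = a_i \rightarrow b$ denote one of its atomic constituents. Recall that $P(a \mid b)_k = k/\#b_k$, where $k$ counts observations of a rule and $\#b_k$ is the number of $b$-observations accumulated up to the $k$th firing of that rule. The first thing I would note is that the chain of steps inside the proof of Property~\ref{prop_inc_belief} is reversible: for a rule whose previous belief lies strictly in $(0,1)$, the criterion $B(r)_k \geq B(r)_{k-1}$ holds \emph{if and only if} $P(a\mid b)_k \geq 0.5$. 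So it is enough to show that whenever the conjunctive rule satisfies $P(a\mid b)_k \geq 0.5$ at its $k$th observation, every atomic constituent $r_i$ satisfies the same bound at its own current observation.

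Next I would fix the dataset position $t$ at which $r$ is observed for the $k$th time, and carry out the counting argument there. Every occurrence of the premise conjunction $a_1 \wedge \cdots \wedge a_n$ entails an occurrence of the single symbol $a_i$, so every observation of $r$ is simultaneously an observation of $r_i$; hence at position $t$ the atomic rule $r_i$ has been observed some number $m \geq k$ of times. The accumulated conclusion count at position $t$ is the same $N := \#b_k$ for both rules, since it only tallies occurrences of $b$, and $m \leq N$ because a rule cannot be observed more often than its conclusion symbol. Therefore
\[
P(a_i \mid b)_m = \frac{m}{N} \;\geq\; \frac{k}{N} = P(a \mid b)_k .
\]
Combining the two observations: if $r$ has increasing belief at observation $k$ and is not saturated, then $P(a\mid b)_k \geq 0.5$, whence $P(a_i\mid b)_m \geq 0.5$, and the forward direction of Property~\ref{prop_inc_belief} gives $B(r_i)_m \geq B(r_i)_{m-1}$, i.e.\ $r_i$ has increasing belief at its latest observation. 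Since $i$ was arbitrary, all atomic constituents inherit increasing belief.

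I would then dispose of the saturation case separately: if $r$ has reached belief $1$, then by Property~\ref{prop_saturation} and its corollaries this happened at some observation $j$ with $P(a\mid b)_j = 1$, i.e.\ $j = \#b_j$; the inequality $m \leq N$ above then collapses to $m = N$, forcing $P(a_i\mid b)_m = 1$, so $r_i$ is saturated at $1$ as well and its belief stays constant — still increasing belief in the sense of Def.~\ref{def_inc_belief}. Saturation at $0$ cannot occur, since $P(a\mid b)_k = k/\#b_k > 0$ for every $k \geq 1$ when the prior lies in $(0,1)$.

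The main obstacle is not any single calculation but the bookkeeping around the observation indices: making precise that the accumulated $b$-count is genuinely shared by $r$ and $r_i$ at a common dataset position, that dropping from the conjunctive premise to a single literal can only \emph{raise} the observation index ($m \geq k$) while never pushing it past the conclusion count ($m \leq N$), and that the belief-increase criterion is truly two-sided away from saturation so that "$r$ has increasing belief'' really does yield "$P(a\mid b)_k \geq 0.5$''. Once those points are pinned down, the remaining argument is the one-line inequality above plus a direct appeal to Property~\ref{prop_inc_belief}.
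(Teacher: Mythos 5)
Your proposal is correct and is essentially the paper's own argument: the same key facts (the conjunctive rule is observed no more often than any atomic constituent while both share the same conclusion count, so $P(a_i\mid b)\geq P(a\mid b)\geq 0.5$, then Property~\ref{prop_inc_belief}) are used, merely phrased directly instead of as the paper's proof by contradiction. Your extra bookkeeping about the reversibility of Property~\ref{prop_inc_belief} and the saturated case is a more careful treatment of points the paper's proof leaves implicit, but it does not change the underlying approach.
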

\begin{proof}
	Suppose that rule $r$ is of the form $(a,b,c) \rightarrow d$ and it has
	increasing belief, but the constituent atomic rule  ${r}_{ad}$ of the form $a
	\rightarrow d$ has not. Let $\#r$, $\#r_{ad}$, $\#r_{bd}$, and $\#r_{cd}$ 
	denote the number of times the rule and its atomic constituents were observed
	in the dataset. Additionally, $\#d$ is the number of times $d$ was observed in
	the dataset.
	\begin{equation*}
		\begin{aligned}
			\frac{\#r}{\#d} &\geq 0.5 && \text{Assumption and} \\
								&	~&& \text{Prop.~\ref{prop_inc_belief}} \\
			\#r &\leq \mathrm{min}\left(\#r_{ad}, \#r_{bd}, \#r_{cd} \right) \\
			\#r &\leq \#r_{ad} \\			
			\frac{\#r}{\#d} &\leq \frac{\#r_{ad}}{\#d} \\
			\frac{\#r}{\#d} &\leq \frac{\#r_{ad}}{\#d} < 0.5 && \mathrm{Prop.~\ref{prop_inc_belief}}\\
			\frac{\#r}{\#d} &< 0.5 \\
		\end{aligned}
	\end{equation*} 
	Contradiction. $\frac{\#r}{\#d}$ cannot~be $<0.5$  and $\geq 0.5$ simultaneously.
\end{proof}

\begin{corollary}
	\label{coly_premise_conjunction_rules}
	A similar argument to conjunctive premises cannot be made for rule conclusions.
	The atomic constituents of a conjunctive conclusion rule $r$ are not required
	to have increasing belief for $r$ to pass the increasing belief criterion. The
	constituent atomic rule frequencies have no effect on the rules' belief.
\end{corollary}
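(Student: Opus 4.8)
The plan is to prove the two assertions of the corollary in turn: that the frequencies of the atomic constituent rules do not enter the belief of a conjunctive-conclusion rule, and that, as a consequence, no analogue of Property~\ref{prop_premise_conjunction_rules} can hold, which I would settle by exhibiting a counterexample.

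First I would unwind Definition~\ref{def_belief} for a rule $r = a \rightarrow (b,c)$ with a conjunctive conclusion. The conjunction $(b,c)$ occupies the slot of the single conclusion symbol, so the only data-dependent quantity feeding $B(r)_k$ is $P(a \,|\, (b,c))_k = k / \#(b,c)_k$, where $\#(b,c)_k$ counts the joint observations of $b$ and $c$ up to the $k$th observation of $r$. The counts $\#r_{ab}$ and $\#r_{ac}$ of the atomic constituent rules $a \rightarrow b$ and $a \rightarrow c$ never appear in the recursion, which already establishes the second assertion of the corollary.

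Next I would explain why the contradiction argument of Property~\ref{prop_premise_conjunction_rules} cannot be transplanted to conclusions. In that proof the two decisive facts were $\#r \le \#r_{\mathrm{atomic}}$ and that the conclusion count $\#d$ is \emph{shared} between $r$ and every atomic constituent, so that dividing by the common $\#d$ preserves the inequality. For a conjunctive conclusion the first fact still holds (every observation of $a \rightarrow (b,c)$ is an observation of $a \rightarrow b$, hence $\#r \le \#r_{ab}$), but the second fails: the relevant conclusion counts are now $\#(b,c)$ for $r$ and $\#b$ for $r_{ab}$, and only $\#(b,c) \le \#b$ is available. Thus in $P(a\,|\,(b,c)) = \#r/\#(b,c)$ versus $P(a\,|\,b) = \#r_{ab}/\#b$ both numerator and denominator of the former are dominated by those of the latter, so no ordering of the two probabilities can be inferred, and in particular $P(a\,|\,(b,c)) \ge 0.5$ does not force $P(a\,|\,b) \ge 0.5$.

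Finally I would turn this gap into an explicit counterexample. Using the first part, $B(r)$ depends only on $\#r$ and $\#(b,c)$, while $B(r_{ab})$ depends only on $\#r_{ab}$ and $\#b$; since additional observations of $b$ without $c$ and without the premise $a$ raise $\#b$ without touching $\#r$, $\#(b,c)$, or $\#r_{ab}$, I can arrange a non-saturated dataset in which, at the last common observation of $r$ and $r_{ab}$, one has $P(a\,|\,(b,c)) = 1$ but $P(a\,|\,b) = 1/4$. By Property~\ref{prop_inc_belief} the rule $r$ then has increasing belief, whereas the chain of equivalences in the proof of Property~\ref{prop_inc_belief}, being reversible whenever the previous belief is strictly below one, forces $B(r_{ab})$ to strictly decrease, so the atomic constituent fails the increasing belief criterion. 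The main obstacle is conceptual rather than computational: one must pin down that a conjunctive conclusion is bookkept as a single joint symbol, so that $\#(b,c)$ and $\#b$ are genuinely distinct, and one must keep the counterexample away from saturation so that the reversed implication from Property~\ref{prop_inc_belief} is legitimate; once these two points are fixed, both halves of the corollary follow quickly.
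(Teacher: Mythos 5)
Your proposal is correct and follows essentially the same route as the paper's proof: both hinge on a conjunctive conclusion that is observed (essentially) only together with the rule, so that $P(a\,|\,(b,c))=1\geq 0.5$ and Property~\ref{prop_inc_belief} certifies increasing belief without any reference to the atomic constituents' counts. You go somewhat further than the paper's terse argument by observing that the constituent frequencies never enter the belief recursion and by explicitly arranging a non-saturated atomic constituent with $P(a\,|\,b)<0.5$ that fails the criterion (with the needed caution about reversibility of the Property~\ref{prop_inc_belief} derivation away from saturation), which makes the ``not required'' claim fully explicit.
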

\begin{proof}
	Assume there is  a rule $A \rightarrow B$, where $B$ is a combination of
	symbols occurring only once in the dataset. Therefore, the rule $A \rightarrow
	B$ occurs only once too. Furthermore, $P(A|B)=1$ implying that the rule has
	increasing belief due to Prop.~\ref{prop_inc_belief}. As a result, the
	constituent atomic rule's belief has no relevance for the rule with a
	conjunctive conclusion.
\end{proof}

\begin{theorem}[No-Prior-Worries]
	\label{prop_no_prior} For the increasing belief criterion, the prior parameter
	$p$ has no effect for rule selection and any value in the open interval $(0,1)$
	can be used. Using zero or one as $p$ will cause the rule's belief to
	saturate~(Prop.~\ref{prop_saturation}).
\end{theorem}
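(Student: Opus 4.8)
The plan is to dispose of the two claims of the statement in turn. For the principal claim I take $p\in(0,1)$ and show that whether a rule $r=a\rightarrow b$ is extracted is insensitive to the particular value of $p$; for the parenthetical claim I take $p\in\{0,1\}$ and show the belief is pinned to $1$ or to $0$ from the first observation onward.

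\emph{Case $p\in(0,1)$.} A rule is extracted exactly when $B(r)_k\geq B(r)_{k-1}$ holds at every observation $k$, where, following Def.~\ref{def_belief}, the natural convention is $B(r)_0:=p$. The point is that the derivation of Prop.~\ref{prop_inc_belief} is a chain of equivalences: it only divides by the positive quantities $B(r)_{k-1}$ and $B(r)'_{k-1}=1-B(r)_{k-1}$ and multiplies through by a positive denominator, so for every $k$ with $B(r)_{k-1}\in(0,1)$ one has $B(r)_k\geq B(r)_{k-1}\iff P(a|b)_k\geq 0.5$, a condition that makes no mention of the prior. Hence the pass/fail outcome at each such observation — and therefore the rule's extraction status — is the same for every $p\in(0,1)$. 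What remains is to verify that the beliefs stay in the regime $(0,1)$ where Prop.~\ref{prop_inc_belief} applies. Since $P(a|b)_k=k/\#b_k$ and the $k$th observation of $r$ forces $\#b_k\geq k$, each $P(a|b)_k$ lies in $(0,1]$; and the update, written in odds form as $B(r)_k/(1-B(r)_k)=\big(P(a|b)_k/(1-P(a|b)_k)\big)\cdot\big(B(r)_{k-1}/(1-B(r)_{k-1})\big)$, shows that as long as every $P(a|b)_i<1$ the belief stays strictly inside $(0,1)$ once it started there (which it does because $p\in(0,1)$); this iterated-product form also makes the cancellation of the $p/(1-p)$ factor explicit. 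If instead $P(a|b)_k=1$ at some observation, then $B(r)_k=1$ by Def.~\ref{def_belief} and $B(r)_j=1$ for all $j\geq k$ by Prop.~\ref{prop_saturation}, so the criterion holds trivially thereafter, again independently of $p$.

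\emph{Case $p\in\{0,1\}$.} Substituting into the $k=1$ line of Def.~\ref{def_belief}: for $p=1$ we get $p'=0$ and $B(r)_1=P(a|b)_1/P(a|b)_1=1$, and for $p=0$ we get $B(r)_1=0$. Prop.~\ref{prop_saturation} (respectively its zero-belief corollary) then gives $B(r)_k=1$ (respectively $B(r)_k=0$) for all $k$, so the belief never moves and conveys no discriminating information — the saturation asserted in the statement.

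\emph{Main obstacle.} The delicate part is the bookkeeping of the boundary values: Prop.~\ref{prop_inc_belief} was proved by cancelling $B(r)_{k-1}$ and $1-B(r)_{k-1}$, so its equivalence only transfers while $B(r)_{k-1}\in(0,1)$, and I must check that for $p\in(0,1)$ the only way to leave this interval is the saturation-at-one event triggered by $P(a|b)_k=1$, which is then absorbed by Prop.~\ref{prop_saturation}. Everything else — that $P(a|b)_k$ is well defined and lies in $(0,1]$, and that $p/(1-p)$ is a fixed positive constant that cancels out of the comparison — is routine once the odds form of the recursion is in hand.
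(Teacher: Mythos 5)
Your proof is correct, and it reaches the conclusion by a somewhat different and more thorough route than the paper. The paper's own proof only examines the first update: it writes the criterion $p \leq B(r)_1$ with the prior playing the role of the previous belief, expands Def.~\ref{def_belief}, and simplifies until the condition collapses to $1 \geq p$ (then $1 > p$, and implicitly $p>0$, to avoid the saturation of Prop.~\ref{prop_saturation}), concluding that within $(0,1)$ the particular value of $p$ imposes no constraint; it does not explicitly treat later observations, and its last algebraic step silently divides by $2P(a|b)-1$, which is only legitimate when $P(a|b) > 0.5$. You instead reuse Prop.~\ref{prop_inc_belief} as a step-by-step equivalence, $B(r)_k \geq B(r)_{k-1} \iff P(a|b)_k \geq 0.5$ whenever $B(r)_{k-1} \in (0,1)$, and then close the gap the paper leaves open: the odds-form of the recursion shows the $p/(1-p)$ factor is a constant multiplier that cancels from the comparison and that beliefs started in $(0,1)$ remain there unless some $P(a|b)_k = 1$ forces saturation at one, an event that is itself prior-independent and absorbed by Prop.~\ref{prop_saturation}; you also verify the degenerate priors $p \in \{0,1\}$ by direct substitution, matching the parenthetical claim. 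What your version buys is coverage of every observation $k$ and careful handling of the boundary cases where the cancellation argument could fail; what the paper's version buys is brevity, since by the recursive structure the first-step computation with $p$ in place of $B(r)_{k-1}$ is formally the same calculation you delegate to Prop.~\ref{prop_inc_belief}.
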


\begin{proof}
	We look at the first iteration of the belief evaluation criteria.
	\begin{equation*}
		\begin{aligned}
			 & \text{By \hfil{Defs.}~\ref{def_belief} and~\ref{def_inc_belief}} \\
			p &\leq \frac{P(a|b) \cdot p}{P(a|b) \cdot p + (1 - P(a|b))\cdot(1-p)}	\\
			1 &\leq \frac{P(a|b)}{P(a|b) \cdot p + (1 - P(a|b))\cdot(1-p)}\\
			P(a|b) &\geq P(a|b) \cdot p + (1 - P(a|b))\cdot(1-p)\\
			P(a|b) &\geq P(a|b) \cdot p + 1 - P(a|b) -p + P(a|b) \cdot p\\
			P(a|b) &\geq 2 \cdot P(a|b) \cdot p + 1 - P(a|b) -p \\
			2 \cdot P(a|b) -1  &\geq 2 \cdot P(a|b) \cdot p - p\\
			2 \cdot P(a|b) -1  &\geq p \cdot \left( 2 \cdot P(a|b) - 1\right) \\
			1 &\geq p \\
			  &~\text{Avoid saturation -- Prop.~\ref{prop_saturation}}\\
			1 &> p && 
		\end{aligned}
	\end{equation*}
	
\end{proof}

\begin{remark}
	Although the prior has no effect on rule selection, the prior determines a
	rule's final belief value.
\end{remark}

\section{Implementation of the BRM algorithm} Algorithm~\ref{alg_bre} describes
BRM's procedure,  where $D$ is a dataset, and $t$ is a respective record or
observation window in $D$. Observation windows are used to analyse timeseries.
The algorithm parameter $ow$ sets the size of the observation window. During
analysis, the observation window moves in steps of one symbol. With $t$ properly
defined, BRM creates candidate rules from each $t$ using
\texttt{select\_candidate\_rules}.  Candidate rules $CR$ are created in two
different ways depending on whether $D$ is a database or a timeseries. For a
timeseries, and to respect time dependency, the candidate rules are the result
of pairing the first symbol in the observation window with all remaining
symbols. In contrast, for a database, the candidate rules are all pairwise
combinations of symbols in the record. Next,
\texttt{pass\_selection\_criteria(CR)} returns a set of candidate rules that
pass the selection criterion shown in Def.~\ref{def_inc_belief}. The rules
returned by \texttt{pass\_selection\_criteria(CR)} are added to the rule set
$A$. Subsequently, any rule in $CR$ that fails the selection criterion is
removed from $A$. BRM only requires one pass of $D$.

\begin{algorithm}
	\caption{Atomic rule mining}	
	\label{alg_bre}
	\Parameter{$p$, $s=1$, [$ow$]} 
	A = $\emptyset$ \\
	\ForAll{t $\in D$}{
		CR = select\_candidate\_rules($t$)\\
		A.update(pass\_selection\_criteria(CR))\\
		A.remove(!pass\_selection\_criteria(CR))\\
	}
	cross\_check\_rules(A)\\
	quick\_update\_belief(A)\\
	\Return A
\end{algorithm}

BRM only evaluates a rule's increasing belief criterion when the rule is
observed. Therefore, once BRM processed the entire dataset,  rules are checked
for loss of belief and removed if necessary. Loss of belief can occur in two
cases: (1) the rule's belief saturated~(Prop.~\ref{prop_saturation}) and
additional observations did not change the rule's belief, or, (2)~at the end of
the dataset, BRM observed extra conclusion symbols without corresponding rule
observations. Both conditions are checked by the \texttt{cross\_check\_rules}
function.  Using Prop.~\ref{prop_inc_belief}, \texttt{cross\_check\_rules}
evaluates if the rules in $A$ still pass the increasing belief criterion.
Without \texttt{cross\_check\_rules}, the rule set $A$ would depend on the order
of the records. At last, the \texttt{quick\_update\_belief} function checks the
saturated rules left in $A$, i.e., saturated rules that pass the belief
criterion, and updates their beliefs.  The function assumes that all
unassociated conclusions occurred before any of the rule observations. Then, the
belief of a saturated rule is computed using BRM's internal counters. Therefore,
\texttt{quick\_update\_belief} does not require additional passes over dataset $D$.

To add some control to BRM's selectivity, 
we incorporated the parameter Selector $s$, with range [0,1], in the probability estimation $P(a|b)$, as shown in Eq.~\ref{eq_selector}. The Selector $s$ determines the percentage of
unassociated conclusion symbol observations to consider when computing a rule's
belief. For Selector $s=1$, BRM's default behaviour is realised, i.e. all conclusion symbol occurrences are used. In contrast, for Selector $s=0$, all candidate rules will be accepted, as  $P(a|b)=1$ and any rule will have maximum belief.
\begin{equation}
P(a|b) = \frac{\#(a\rightarrow b)}{s\cdot(\#b - \#(a\rightarrow b)) + \#(a\rightarrow b)}
\label{eq_selector}
\end{equation}

Alg.~\ref{alg_bre} extracts atomic rules, i.e., rules which have one symbol in their
premise and conclusion respectively. More complex rules are
found using Alg.~\ref{alg_cpe}, which is based on the premise-conjunction
property (Prop.~\ref{prop_premise_conjunction_rules}).
For example, assume the rule
$(a,b,c) \rightarrow d$ has increasing belief. Then, it follows that the rules
$a \rightarrow d$, 
$b \rightarrow d$, 
$c \rightarrow d$, 
$(a,b)\rightarrow d$, 
$(a,c)\rightarrow d$, and 
$(b,c)\rightarrow d$, also have increasing
belief. As a result, Prop.~\ref{prop_premise_conjunction_rules} is used to
narrow the search space of conjunctive premises. However, as shown in
Corollary~\ref{coly_premise_conjunction_rules}, there is no mathematical
requirement for conjunctive conclusions, e.g., $a \rightarrow (b,c,d)$,  to have
atomic constituents with increasing belief. Thus, we argue that it is the
application that will specify how to search for conjunctive conclusions, i.e.,
whether or not to require atomic constituents with increasing belief.

\begin{algorithm}
	\caption{Conjunctive Premise Rule Mining}	
	\label{alg_cpe}
	\Parameter{A}
	$G = \{b: [a_i]~\forall a_i \rightarrow b \in \mathrm{A}\}$\\
	Block = $\emptyset$ \\
	NewRules = $\emptyset$\\
	\ForEach{$b: [a_i] \in  G$}{
		\ForEach{$j \in [2, \dots, |[a_i]|]$}{
			$P = \{\forall \mathbf{a} \in \binom{[a_i]}{j} \wedge \forall \dot{\mathbf{a}}  \in \mathrm{Block} | \dot{\mathbf{a}} \not\subset \mathbf{a} \}$\\
			\If{$P == \emptyset$}{
				\Break}
			\ForEach{$\mathbf{a} \in P$}{
				\uIf{$\mathrm{pass\_selection\_criteria}(\mathbf{a} \rightarrow b)$}{
					NewRules.update($\mathbf{a} \rightarrow b$)	
				}
				\Else{
					Block.update($\mathbf{a}$)}
				
			}
		}	
	}
	\Return NewRules	
\end{algorithm}

Alg.~\ref{alg_cpe} uses a breath-first approach.
First, the function groups all atomics rules in $A$ by their conclusions in the
dictionary $G$, where for a list of rules $[a_i \rightarrow b] \in A$ the
premise $b$ is the dictionary key, and the list of associated premises $[a_i]$
are the values.  Subsequently, for each conclusion $b \in G$ and associated list
of premises $[a_i]$, we explore the different combinations of
$\left(\frac{[a_i]}{j}\right)$ to create new rules with conjunctive premises,
starting with pairwise combinations~($j=2$) and finalizing with the complete
list $[a_i]$. For each value of $j$, a set $P$ is constructed with all possible
$j$-combinations of premises in $[a_i]$, excluding those combinations that
contain elements in set $Block$. Using each $j$-combination $\mathbf{a}$ in $P$ as premise,
e.g., $\mathbf{a} = [a_0, a_1]$ for $j=2$, new rules are constructed with $b$ as the 
conclusion, e.g., $\mathbf{a}
\rightarrow b$. The rules with increasing belief are added to the $NewRules$
set. For rules without increasing belief, the rule's premise $\mathbf{a}$ is
added to the $Block$ set. The search of conjunctive premises for conclusion $b$
ends when the function reaches either the set of all premise symbols or an empty
set $P$ of unblocked combinations. The function finishes after evaluating all
conclusions in $G$ and returns the $NewRules$ set.

\section{Proof of concept}
With the experiment illustrated here, we intend to show how BRM helps to sort
symbols according to their generating process. Specifically, we show that BRM
does not suffer from support dilution. As a result, BRM can extract rare rules
created by a process with rare symbol emission.

\subsection{Methodology} 
We created a timeseries generator to simulate processes emitting common and rare
symbols. Using BRM, we mined the resulting timeseries for atomic rules. With the
mined rules, we constructed graphs, when independent subgraphs formed, each
subgraph was considered a symbol cluster that represented a generating process.

The timeseries generator mixed two processes: (1)~a common process $p_r(t)$ that
frequently emitted random symbols, and (2)~a rare process $p_c(t)$ that
occasionally emitted a specific pattern of symbols denoted as a chain. The
process $p_r(t)$ sampled vocabulary $V_r = {0,1,2,3}$ using a uniform
distribution. In contrast,  $p_c(t)$ used vocabulary $V_c={10,11,12}$  to emit
the chain $10 \rightarrow 11 \rightarrow 12$. The chain symbols were always
emitted in the same order, but the timing between symbols varied uniformly,
sampled from the integer interval [1,10]. The timeseries generator filled the
gaps between $p_c(t)$ emissions with symbols from $p_r(t)$, resulting in a dense
timeseries. Additionally, the timeseries generator used 1000 sampled symbols
from $p_r(t)$ and 20 chains from $p_c(t)$. $p_c(t)$ chains were uniformly
distributed throughout the timeseries and could not overlap.
Eq.~\ref{eq_time_series} shows an excerpt of a generated timeseries $ts$,
with the symbols emitted from $p_c(t)$ highlighted.
\begin{equation}
\label{eq_time_series}
ts = [\cdots, 0, 2, 3,\mathbf{10}, 2, 0, 0,\mathbf{11},
             2, 2, 1, 1,\mathbf{12}, 2, 2, 2,\cdots]
\end{equation}	
 
We processed the timeseries with BRM and created graphs using the mined atomic
rules. To improve the subgraph separation, we tested the following filters:
(1)~a confidence threshold of 0.5, matching the implicit threshold on $P(a|b)$
found in Prop.~\ref{prop_inc_belief}, (2)~a filter based on the Bayesian
factor, and (3)~selecting the rule with the highest confidence for each
conclusion.  In Eq.~\ref{eq_confidence}, we show the estimation of
confidence for a rule $r=a \rightarrow b$, where $\#r$ and $\#a$ are the rule
and premise observations in the dataset respectively.

\begin{equation}
\label{eq_confidence}
\mathrm{Confidence}(r) = \frac{\#r}{\#a}
\end{equation}

The Bayesian factor was estimated using Eq.~\ref{eq_b_factor}, where $\#(a
\rightarrow b')$, $\#r$, and $\#(a \rightarrow x)$ denote how many times the
respective rule  was observed in the timeseries, and $A$ is BRM's final set of
atomic rules.
\begin{equation}
\label{eq_b_factor}
	\begin{aligned}
		\frac{P(b|a)}{P(b'|a)} &= \frac{\#r}{\#(a \rightarrow b')} \\
		\#(a \rightarrow b') &= \sum_{\forall a\rightarrow x \in A \wedge x \ne b  } \#(a \rightarrow x)
	\end{aligned}
\end{equation} 

For comparison, we implemented an exhaustive search FRM and applied the same rule filters to create the graphs. Following previous FRM approaches, e.g. Huang et al.~\cite{Huang2017AssociationsSociodemographicCharacteristicsChemical}, we used a minimum-support threshold of 0.1, and assumed a uniform distribution of all available symbols in the timeseries.

\subsection{Evaluation} 

The goal of this experiment is to separate the symbols into their generating processes. To evaluate BRM performance, we grouped extracted rules into the following categories:
(1)~R\textsubscript{r} contained all possible atomic rules that use $V_r$ symbols, ($|\mathrm{R}_\mathrm{r}|$: $|V_r|^2 = 16$) 
(2)~R\textsubscript{c} contained all atomic, time ordered, decompositions of the chain $10 \rightarrow 11 \rightarrow 12$, i.e., $10 \rightarrow 11$, and $11 \rightarrow 12$,  ($|\mathrm{R}_\mathrm{c}|$: $2$)
(3)~R\textsubscript{rc} contained atomic rules of the form $i \rightarrow j$, where $i \in V_r$ and $j \in V_c$,  ($|\mathrm{R}_\mathrm{rc}|$: $|V_r|*|V_c| = 12$)
(4)~R\textsubscript{cr} contained atomic rules of the form $j \rightarrow i$, where $i \in V_r$ and $j \in V_c$, ($|\mathrm{R}_\mathrm{cr}|$: $|V_r|*|V_c| = 12$), and  
(5)~R\textsubscript{cv} contained atomic rules which were created from all possible pairwise combination of $V_c$ symbols and are not in R\textsubscript{c}, ($|\mathrm{R}_\mathrm{cv}|$: $|V_c|^2 - 2 = 7$). 

We chose the aforementioned rule categories based on the insight they provided
into BRM's functionality. The direct process separation occurred when BRM only
extracted rules from the categories R\textsubscript{r}, R\textsubscript{c}, and
R\textsubscript{cv}. As no rules bind symbols from the two generating process.
Rules in R\textsubscript{cv} were not generated by $p_c(t)$. Thus, they are
considered a separate category. Rules in R\textsubscript{cr} and
R\textsubscript{rc} connect the symbols from $p_c(t)$ and $p_r(t)$ and no direct
process separation was possible. R\textsubscript{cr} and R\textsubscript{rc}
were defined as independent categories to evaluate BRM's effect on symbol
association between frequent and rare symbols, when considering their position
in the rule. To quantify the BRM mining performance, we defined the extraction
rate for a rule category R with size $|\mathrm{R}|$ as shown in
Eq.~\ref{eq_extraction_rate}, where $A$ is the mined rule set.
\begin{equation}
\label{eq_extraction_rate}
\text{Extraction rate} = \frac{|\forall r \in A \cap \mathrm{R}|} {|\mathrm{R}|} * 100 [\%]
\end{equation}

We performed a search over BRM's observation window parameter to analyse
performance. The parametric search looked for observation window sizes in the
range between [2, 500] symbols in one symbol increments. One hundred timeseries
were generated for each observation window size. We selected a window size that minimised the chances of extracting rules from categories R\textsubscript{rc}, R\textsubscript{cr}, and R\textsubscript{cv}.
The selected window size also ensured that BRM always mined all rules from the
R\textsubscript{c} and R\textsubscript{r} categories. 

To evaluate rule filtering methods, we generated a new batch of one hundred 
timeseries. We extracted rules using BRM with the observation window
previously found, and applied the rule filtering methods for each timeseries. 
Finally, we averaged the number of times the symbols were correctly separated into generating
processes $p_r(t)$ and $p_c(t)$ respectively. The same evaluation was performed
using FRM.

\subsection{Results} 
We found that FRM could not retrieve the symbols from $V_c$ as their support was
around 0.01. Using the downward-closure property on support, we inferred that
any atomic rule from $p_c(t)$ will also not pass the minimum-support threshold
of $0.1$. In addition, we saw that if extra symbols were sampled from $p_r(t)$,
then, $p_c(t)$ chain's support diluted. In contrast, BRM's rule selection criterion
does not depend on the number of symbols in the timeseries. Therefore, BRM
always retrieved the chain generated by $p_c(t)$.

Fig.~\ref{fig_ow_sweep} illustrates how the rule categories were extracted as
a function of the observation window size. Using an observation window larger
than the expected symbol timing of $p_c(t)$ of five samples, BRM always
extracted all rules from the R\textsubscript{r} and R\textsubscript{c}
categories, which are needed to separate symbols according to their the
generating processes. Rules in R\textsubscript{cv} were extracted when at least
two partial chains were seen by the observation window. The R\textsubscript{cv}
behaviour for different observation window sizes indicates that, to extract the
original symbol chain from $p_c(t)$, BRM needs to use an observation window in
the range $[5,8]$. Rules from R\textsubscript{rc} always fulfil the increasing
belief criterion and thus, R\textsubscript{rc} rules are always extracted. In
contrast, R\textsubscript{cr} will never pass the increasing belief criterion
and therefore, R\textsubscript{cr} rules are never extracted by BRM, which
implies that a rare premise will never associate with a frequent conclusion.
 
\begin{figure}
	\centering
	\includegraphics[width=\linewidth]{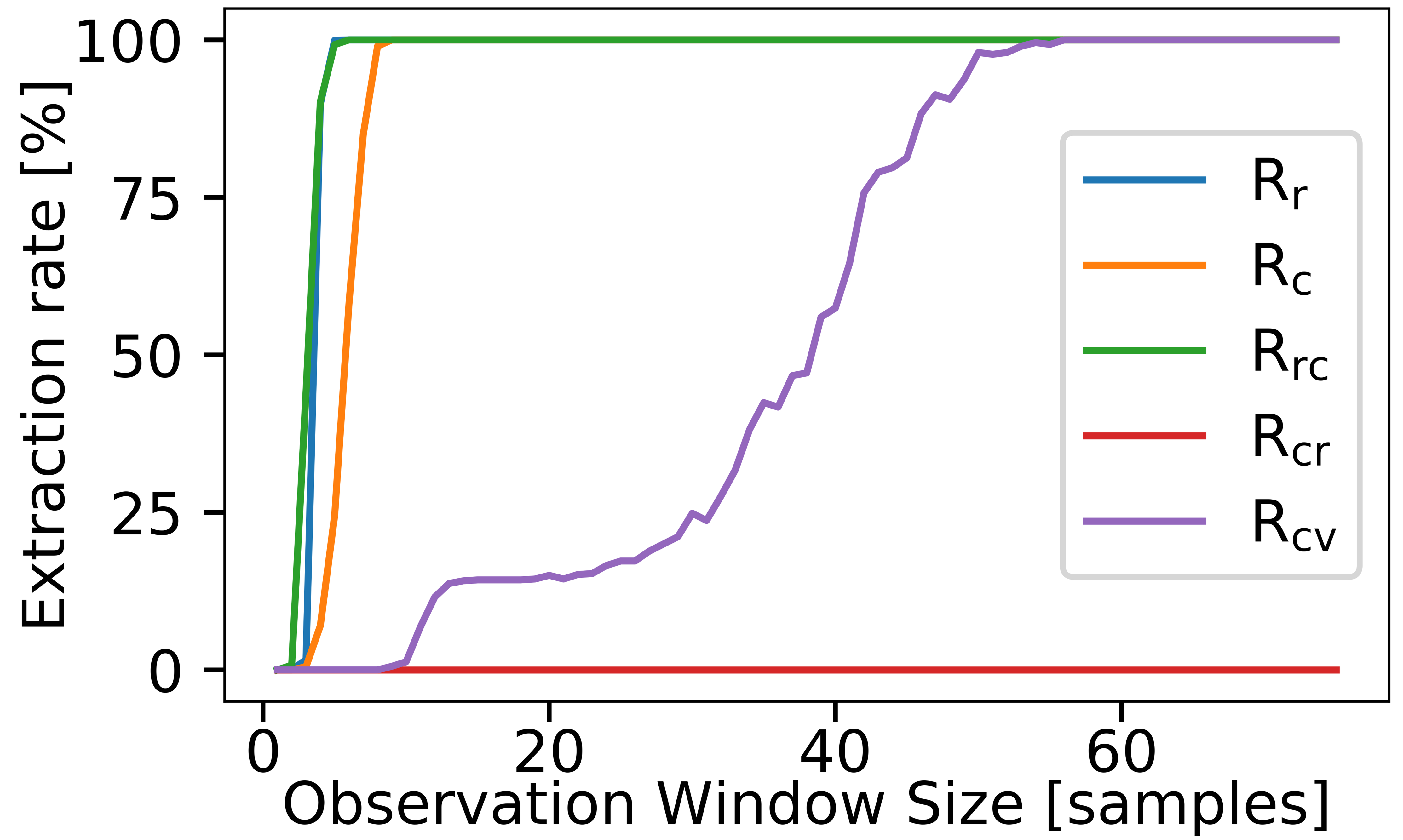}
	\caption{ Search results of observation window sizes for all atomic rule
	categories. BRM extracts all rules from the R\textsubscript{r},
	R\textsubscript{rc}, and R\textsubscript{c} categories when the observation
	window size is greater than the expected value of $p_c(t)$'s symbol timing. 
	As a result of the increasing belief criterion, BRM never extracts rules from the
	R\textsubscript{cr} category. As the observation window grows, BRM extracts rules from R\textsubscript{cv}, which are innocuous for the generating process separation task.
	Therefore, the separation task is achieved by removing R\textsubscript{rc}
	rules from the final set $A$ using a confidence threshold filter.
	} 
	\label{fig_ow_sweep}
\end{figure}

We chose an observation window size of ten symbols to ensure that we will always
extract the rules from R\textsubscript{c}. However, the choice of observation
window meant that we would also retrieve some instances of rules from
R\textsubscript{cv} and R\textsubscript{rc}. Rules in R\textsubscript{cv} had no
impact on process separation and could be ignored. Therefore, the separation
task was achieved by removing the R\textsubscript{rc} rules from the final set
$A$ using the rule filters. We found that the confidence threshold filter always
correctly separated the symbols into generating processes. The Bayesian factor
filter only selected the chain rules and therefore no symbols from $p_r(t)$ are
grouped. Finally, the best confidence per conclusion filter failed to separate
symbols into two generating processes, because $p_c(t)$ symbol 10 is always
associated as conclusion with a $p_r(t)$ symbol.

\section{Census and Chemical Exposure Database Mining}
In this experiment, we illustrate BRM for database mining and analyse the
differences in results retrieved by BRM and FRM.

\subsection{Methodology}
We used the publicly available dataset from Huang et
al.~\cite{Huang2017AssociationsSociodemographicCharacteristicsChemical}. The
dataset comprises US census tract information from the American Community Survey
(ACS) 5-year summary files for the 2010 to 2014 period. Moreover, the dataset
contained chemical exposure data generated from the 2011 National-Scale Air
Toxics Assessment (NATA), specifically, air pollutant exposure concentration.
Huang et al. reported results for two mining scenarios: (1) mining rules using
the socioeconomic variables as premises and chemical exposure variables as
conclusions~(S$\rightarrow$C), and (2) mining rules within the socioeconomic
dataset~(S$\rightarrow$S). Huang et al. used minimum-support of $0.1$ and
lift~$>1$  as thresholds for FRM-based rule selection. We replicated their
findings with an exhaustive search FRM algorithm and compared the extracted
rules to BRM results.

We followed Huang et al.'s categorisation of variables in the dataset.
Socioeconomic scores were divided into deciles, chemical variables into
quartiles, and age group used the ranges: [0--20], (20--30], (30--35], (35--38],
(38--40], (40--50], (50--150]. The poverty score was computed as the percentage
of  population per track that had a ratio between income and poverty level below
1.5. Additionally, deciles seven through ten were combined. Education score was
calculated as the population percentage per track that had demonstrated
education beyond high school level. Deciles eight through ten of the education
score where merged. Finally, the race score was calculated as the percentage of
the non-white population per track.

\subsection{Evaluation}
We chose Huang~et
al.~\cite{Huang2017AssociationsSociodemographicCharacteristicsChemical} as a
reference because they analysed the problem and provided relevance criteria to
interpret the extracted rules in their field.  We reasoned that if BRM extracted
rules that were in Huang~et al.'s selection, then, BRM's increasing belief
approach can extract relevant rules.

In our preliminary test, we noticed that BRM was very selective and only
extracted five rules. Therefore, we used the Selector $s$ to control BRM's
sensitivity~(see Eq.~\ref{eq_selector}). We performed a parametric search on the
Selector $s$, using 200 equidistant samples from the range $[0, 1]$,  to find
the value that allowed BRM to extract the missing rules from Huang~et al.'s
S$\rightarrow$C and S$\rightarrow$S scenarios.

For any new-found rule, we computed the odds-ratio~(OR), with a 95\% confidence
interval~(CI). The CI was calculated using 10000 runs of bootstrapping,
following Huang~et
al.~\cite{Huang2017AssociationsSociodemographicCharacteristicsChemical}'s
evaluation of rule relevance.

\subsection{Results}
Table~\ref{tab_mined_rules_1} lists the mined rules from Huang et
al.~\cite{Huang2017AssociationsSociodemographicCharacteristicsChemical}
S$\rightarrow$C scenario, with their respective BRM's selector value~$s$,
support, and probability of premise given the conclusion~$P(a|b)$. Using BRM's
default behaviour, i.e.,  with $s=1$, we extracted only five rules. To extract
all rules from Huang et al., BRM required a selector lower than $s=0.68$.

\begin{table}[h!]
	\caption{BRM's extraction of socioeconomic and chemical exposure association
	rules~(S$\rightarrow$C). The table shows required selector values to extract
	Huang et
	al.~\cite{Huang2017AssociationsSociodemographicCharacteristicsChemical}'s set
	of rules.} \label{tab_mined_rules_1} \centering
	\begin{tabular}{lccc}
		\toprule
		{Rules} &  \multicolumn{1}{c}{Selector\,$s$}&  Support &    $P(a|b)$ \\
		\midrule
		Race score = 1 $\rightarrow$ Diesel = Q1           &     1.000 &    0.144 &   0.578 \\
		Race score = 1 $\rightarrow$ Butadiene = Q1        &     1.000 &    0.142 &   0.585 \\
		Race score = 1 $\rightarrow$ Toluene = Q1          &     1.000 &    0.138 &   0.566 \\
		Race score = 1 $\rightarrow$ Benzene = Q1          &     1.000 &    0.130 &   0.535 \\
		Race score = 1 $\rightarrow$ Acetaldehyde = Q1     &     1.000 &    0.126 &   0.517 \\
		Age group = 40--50 $\rightarrow$ Diesel = Q1       &     0.995 &    0.124 &   0.499 \\
		Age group = 40--50 $\rightarrow$ Butadiene = Q1    &     0.910 &    0.116 &   0.477 \\
		Age group = 40--50 $\rightarrow$ Toluene = Q1      &     0.874 &    0.114 &   0.467 \\
		Age group = 40--50 $\rightarrow$ Benzene = Q1      &     0.849 &    0.112 &   0.459 \\
		Age group = 40--50 $\rightarrow$ Acetaldehyde = Q1 &     0.769 &    0.106 &   0.436 \\
		Race score = 1 $\rightarrow$ Cyanide = Q3          &     0.754 &    0.110 &   0.431 \\
		Race score = 1 $\rightarrow$ Toluene = Q2          &     0.698 &    0.104 &   0.411 \\
		Race score = 1 $\rightarrow$ Diesel = Q2           &     0.683 &    0.102 &   0.408 \\
		\bottomrule
	\end{tabular}
\end{table}

Tab.~\ref{tab_mined_rules_2} lists mined association rules in the
S$\rightarrow$S scenario. BRM found two out of the six rules reported by
Huang~et al. Additionally, BRM found three rules, highlighted in
Tab.~\ref{tab_mined_rules_2}, that did not pass the minimum-support and lift
criteria from Huang et al. To retrieve all six rules from Huang~et al., BRM
required a selector $s=0.48$ and extracted 41 additional rules.

\begin{table}
	\caption{In the S$\rightarrow$S scenario,  BRM's extracted rules from the
	socioeconomic dataset. Rules in bold did not pass Huang~et 
	al.\cite{Huang2017AssociationsSociodemographicCharacteristicsChemical}
	minimum-support criteria. We list the Selector $s$ values required by BRM to
	extract Huang~et al.'s rules.}
	\label{tab_mined_rules_2}
	\centering
	\begin{tabular}{lccc}	
		\toprule
		{Rules} &  Selector\,$s$ &  Support &  $P(a|b)$ \\
		\midrule
		Race score = 1 $\rightarrow$ Age group = 40--50     &     1.000 &    0.172 &   0.516 \\
		Age group = 40--50 $\rightarrow$ Race score = 1     &     1.000 &    0.172 &   0.528 \\
		Race score = 1 $\rightarrow$ Poverty score = 2      &     0.764 &    0.111 &   0.434 \\
		Poverty score = 2 $\rightarrow$ Race score = 1      &     0.513 &    0.111 &   0.340 \\
		Age group = 40--50 $\rightarrow$ Poverty score = 2  &     0.749 &    0.110 &   0.430 \\
		Poverty score = 2 $\rightarrow$ Age group = 40--50  &     0.487 &    0.110 &   0.329 \\
		\textbf{Poverty score = 1 $\rightarrow$ Education score = 8} &     1.000 &    0.038 &   0.623 \\
		\textbf{Poverty score = 1 $\rightarrow$ Education score = 7} &     1.000 &    0.034 &   0.510 \\
		\textbf{Race score = 1 $\rightarrow$ Age group = 50--150}    &     1.000 &    0.015 &   0.725 \\
		\bottomrule
	\end{tabular}
\end{table}  

The conditional probability $P(a|b)$, i.e., the premise $a$ probability given
that the conclusion $b$ was observed, describes the ratio of conclusion
observations associated to the premise. Thus, a higher $P(a|b)$ reflects the
rule's relevance, because a larger portion of the conclusions can  be explained
by the premise. The rules extracted exclusively by BRM show that the conclusions
are associated between 51\% to 73\% with their respective premises in the
S$\rightarrow$S scenario. Whereas for FRM based rules, the maximum association
is 52\% for the S$\rightarrow$S scenario. Therefore, BRM rules would have a
higher chance of correctly predicting the conclusion when the premise is
observed. In the S$\rightarrow$C scenario, the best association is 59\%  for
both BRM and FRM.

Fig.~\ref{fig_selector_sweep} illustrates the selector's effect on the total
number of extracted rules for both database mining scenarios. The
Selector $s$ had an exponential effect on the exclusion of rules in both scenarios.
The S$\rightarrow$C scenario converged to the Huang et al.'s rules,
while the S$\rightarrow$S scenario always extracted additional rules. 

\begin{figure}
	\centering
	\includegraphics[width=\columnwidth]{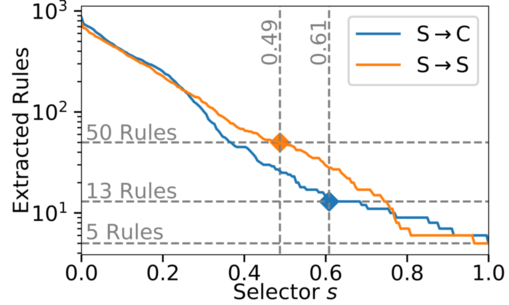}
	\caption{ Selector $s$ value sweep for two rule mining scenarios: the
	socioeconomic and chemical exposure~(S$\rightarrow$C) datasets, and within the
	socioeconomic dataset~(S$\rightarrow$S). Five rules were extracted with BRM's
	default setup, $s=1$. The marker \ding{117} illustrates the minimum Selector
	$s$ required to extract all rules reported by Huang et
	al.~\cite{Huang2017AssociationsSociodemographicCharacteristicsChemical}. In
	S$\rightarrow$C and for values of $s \ge 0.61$, BRM extracts rules that are in
	the FRM rule set. In S$\rightarrow$S, the minimum required value
	of $s= 0.49$ caused BRM to extract 40 additional rules.}
	\label{fig_selector_sweep}
\end{figure}

With $s=1$ there were no conjunctive premise candidates in ether scenarios. In
S$\rightarrow$C, even after setting $s=0.6$, there were no resulting conjunctive
rules that pass the increasing belief criterion. In S$\rightarrow$S, the
disjunctive conclusion rules Poverty score=1$\rightarrow$ Education score = (7
or 8), and Race score = 1$\rightarrow$Age group = 40--150, both have increasing
belief with $P(a|b)$ of 56\% and 53\% respectively. Therefore, rules with
disjunctive conclusion suggest that the categories from Age group and Education
score could be merged.

Tab.~\ref{tab_mined_rules_3} shows the odds' ratio~(OR) and estimated 95\%
confidence interval~(CI) for rules exclusively extracted by BRM in the
S$\rightarrow$S scenario. The OR analysis showed that the new rules had higher
OR than the rules from Huang et al, whose OR ranged from 1.75 to 3.56. Rules
with increasing belief had the largest OR in both mining scenarios. Thus, BRM
rules are more likely to appear in a repeat experiment, and therefore, BRM
rules may be deemed more desirable.
\begin{table}
	\caption{Odds-ratio~(OR) and confidence interval~(CI) for BRM extracted rules
	in the S$\rightarrow$S scenario. The 95\% CI was estimated using 10000
	bootstrapping iterations.}
	\label{tab_mined_rules_3}
	\centering
	\setlength{\tabcolsep}{4pt}
	\begin{tabular}{lrrr}
		\toprule
		\multicolumn{1}{c}{Rule} & \multicolumn{1}{c}{OR} & \multicolumn{2}{c}{Est. 95\% CI} \\
		\midrule
		Race score = 1 $\rightarrow$ Age group = 40--50 & 3.56 & 3.45 & 3.68 \\
		Age group = 40--50 $\rightarrow$ Race score = 1 & 3.56 & 3.45 & 3.68 \\
		Poverty score = 1$\rightarrow$ Education score = 8 & 11.18 & 10.49 & 11.94 \\
		Poverty score = 1$\rightarrow$ Education score = 7 & 6.74 & 6.35 & 7.17 \\
		Race score = 1$\rightarrow$ Age group = 50--150 & 5.68 & 5.10 & 6.39 \\	
		\bottomrule
	\end{tabular}
\end{table}
\section{Rehabilitation Routine Mining} 
We show how BRM can be used to interpret
patient behaviour during stays at a day care rehabilitation centre. We compare
BRM and FRM for classifying patients into physically active and sedentary
groups. However, we observed that FRM represents the cohort's average behaviour
and thus fails to assign patients to groups.

\begin{figure*}[t]
	\centering
	\setlength{\tabcolsep}{1pt}
	\begin{tabular}{|>{\centering\arraybackslash}m{1em} %
			|>{\vspace {4pt}\centering\arraybackslash}m{0.23\textwidth}%
			|>{\vspace {4pt}\centering\arraybackslash}m{0.23\textwidth}%
			|>{\vspace {4pt}\centering\arraybackslash}m{0.23\textwidth}%
			|>{\vspace {4pt}\centering\arraybackslash}m{0.23\textwidth}|}
		\hline
		{} & No Filter & Bayesian Factor  & Confidence $\geq$ 0.5 & Best Confidence 
		\\
		\hline
		\rotatebox[origin=m]{90}{BRM} & 
		\Centerstack{\includegraphics[width=0.23\textwidth]{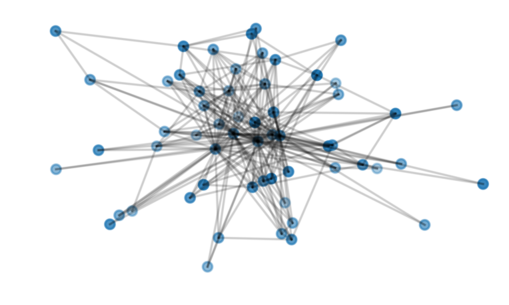}} &
		\Centerstack{\includegraphics[width=0.23\textwidth]{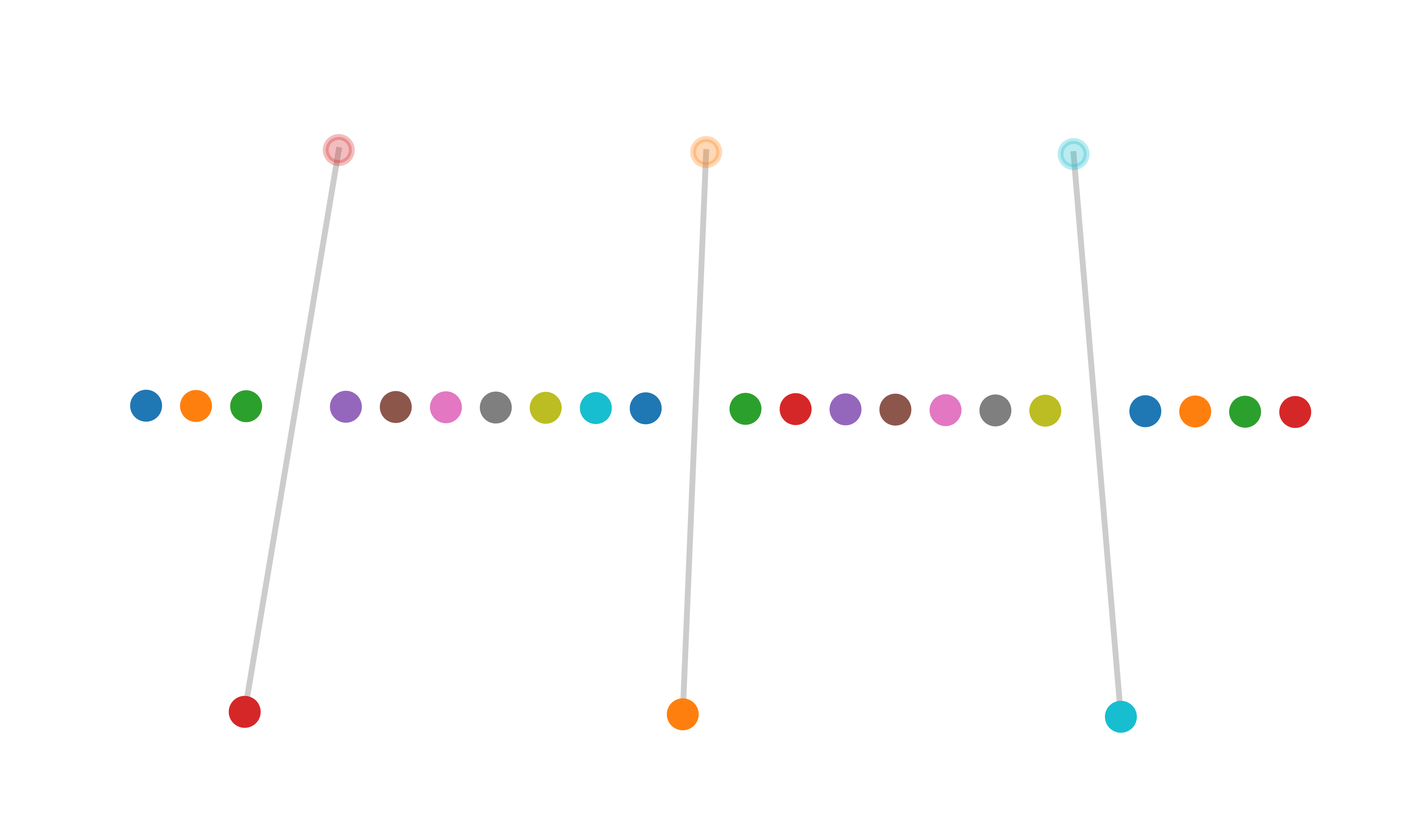}} &
		\Centerstack{\includegraphics[width=0.23\textwidth]{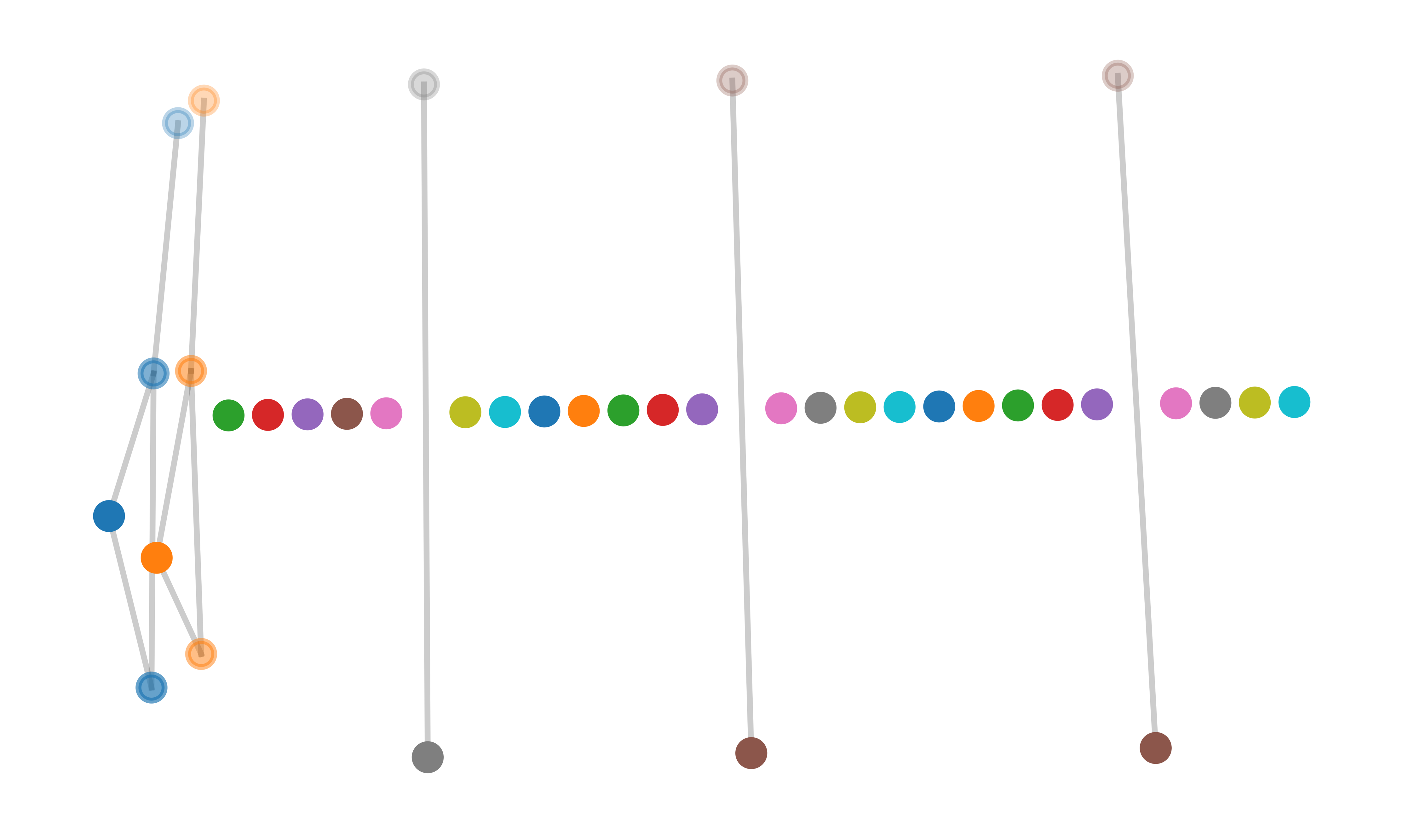}} & 
		\Centerstack{\includegraphics[width=0.23\textwidth]{figures/bre_best-c_graph}} \\
		\hline
		\rotatebox[origin=m]{90}{FRM} &
		\Centerstack{\includegraphics[width=0.23\textwidth]{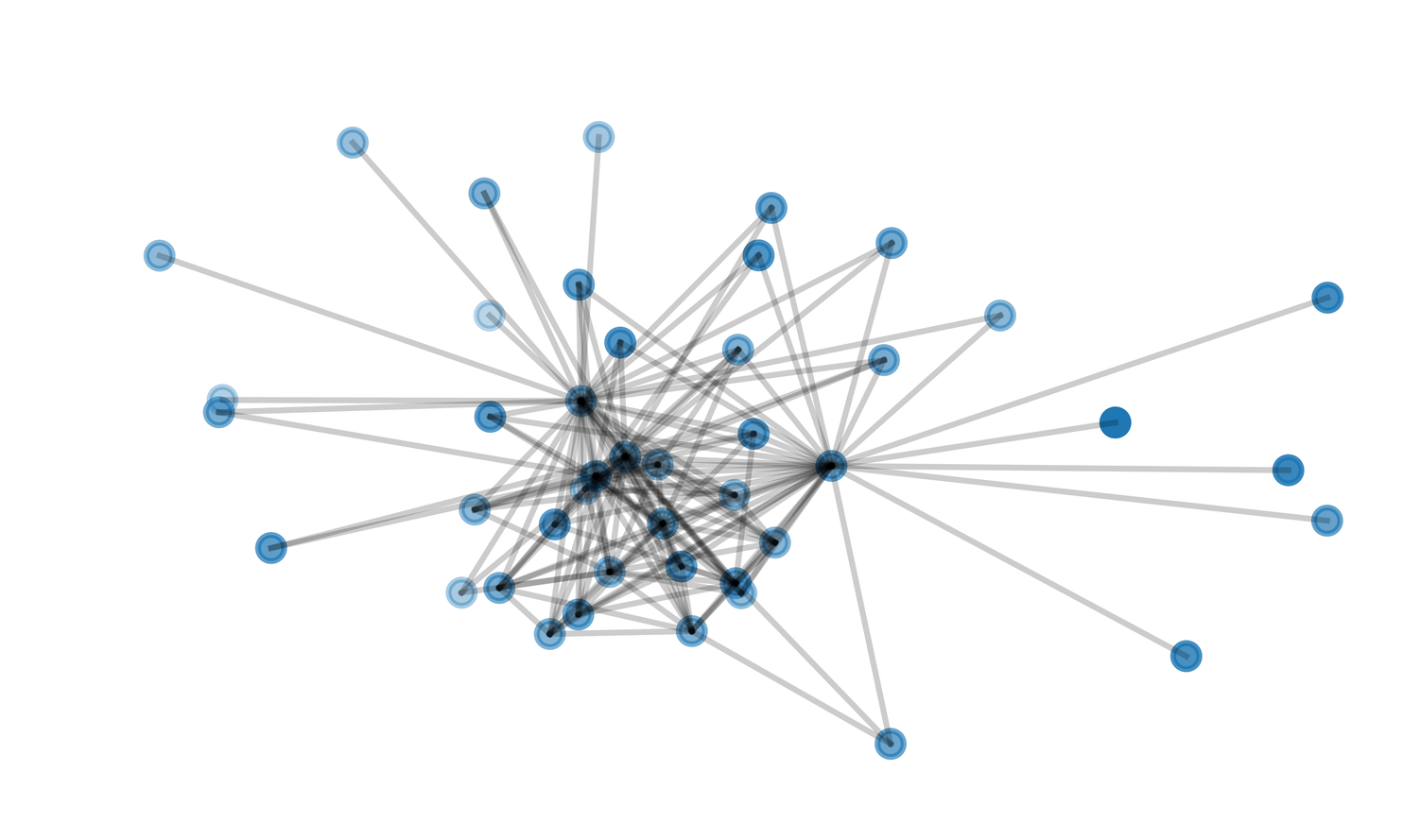}} & %
		\Centerstack{\includegraphics[width=0.23\textwidth]{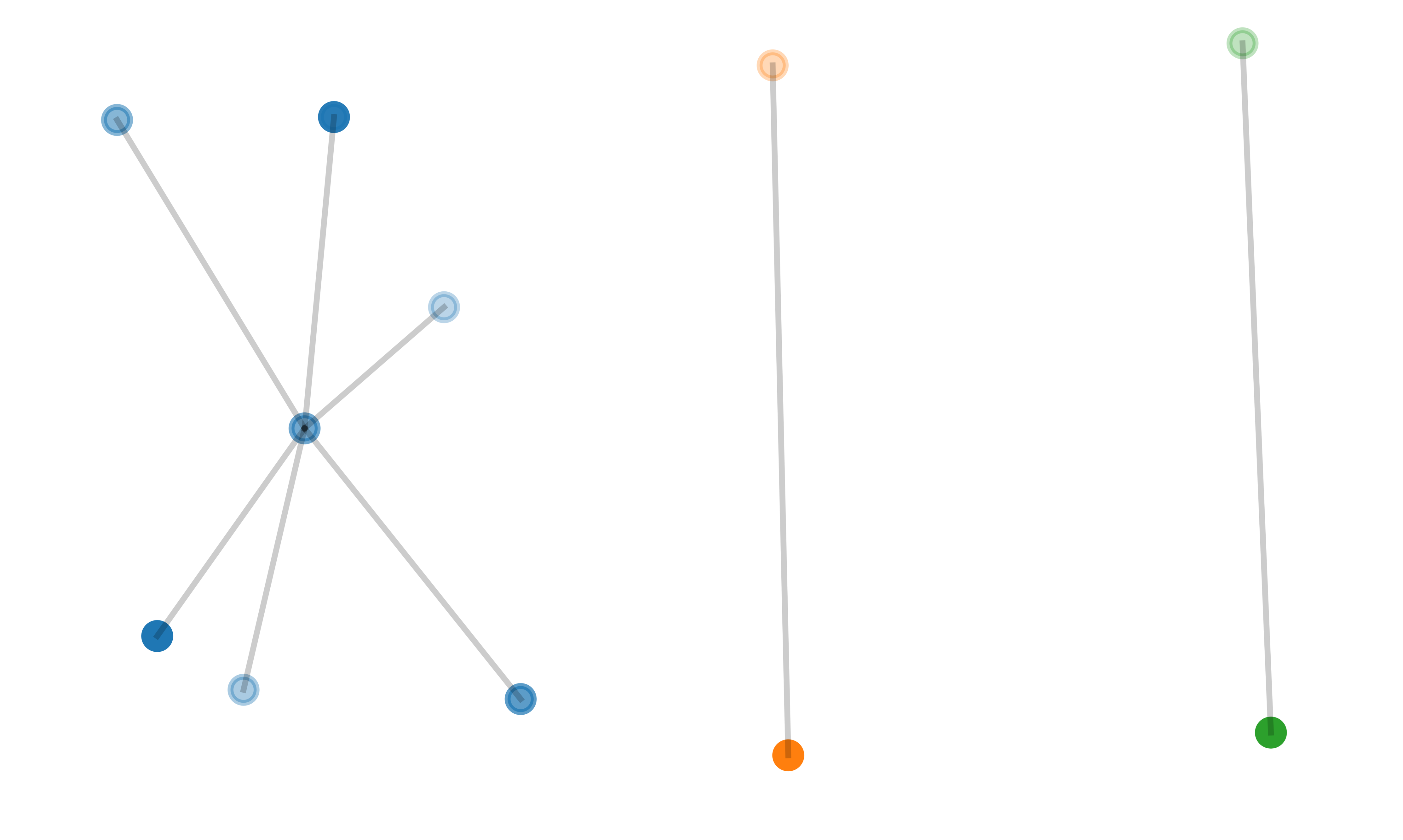}} & %
		\Centerstack{\includegraphics[width=0.23\textwidth]{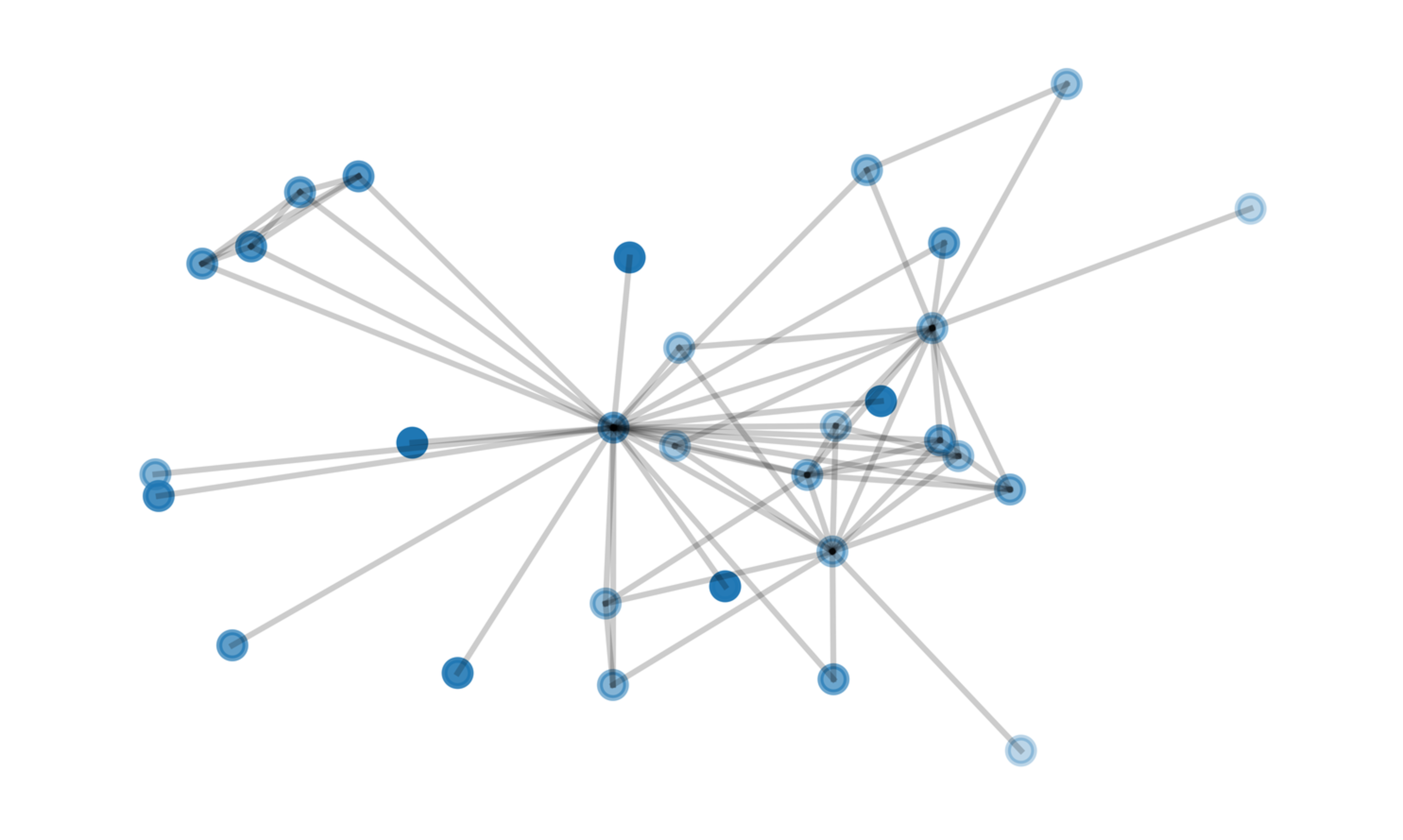}} & %
		\Centerstack{\includegraphics[width=0.23\textwidth]{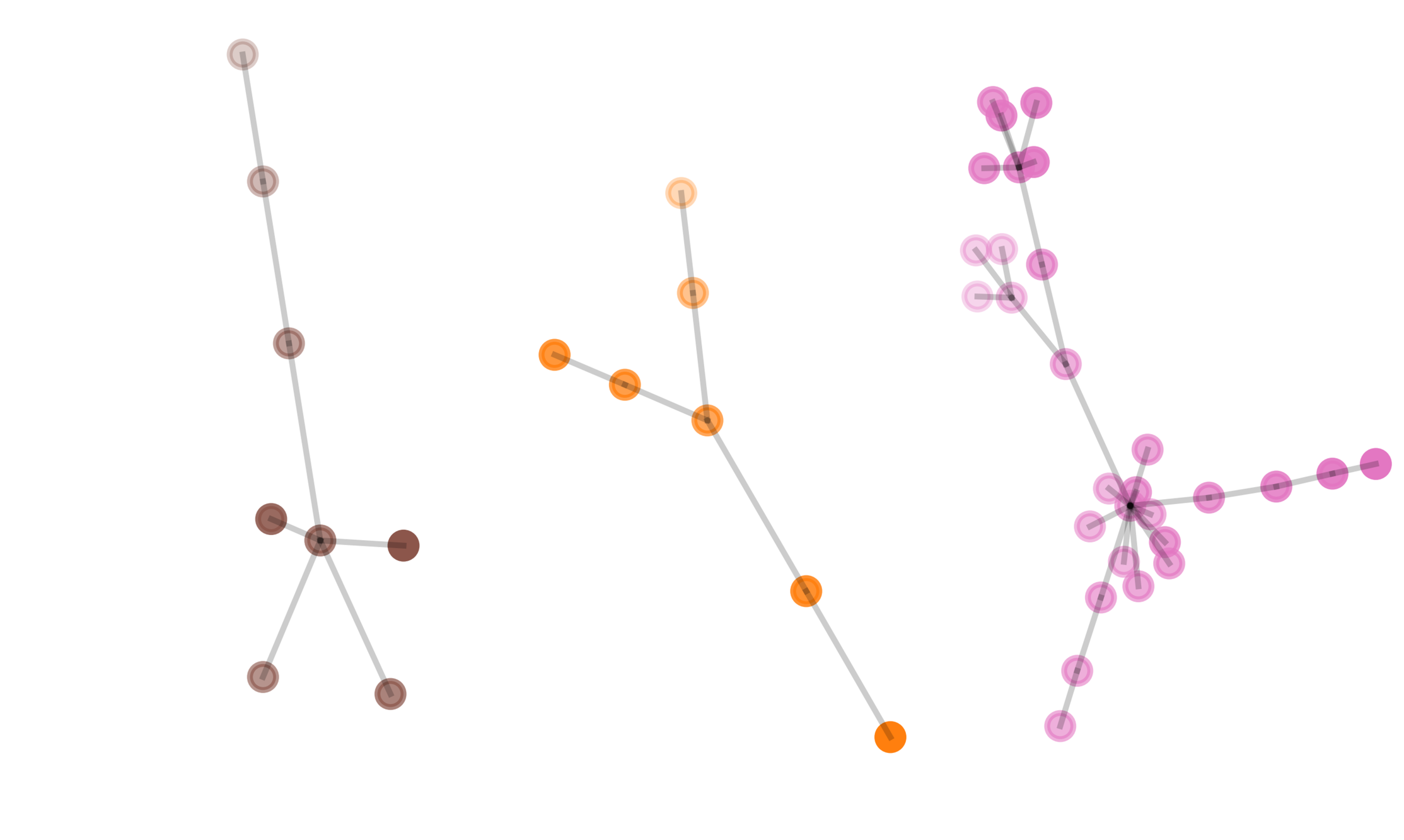}}   \\
		\hline
	\end{tabular}
	\caption{Graphs constructed using rules derived by BRM, FRM, and different rule
		filters. Without a filter, both BRM and FRM produce a single graph and no
		useful routine information is extracted. A balance between activity count per
		graph and number of independent graphs was achieved using the \emph{Best
			confidence} per conclusion filter.} \label{fig_rule_selection}
\end{figure*}

\subsection{Methodology}
For the rehabilitation routine mining experiment, we used activity labels from
the longitudinal stroke rehabilitation study of Derungs et
al.~\cite{Derungs2018LongitudinalWalkingAnalysisHemiparetic}. The study was
approved of the Swiss cantonal Ethics committee of the canton Aargau,
Switzerland (Application number: 2013/009). There were eleven patients in the
study, aged 34 to 75 years, five female, and four used a wheelchair. In
addition, we used data from a patient excluded from the original rehabilitation
study~\cite{Derungs2018LongitudinalWalkingAnalysisHemiparetic}, for a total of
12 patients.

Patients visited the day care centre for approximately three days per week over
three months to participate in individual and group training sessions, socialise
with others, and follow personal activity preferences. Some training sessions
available to patients were physiotherapy, ergotherapy, and training in the gym.
Patients performed activities of daily living, including walking, eating and
drinking, setting the table, writing, and making coffee. Behaviour of each
patient was recorded for up to eight hours on 10 days at the centre by two
observers accompanying patients. In addition, body motion was recorded using
inertial sensors attached to wrists, upper arms, and tight positions. During the
observation time, the examiners annotated patient activities using a customised
annotation tool on a smartphone, resulting in a total of 16226 activity labels.
Therapists scored patients for their ability to execute activities of daily
living independently using the Extended Barthel
Index~(EBI)~\cite{Prosiegel1996ErweiterteBarthelIndexEBIeineNeue}. The EBI
consists of 16 categories. Each category receives a score within the range zero
to four, where zero means that the patient requires full support, and four means
the patient can live independently.

We started by mining atomic rules from patient's activity labels. Subsequently,
we filtered the rules and created graphs. Each resulting independent subgraph
was considered a routine, which was analysed by a study observer to assign a
routine label. We observed that by removing an active patient from the mining
dataset, the resulting routines would change with respect to the routines
extracted using all patients. Removing a sedentary patient had no effect in the
resulting routines. Therefore, we classified patients into physically active or
sedentary groups by removing the patient's data from the mining dataset. We
refer to the patient classification method as patient exclusion process~(PEP).

\subsection{Evaluation} For BRM, we used the default Selector $s=1$ value and an
observation window of 20 minutes. For FRM, we used an exhaustive search
algorithm with a minimum-support threshold of 0.0038. The minimum-support
threshold value was chosen to obtain the same number of rules as for BRM. Both
methods were set to extract atomic rules only, used label names as symbols,
and their start times as symbol timestamp.

We evaluated each mining method by submitting their extracted rules to the same
post-processing two stage procedure: (1) rule filtering, and (2) graph-based 
routine classification. In the rule filtering stage, we evaluated three filtering 
methods: Bayesian factor 
(Eq.~\ref{eq_b_factor}), confidence threshold of 0.5 and best confidence per
conclusion.  With the retained rules, we built a graph and extracted routines as
independent subgraphs. For each mining method, we chose a filter for the
post-processing procedure that provided a balance between activity label count
per graph and the number of independent graphs. Subsequently, we compared the
mining methods with their respective post-processing procedures to classify
patients into physically active, and sedentary groups using PEP.

Based on the type of the majority of activities in the routine, a study observer
named BRM routines as socialising, eating, using the phone, intense and balance
training. Whereas, FRM routines were named mobility, eating, and cognitive-motor
training. We observed that FRM routines lacked emphasis on activities related to
socialising.

\subsection{Results}

The \emph{No Filter} column in Fig.~\ref{fig_rule_selection} shows the
resulting graphs based on atomic rules extracted by BRM and FRM methods.
Activities in both graphs are hyperconnected, i.e., multiple edges connect
activities. However, for FRM, there are nodes with one edge. FRM rules do not
describe the flow from one activity to another, but rather, the associations of
repeating events, e.g., repetitions of an exercise. In contrast, BRM looks for
successive activities, and the respective low count of activity transitions vs
exercise repetitions does not affect the rule selection. For both mining
algorithms, the hyperconnected graph yielded no useful routine information.

We evaluated the effect of rule filtering on graph creation.  With a Bayesian
factor $\geq 1$, BRM mined rules focus mostly on self-referencing activities,
e.g., $walking \rightarrow walking$, resulting in single activity
subgraphs. In contrast, the Bayesian factor filter removed most of the FRM
rules. The resulting subgraphs had too few activities to consider them as
routines. For FRM rules, the confidence filter reduced the graph size, but it
was unable to create independent subgraphs. However, with BRM rules, the
confidence filter selected many self-referencing rules. The confidence filter
was able to create two more subgraphs than the Bayesian factor, 
containing four activities each. However, there were too many single activity
subgraphs to consider the split as routines. We obtained the best balance
between the number of subgraphs and activities per graphs using the best
confidence per conclusion filter. After filtering, BRM-mined rules yielded five
routines, whereas FRM-mined rules yielded only three. Fig.~\ref{fig_rule_selection}
illustrates the resulting subgraphs for each mining algorithm and rule filtering
method.

\begin{figure*}
	\centering
	\begin{tabular}{|>{\centering\arraybackslash}m{1em} %
			|>{\vspace {4pt}\centering\arraybackslash}m{0.23\textwidth}%
			|>{\vspace {4pt}\centering\arraybackslash}m{0.23\textwidth}%
			|>{\vspace {4pt}\centering\arraybackslash}m{0.23\textwidth}%
			|>{\vspace {4pt}\centering\arraybackslash}m{0.23\textwidth}|}
		\multicolumn{5}{c}{\includegraphics[width=\textwidth]{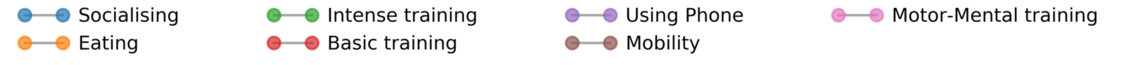}}\\
		\hline
		{} & All Patients & Without Patient ID 9  & Without Patient ID 10 & Without Patient ID 12 \\
		\hline
		\rotatebox[origin=m]{90}{BRM} & 
		\Centerstack{\includegraphics[width=0.23\textwidth]{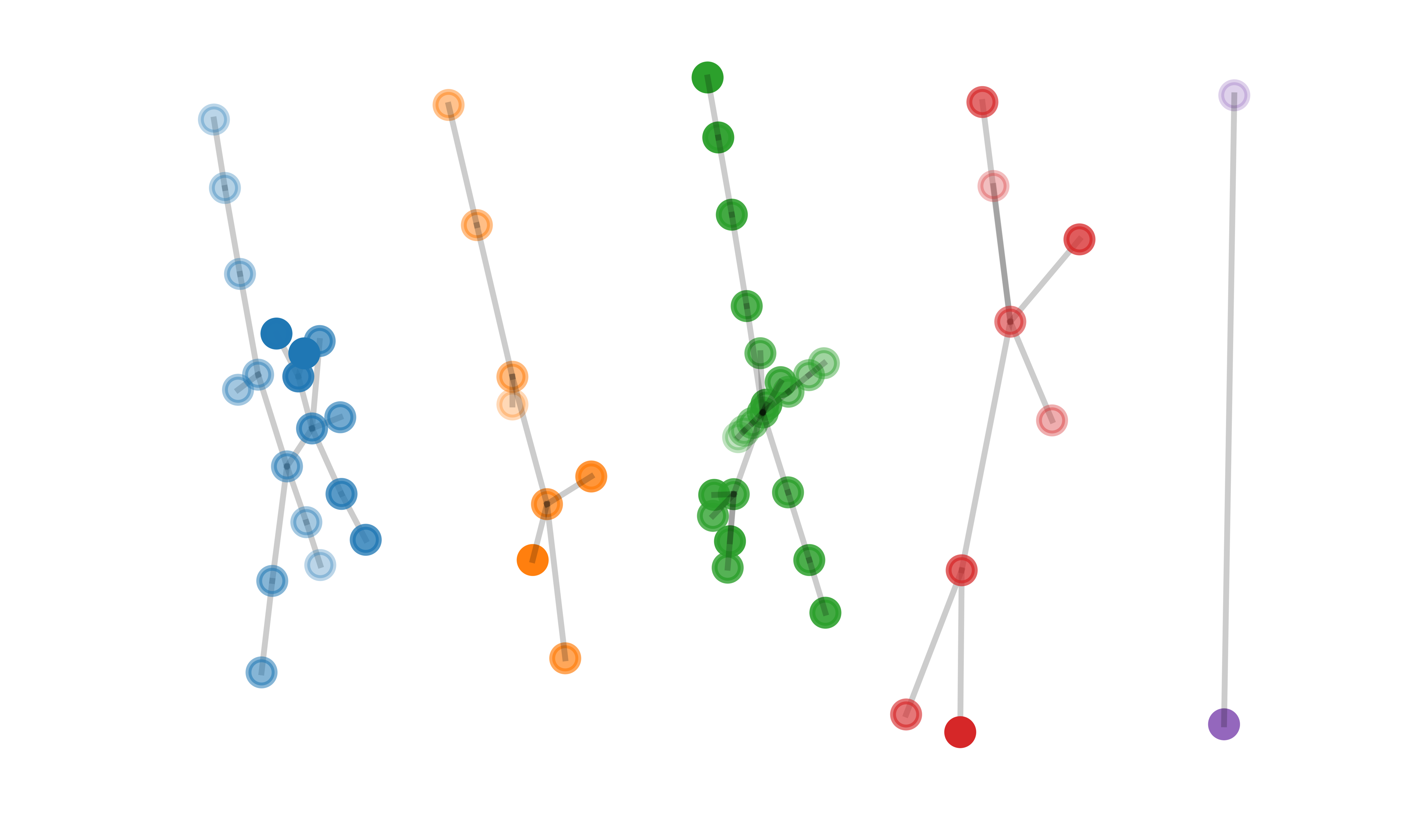}} &
		\Centerstack{\includegraphics[width=0.23\textwidth]{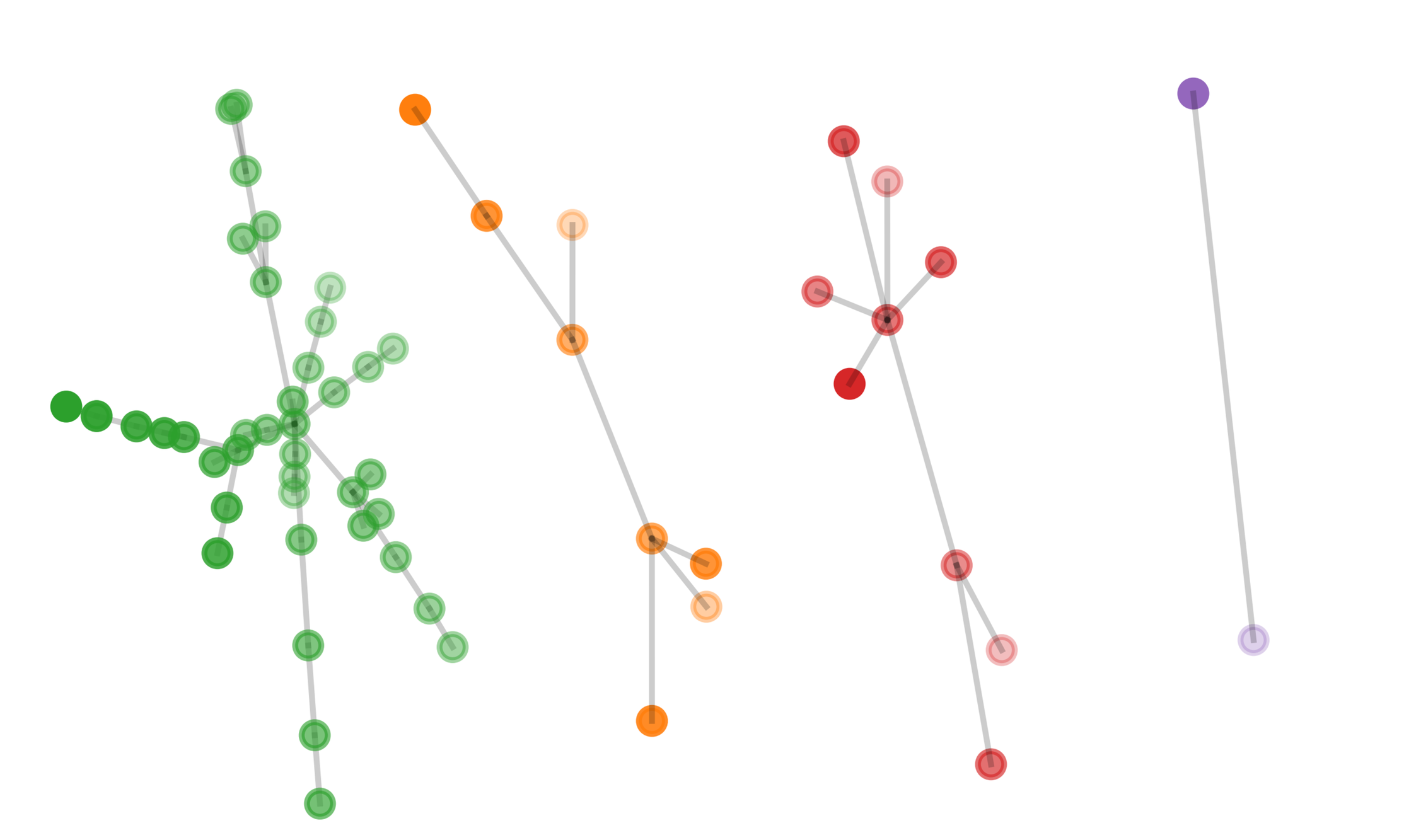}} &
		\Centerstack{\includegraphics[width=0.23\textwidth]{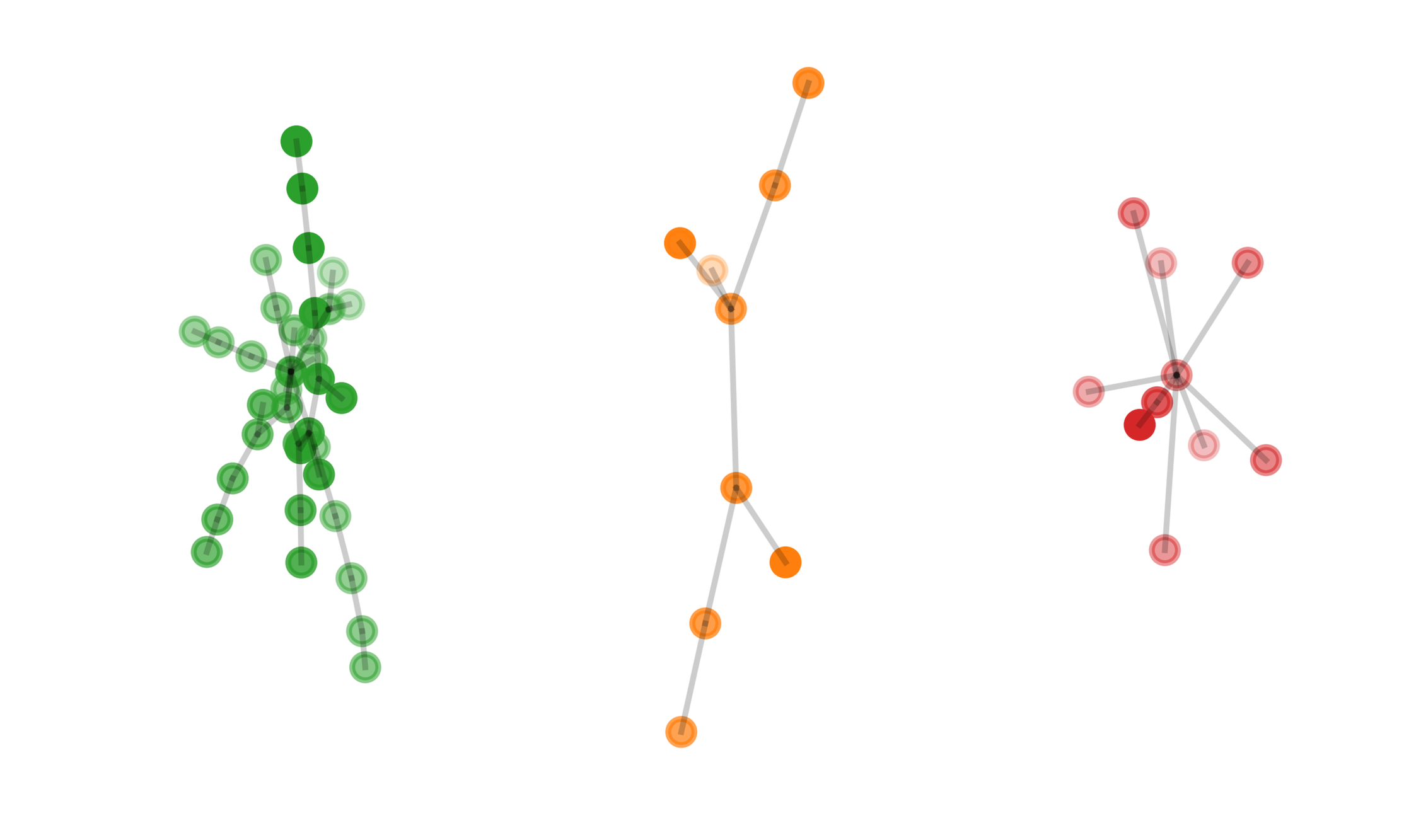}} & 
		\Centerstack{\includegraphics[width=0.23\textwidth]{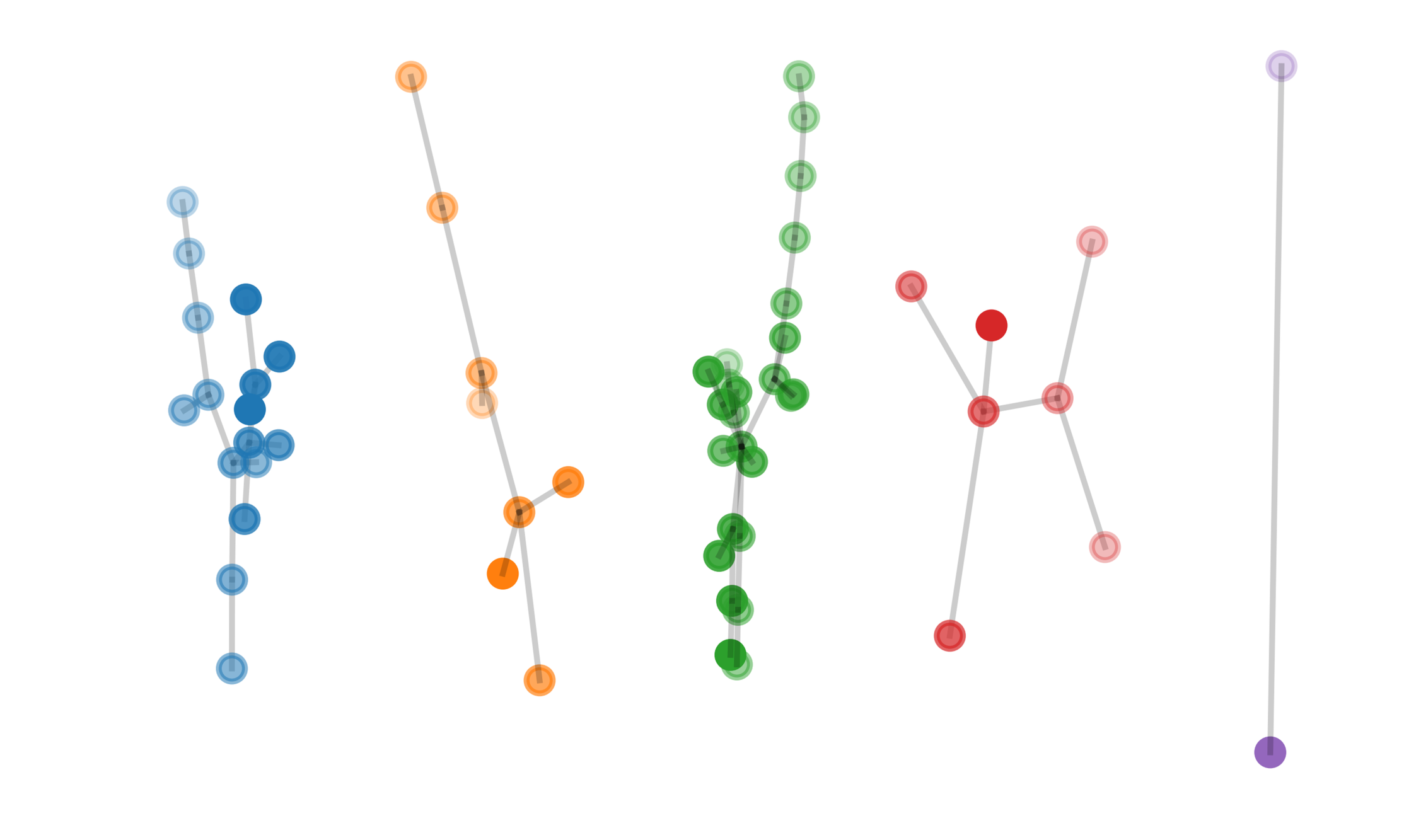}} \\
		\hline
		\rotatebox[origin=m]{90}{FRM} &
		\Centerstack{\includegraphics[width=0.23\textwidth]{figures/frm_best-c_graph}} & %
		\Centerstack{\includegraphics[width=0.23\textwidth]{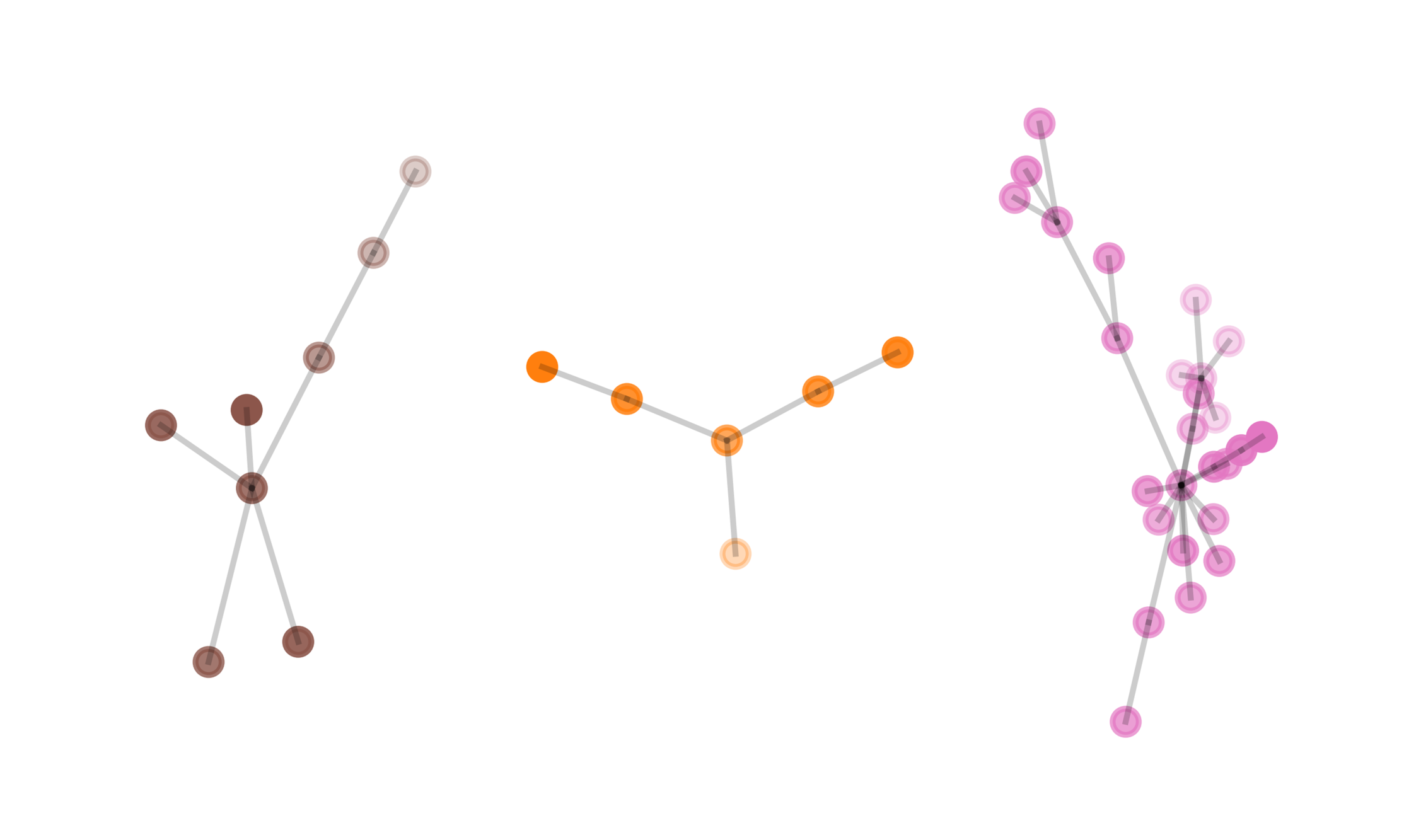}} &
		\Centerstack{\includegraphics[width=0.23\textwidth]{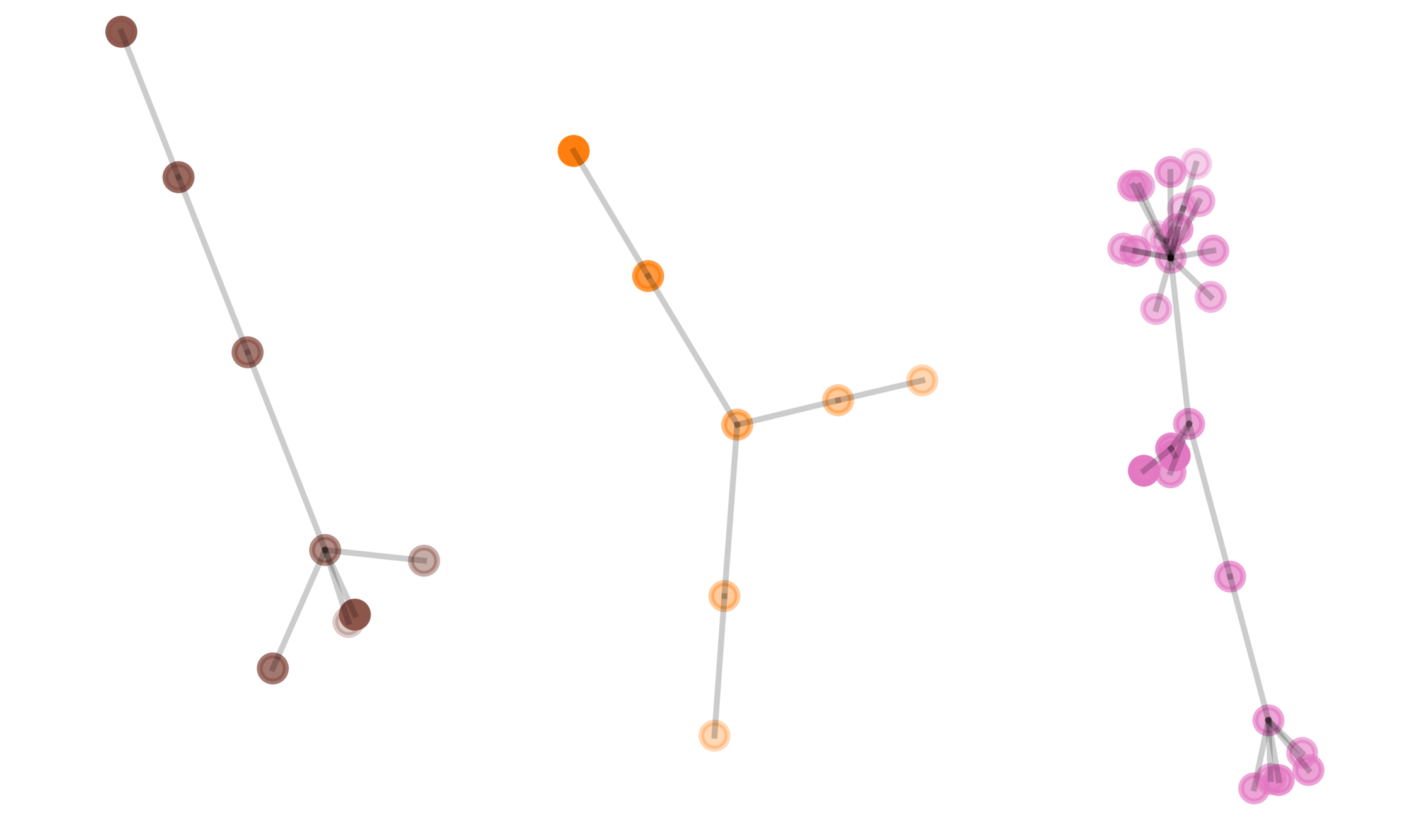}} & 
		\Centerstack{\includegraphics[width=0.23\textwidth]{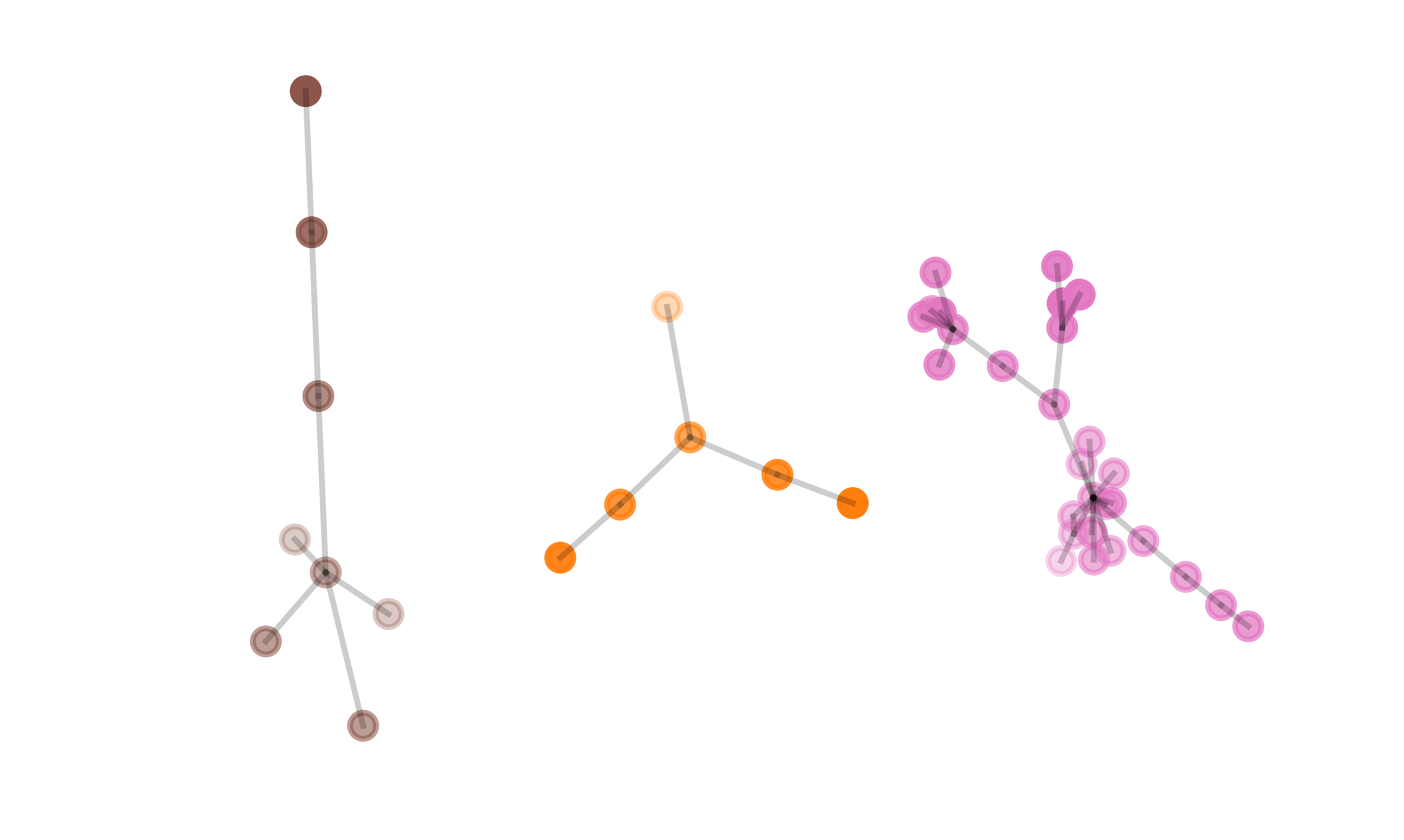}} \\
		\hline
		
	\end{tabular}
	\caption{Routine graphs when patients are removed from the dataset, i.e. PEP method. BRM's
		focus on rare rules highlights the importance of each individual patient's contribution to the graph representation. With PEP analysis, we were able to separate patients into active and sedentary behaviour groups. In contrast, the number of FRM-mined
		routines did not change under PEP analysis and no further insight was derived. }
	\label{fig_LOPO}
\end{figure*}

Fig.~\ref{fig_LOPO} illustrates an example of the changes in routine graphs
when removing patients with active and sedentary behaviour. In the Appendix, we
have included the complete set of graphs for the PEP analysis. For both mining
algorithms, when removing one patient, the routine's activity composition
varied, but the assignment of routine labels by the study observer did not vary.
In the PEP analysis, using the best confidence per conclusion filter, BRM mined
routines that grouped patients into physically active and sedentary groups. We
observed that the removal of patient ID 10 made the routine \emph{using a phone}
disappear. Apparently, the patient frequently used the phone. We found that the
physically active group contained patient IDs 2,4,6,9, and 10. The physically
active group refers to patients following their rehabilitation schedule closely.
No relation to activity intensity or EBI score was found. The active patient
group consisted of one wheelchair rider, patients with different EBI starting
points, and some patients, where the EBI score did not change. When a patient
from the active group was removed, the extracted routine number reduced to four
and \emph{socialising} was always missing. The result appears counter-intuitive
as the sedentary group has been likely involved in socialising, but could be
explained by the chosen 20 minute observation window, which causes BRM to focus
on transitions between activities of at most 20 minute duration. Sedentary
patients would perform individual activities for periods longer than 20 minutes.
Therefore, their socialising activities would not be associated into rules.

For comparison, we performed PEP analysis for FRM-extracted routines using
rules with the best confidence per conclusion. We found that the removal of any
one patient did not affect the extracted routines. Therefore, FRM provided no
further insight, and classifying patient's into physical activity groups was not
possible.

\section{Conclusions} 
BRM's parameters can be adjusted using process knowledge.
Selector $s$ is a proportion of conclusions to consider, which does not dilute
if the dataset size increases, and therefore can be calculated as the ratio of the
expected rare and frequent symbol frequencies. The observation window size is determined using the
transition times between events of interest. The prior $p$ does not play a role
in rule selection. In contrast, most FRM-based methods use parameters that are
difficult to define in terms of the processes or dataset properties.

The prior $p$ determines the rule's final belief value. Therefore, comparisons
of absolute belief values across datasets requires attention to the prior values
used with each dataset. With this consideration,  we do not propose belief as a
stand-alone rule interest metric.

The first limitation of BRM is the sporadic association of frequent premises
with infrequent conclusions into irrelevant rules. By using a filtering
stage, we removed the irrelevant rules. However, we found that the filter of choice
depends on the experiment. For example, in our proof-of-concept experiment, we
used a confidence threshold to separate symbols into generating processes, and
for the rehabilitation routine mining experiment, we used the best rule
confidence per conclusion. We theorize that, in general, if the distributions of
symbols in the dataset is not uniform and the mining task is to separate per underlying process, then the confidence threshold filter may be more useful to remove irrelevant rules, whereas  best confidence per conclusion filter is better suited to extract transitions between symbols.

We believe that BRM rules provide meaningful insight, in particular on rarely,
but consistently occurring relations, which may provide application experts with
new hypotheses to investigate.  For example, as seen in the S$\rightarrow$S
scenario of the database mining experiment, BRM provided additional rules over FRM that hint to a
predominantly white ageing population (Race score = 1 $\rightarrow$ Age group
40-150), and to a correlation between low poverty score and high education
levels (Poverty score = 1 $\rightarrow$ Education score = 7 or 8).

In this paper, we presented an exhaustive search implementation of a BRM algorithm.
Nevertheless, he optimisation strategies used for FRM could be adpated to BRM by replacing the minimum-support with increasing belief and the downward-closure property with 
Prop.~\ref{prop_inc_belief} and Prop.~\ref{prop_premise_conjunction_rules}.

BRM is not a replacement for FRM. The application should drive the choice of
algorithm branch.  If the application task is to extract symbol relationships from a
single process, then, FRM is suitable. If instead, the task is to separate
multiple process in the dataset, then BRM is suitable and may provide more
insight over FRM. We summarise the difference between both branches as follows:
FRM focuses on extracting rules that describe the commonalities between
generating processes. In contrast, BRM looks for rules that describe each
process.

The rehabilitation routine mining experiment illustrated the difference between
both association rule mining branches. Routines mined with FRM did not change during PEP analysis. FRM
mined routines that were common to the entire population. With BRM, the routines
changed during PEP analysis,  grouping patients into active and sedentary
groups. Hence, FRM answers the question: what routines are commonly done by all
patients?, and BRM answers the question: what types of patients are there?

In this paper, we defined increasing belief using the  Bayes theorem
recursively. We introduced the BRM branch to the association rule mining taxonomy, where
rules are extracted using increasing belief, and presented an implementation of an exhaustive search BRM algorithm. We showed that BRM does not suffer from support dilution, and that BRM is capable of extracting
rare rules from a dataset. The proof-of-concept and socioeconomic experiments
illustrated how BRM extracted frequent and rare rules. In the rehabilitation
routine mining experiment, we used BRM to mine rules, create routines, and
group patients into active and sedentary groups. Only BRM rules provided
patient grouping information.

\section*{Acknowledgments}
We are thankful to the study participants and the clinicians at the Reha Rheinfelden. The authors thank Dr. Julia Seiter for supporting the study data recordings.

\bibliographystyle{IEEEtran}
\bibliography{bibliography}
\vfill\break

\begin{IEEEbiography}[{\includegraphics[width=1in,height=1.25in,clip,keepaspectratio]{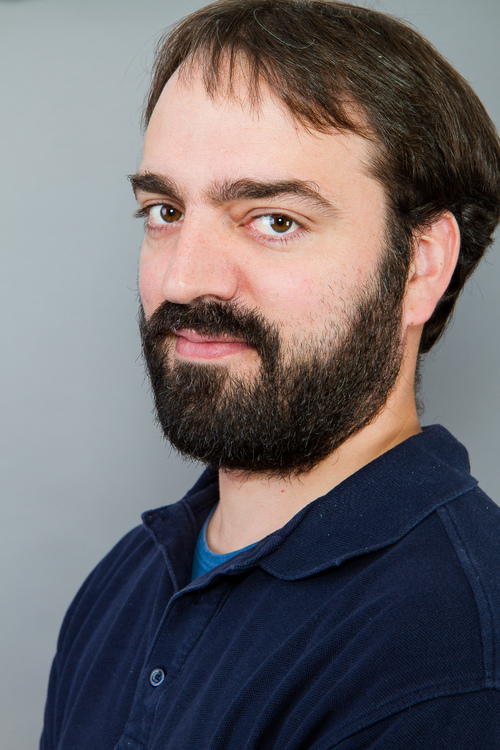}}]{{Luis Ignacio} {Lopera Gonz\'{a}lez} }
received the MSc. degree in Electronic engineering and computer science from Los
Andes University, Bogot\'a, Colombia, in 2005 and the Dr. rer. nat. degree from
the University of Passau, Passau, Germany in 2018.  He is currently working at
the Chair of Digital Health at the Friedrich-Alexander University
Erlangen-N\"urenberg, Erlangen, Germany. His current research interest include
rule mining algorithm development, and self commissioning systems in medical
applications.
\end{IEEEbiography}
%
\begin{IEEEbiography}[{\includegraphics[width=1in,height=1.25in,clip,keepaspectratio]{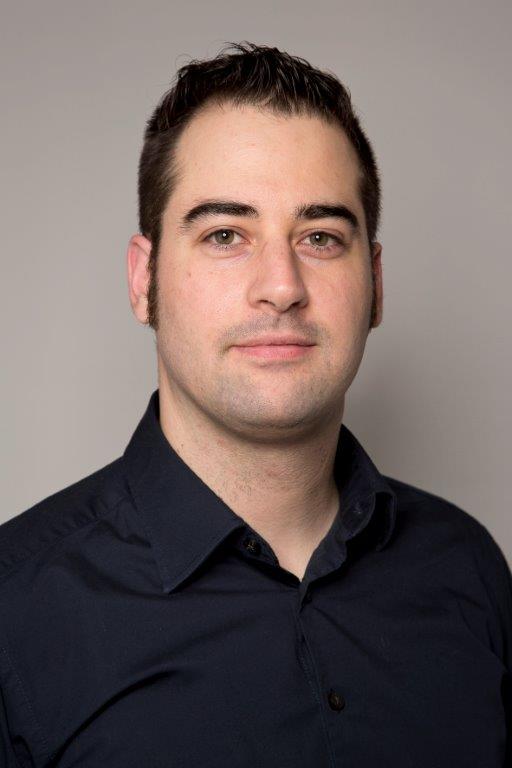}}]{Adrian Derungs}
received the MSc. degree in Biomedical Engineering from the University of Bern,
Bern, Switzerland in 2012. He is currently working toward the PhD degree with
the Chair of Digital Health at the Friedrich-Alexander University
Erlangen-N{\"u}rnberg, Erlangen, Germany. His current research interest include,
wearable sensor technology and computing, machine learning, biomechanics and
biomarker personalisation, and the digitalisation of healthcare.
\end{IEEEbiography}
%
%
\begin{IEEEbiography}[{\includegraphics[width=1in,height=1.25in,clip,keepaspectratio]{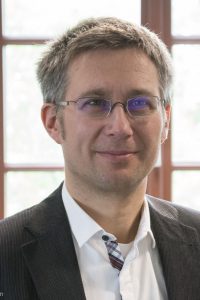}}]{Oliver Amft}
  received the Dipl.-Ing. (M.Sc.) from Chemnitz Technical University in 1999 and
the Dr. sc. ETH (Ph.D.) from ETH Zurich in 2008, both in Electrical
Engineering and Information Technology. In 2011, he obtained the university
teaching qualification from TU Eindhoven. Until 2004, he was a R\&D project
manager with ABB, Inc. Oliver Amft was an assistant professor at TU Eindhoven
between 2009 and 2013, tenured since 2011. In 2014, he was appointed full
professor and established the Chair of Sensor Technology at University of
Passau, Faculty of Computer Science and Mathematics. Since 2017, Oliver Amft
is a full professor and director of the Chair of Digital Health at
Friedrich-Alexander University Erlangen-N\"urnberg.
\end{IEEEbiography}
\vfill

\newpage
\onecolumn

\appendix

\begin{figure}[H]
{	\centering
	\setlength{\tabcolsep}{1pt}
	\begin{tabular}{|>{\centering\arraybackslash}m{0.19\textwidth} %
			|>{\vspace {4pt}\centering\arraybackslash}m{0.4\textwidth}%
			|>{\vspace {4pt}\centering\arraybackslash}m{0.4\textwidth}|%
		}
		\multicolumn{3}{c}{\includegraphics[width=\textwidth]{figures/legend_frm.png}}\\
		\hline
		{} & BRM  &  FRM \\
		\hline
		All Patients & 
		\Centerstack{\includegraphics[width=0.4\textwidth]{figures/bre_best-c_graph}} &
		\Centerstack{\includegraphics[width=0.4\textwidth]{figures/frm_best-c_graph}} \\
		\hline
		Without user 01 &
		\Centerstack{\includegraphics[width=0.4\textwidth]{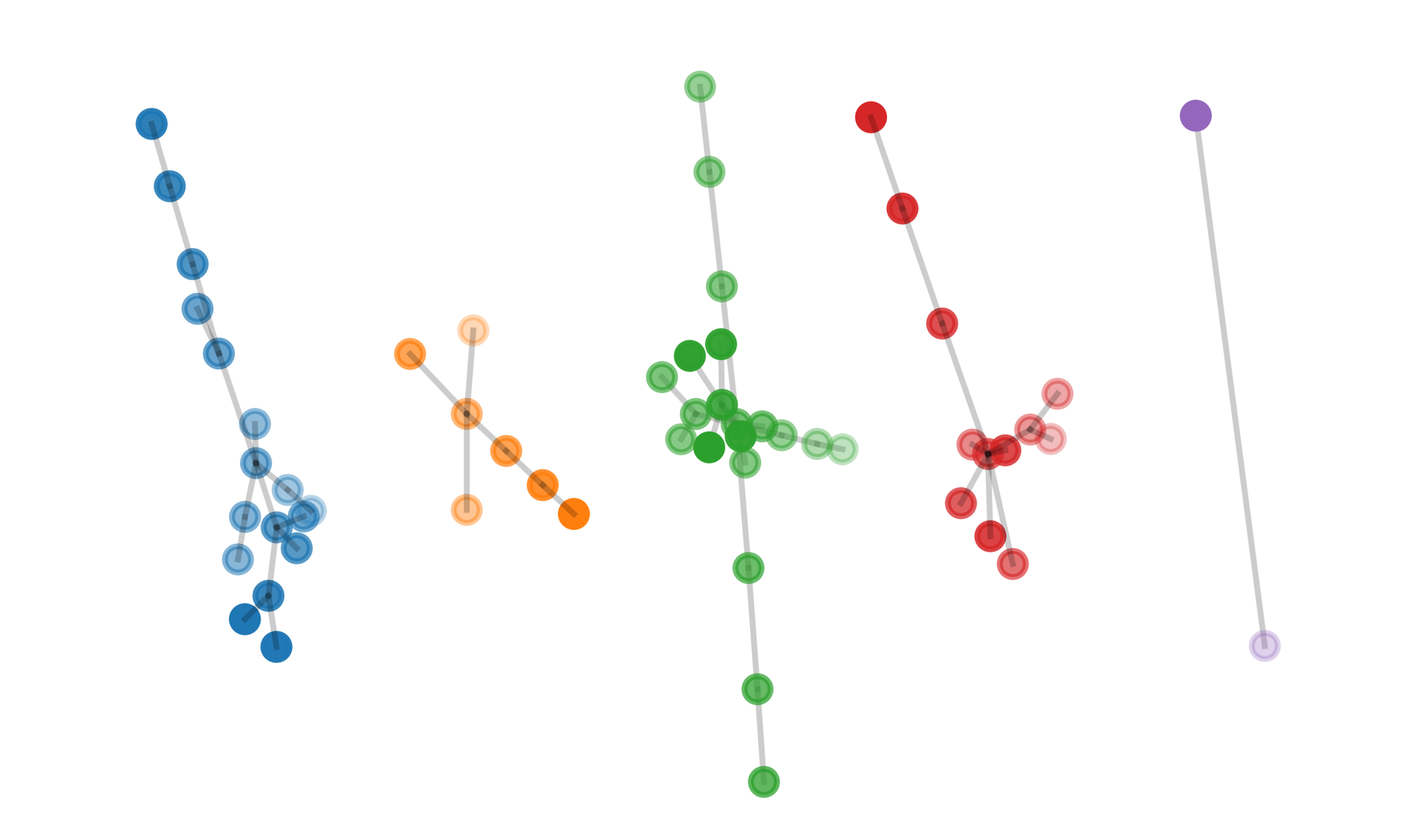}} &
		\Centerstack{\includegraphics[width=0.4\textwidth]{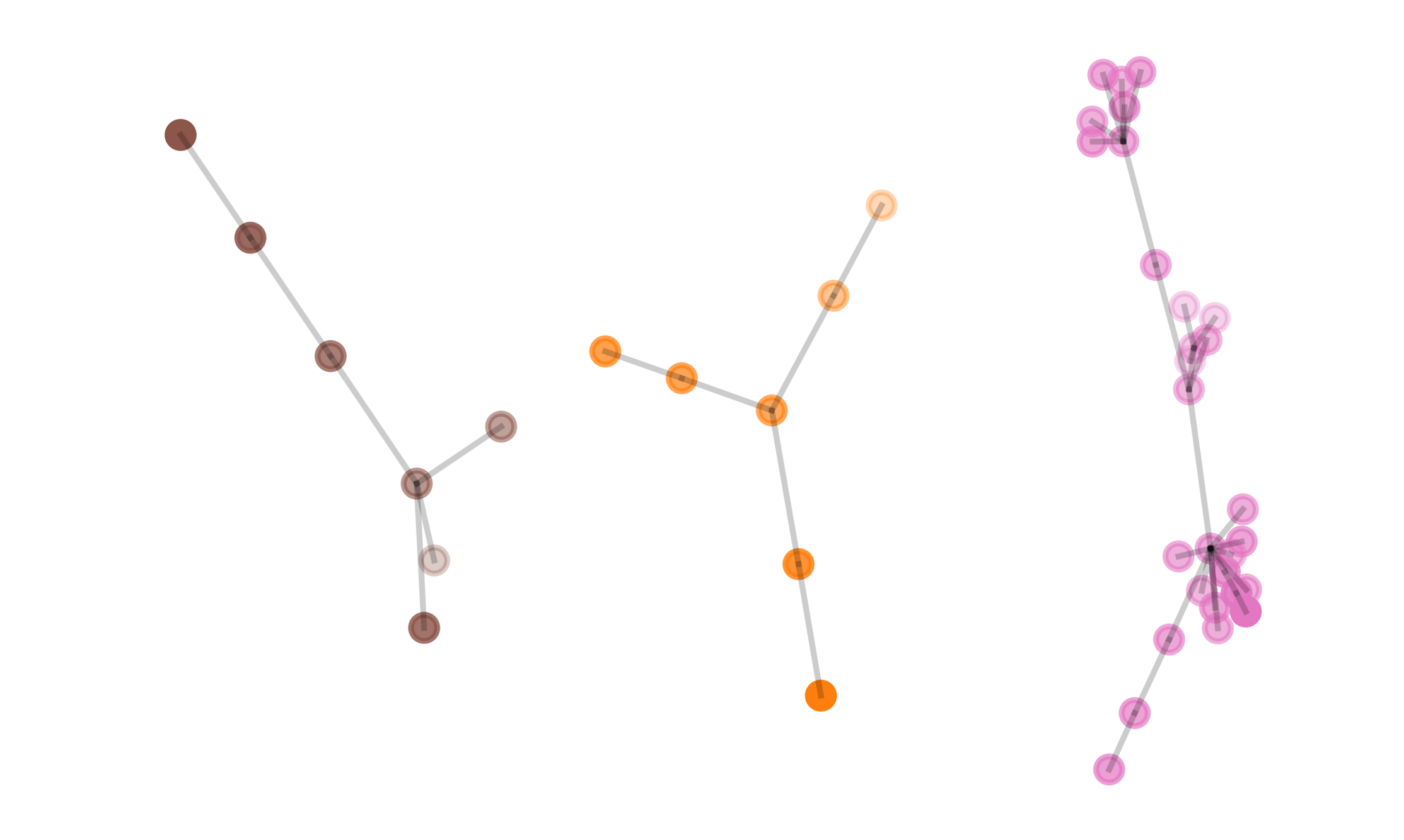}} \\
		\hline
		Without user 02 &
		\Centerstack{\includegraphics[width=0.4\textwidth]{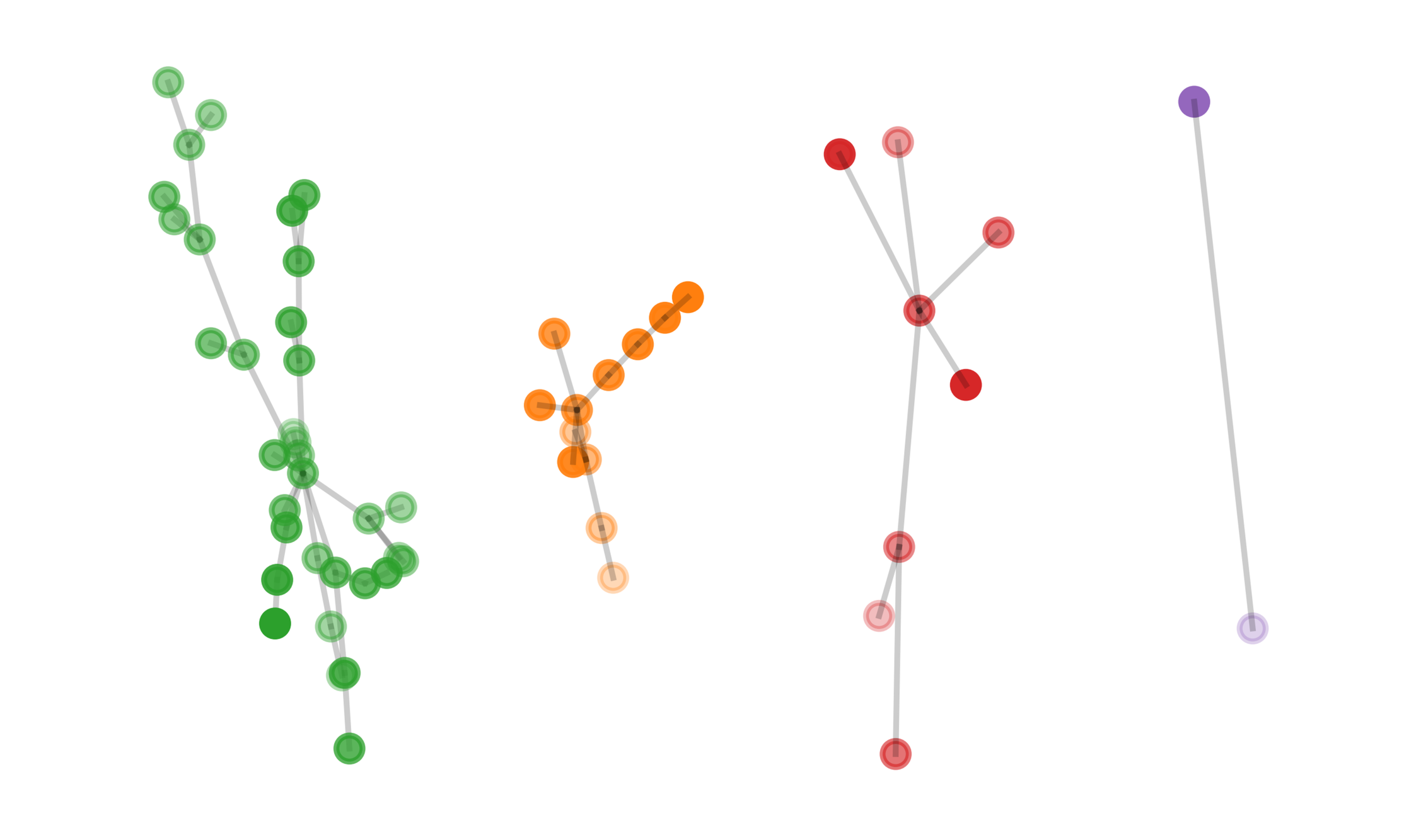}} &
		\Centerstack{\includegraphics[width=0.4\textwidth]{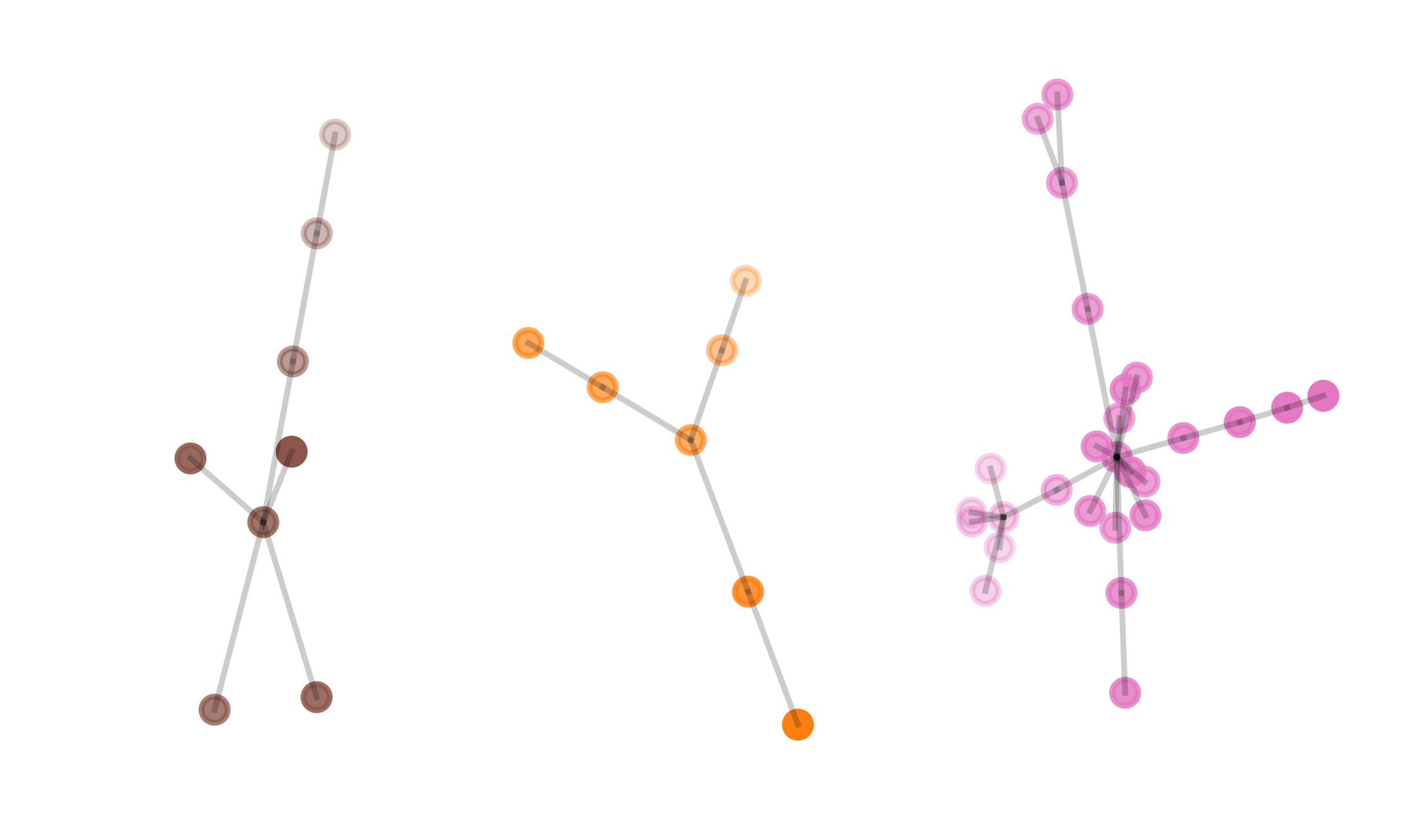}} \\
		\hline
	\end{tabular}}
	\caption{Routine graphs when patients are removed from the dataset, i.e. PEP method. 
		Patients 1 and 2 are grouped as sedentary as there is no change in the number of subgraphs for BRM-mined routines.}
\end{figure}
\clearpage
\newpage
\begin{figure*}
	\centering
    \setlength{\tabcolsep}{1pt}
	\begin{tabular}{|>{\centering\arraybackslash}m{0.19\textwidth} %
					|>{\vspace {4pt}\centering\arraybackslash}m{0.4\textwidth}%
					|>{\vspace {4pt}\centering\arraybackslash}m{0.4\textwidth}|%
				   }
		\multicolumn{3}{c}{\includegraphics[width=\textwidth]{figures/legend_frm.png}}\\
			\hline
			{} & BRM  &  FRM \\
			\hline
			Without user 03 &
			\Centerstack{\includegraphics[width=0.4\textwidth]{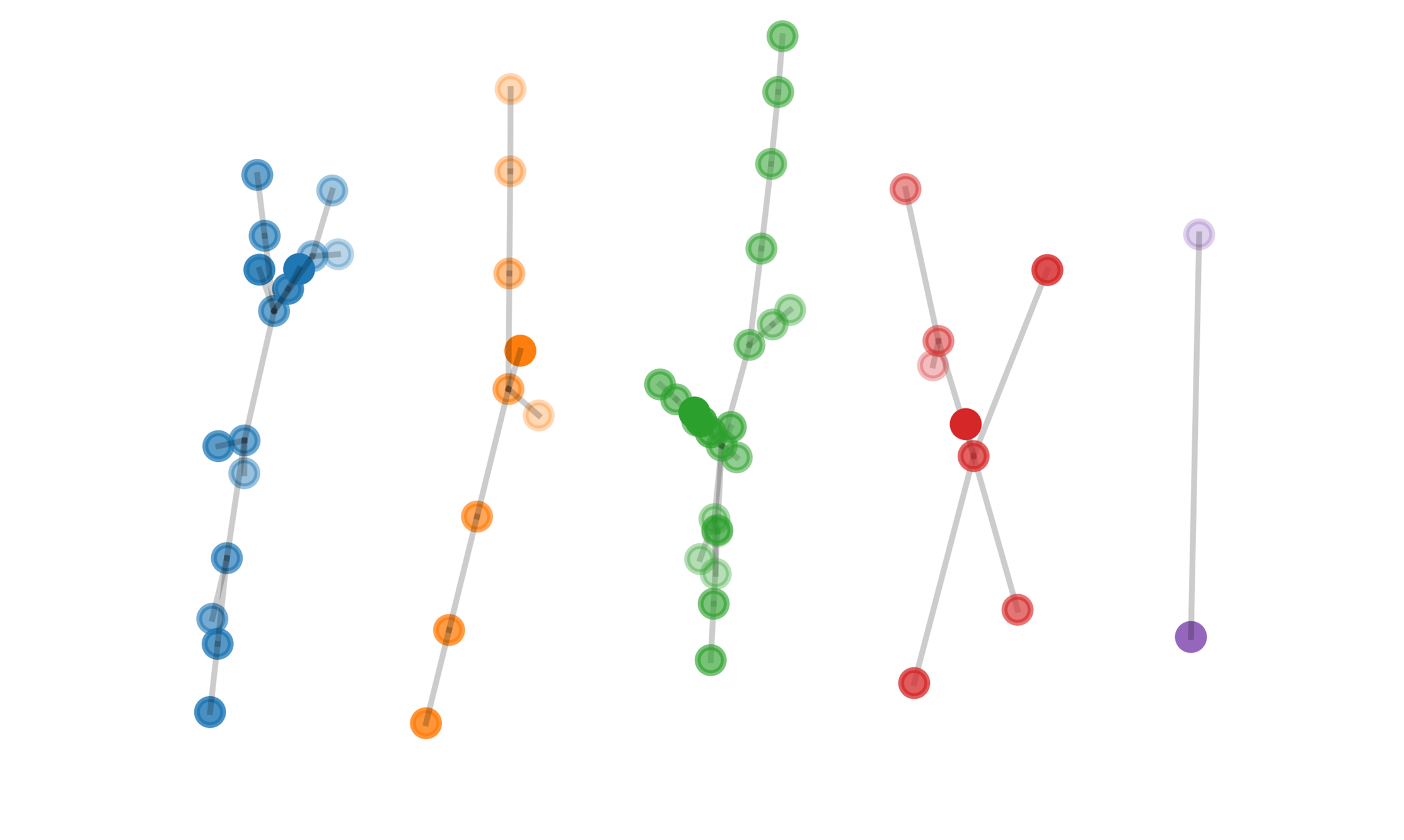}} &
			\Centerstack{\includegraphics[width=0.4\textwidth]{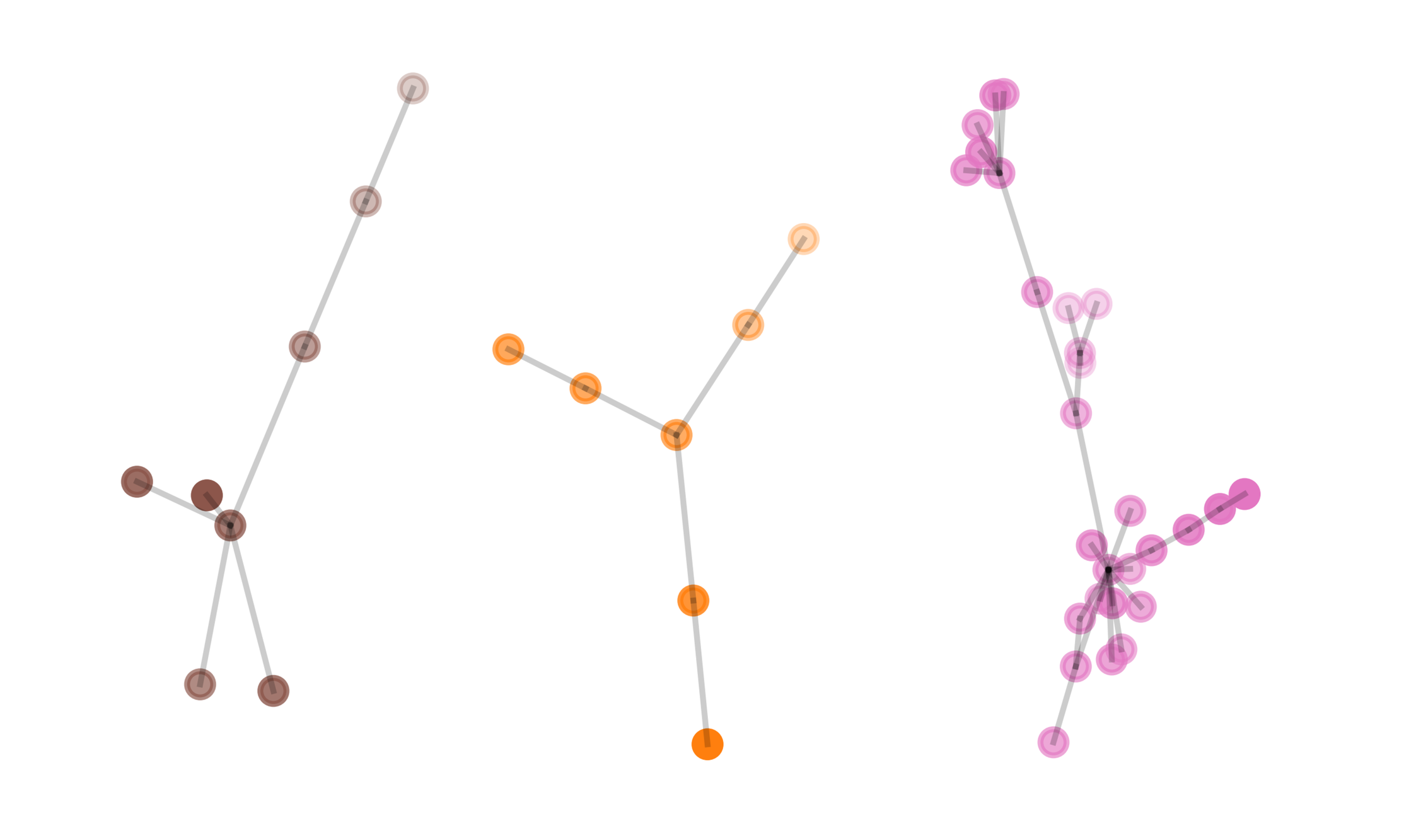}} \\
			\hline
			Without user 04 &
			\Centerstack{\includegraphics[width=0.4\textwidth]{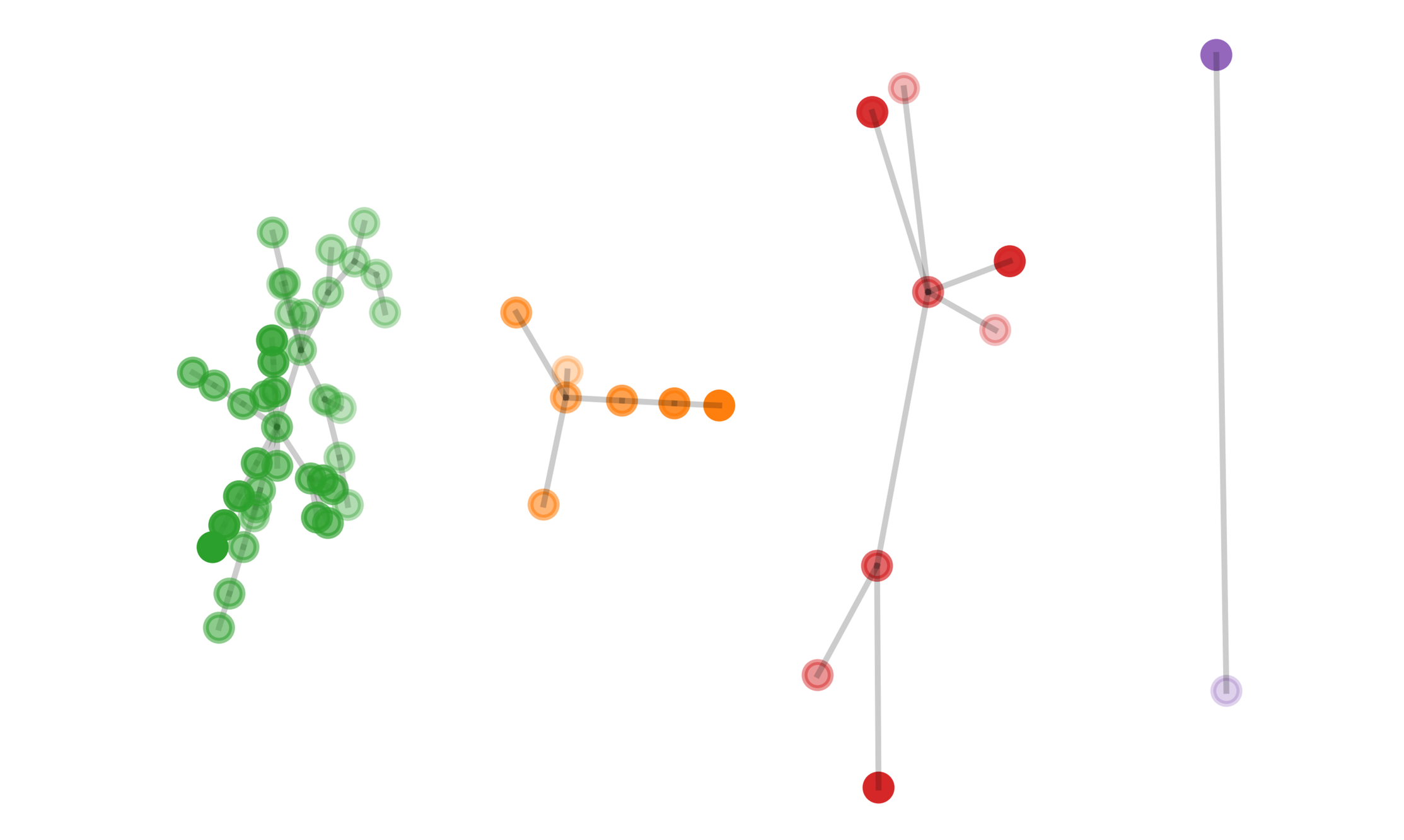}} &
			\Centerstack{\includegraphics[width=0.4\textwidth]{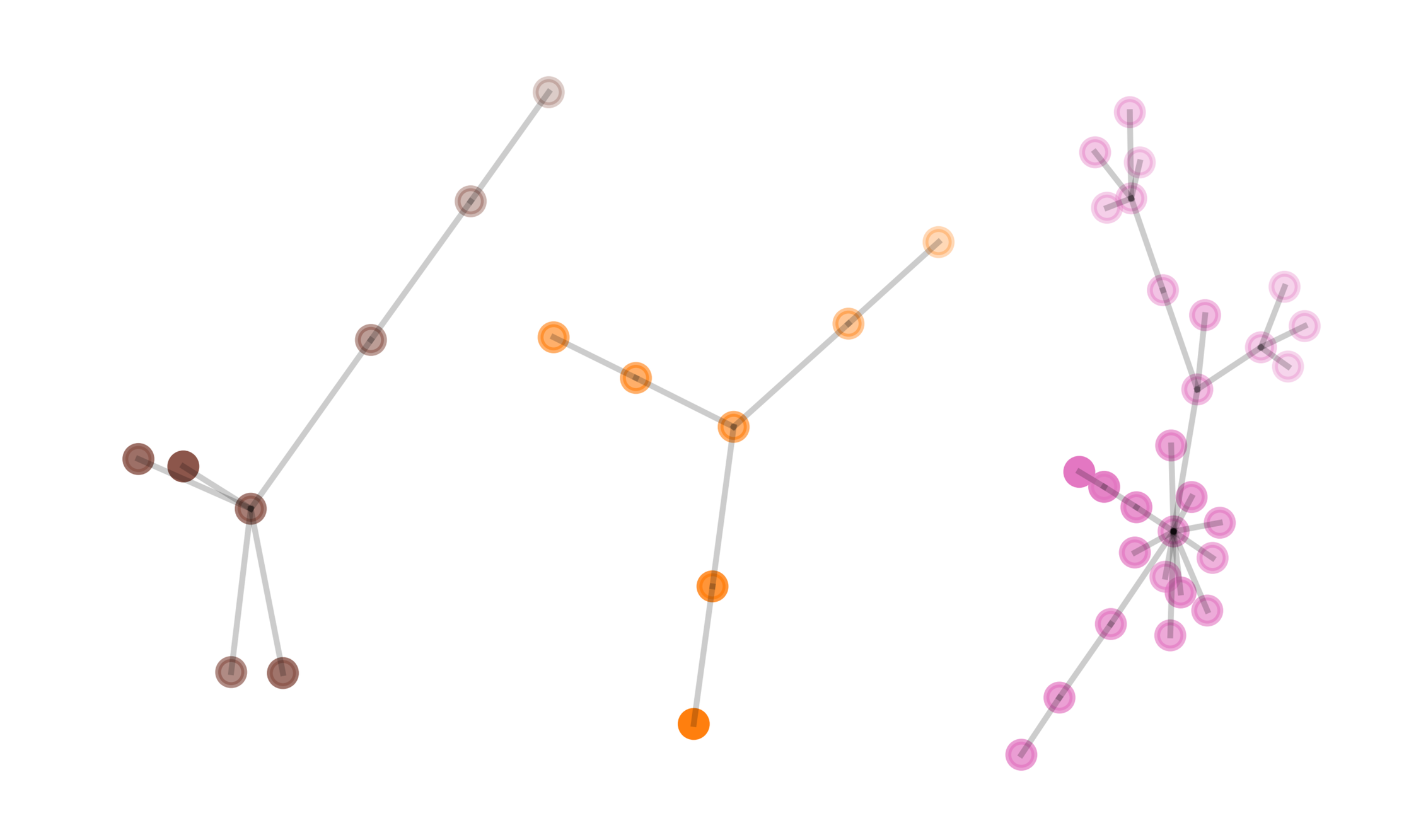}} \\
			\hline
			Without user 05 &
			\Centerstack{\includegraphics[width=0.4\textwidth]{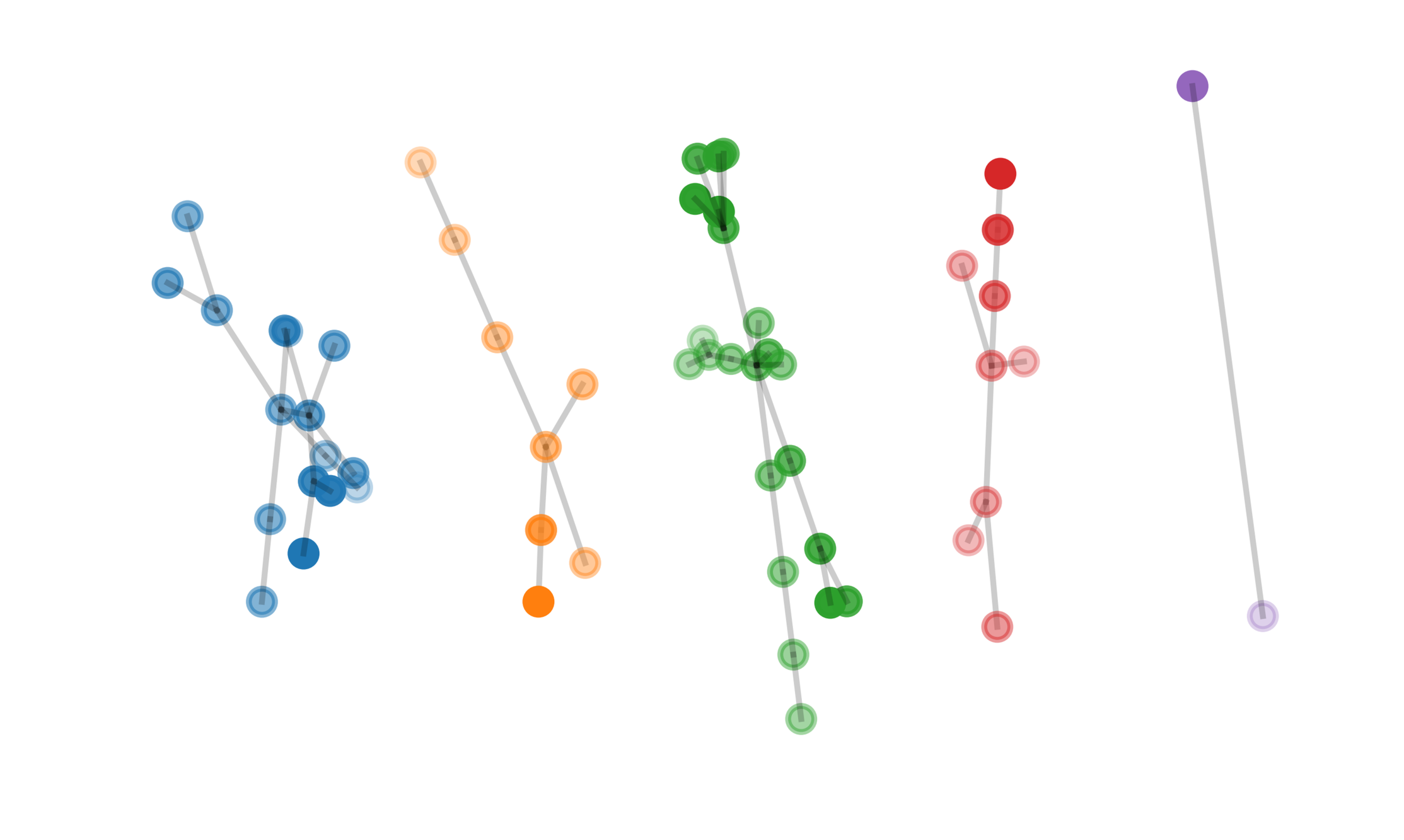}} &
			\Centerstack{\includegraphics[width=0.4\textwidth]{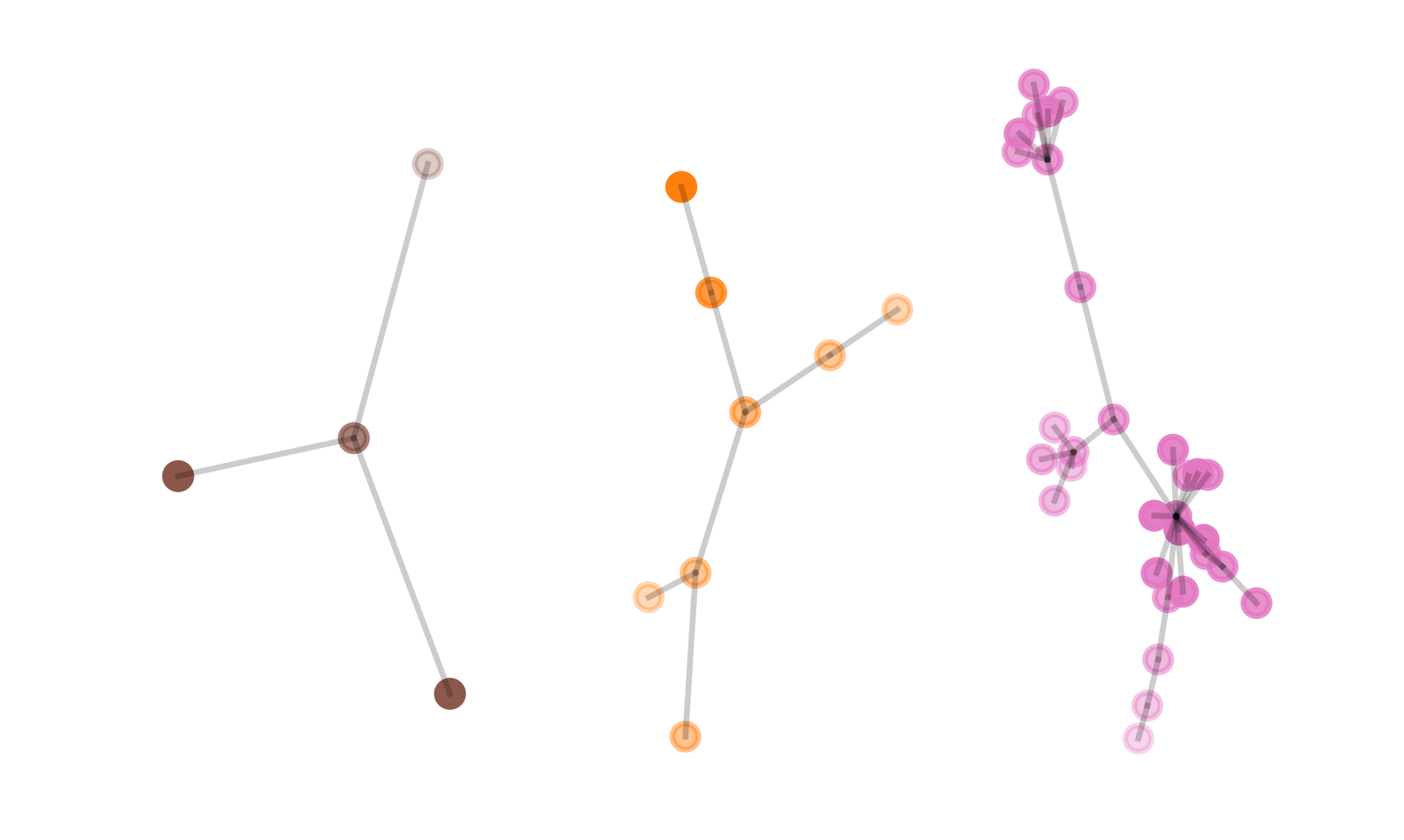}} \\
			\hline
		Without user 06 &
		\Centerstack{\includegraphics[width=0.4\textwidth]{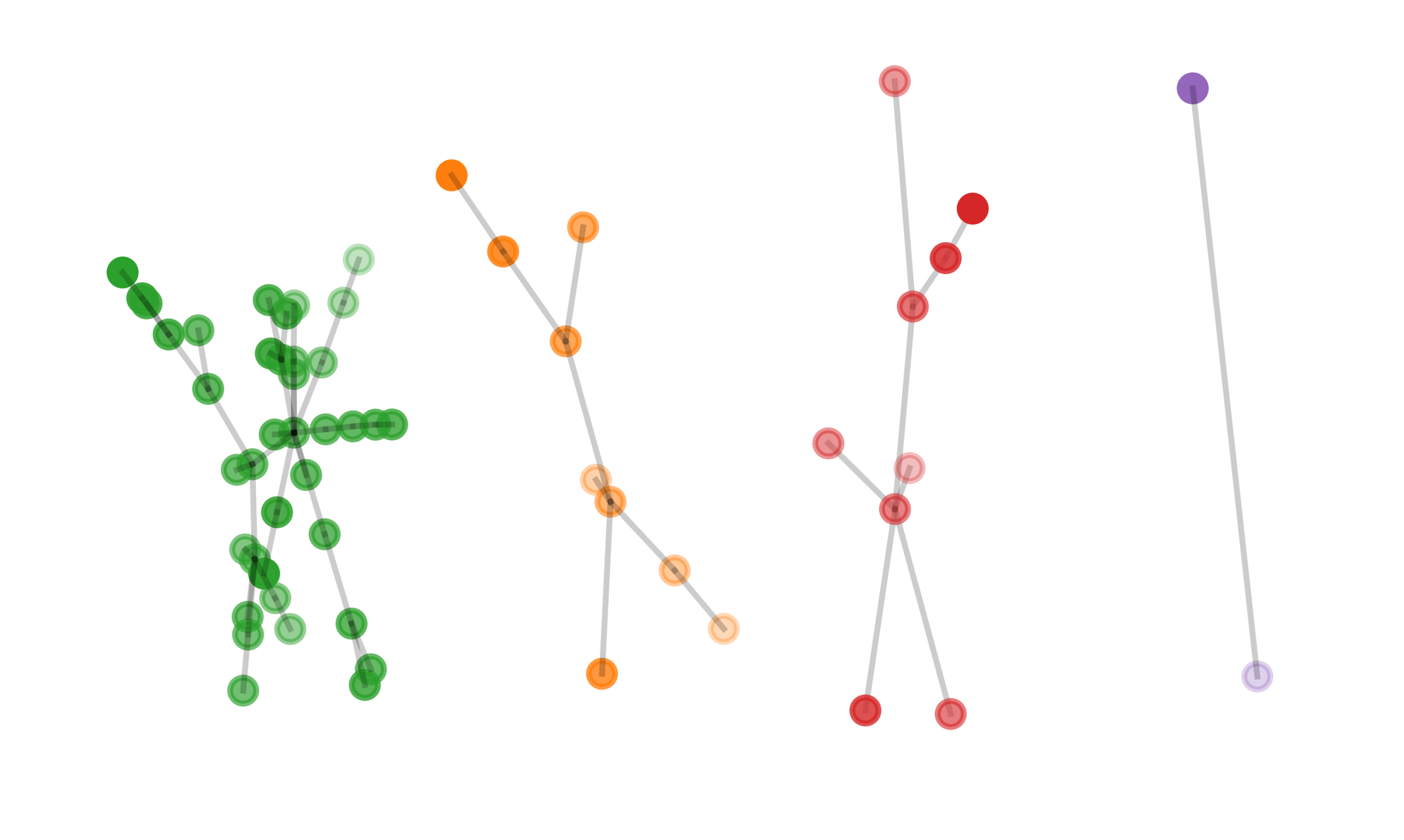}} &
		\Centerstack{\includegraphics[width=0.4\textwidth]{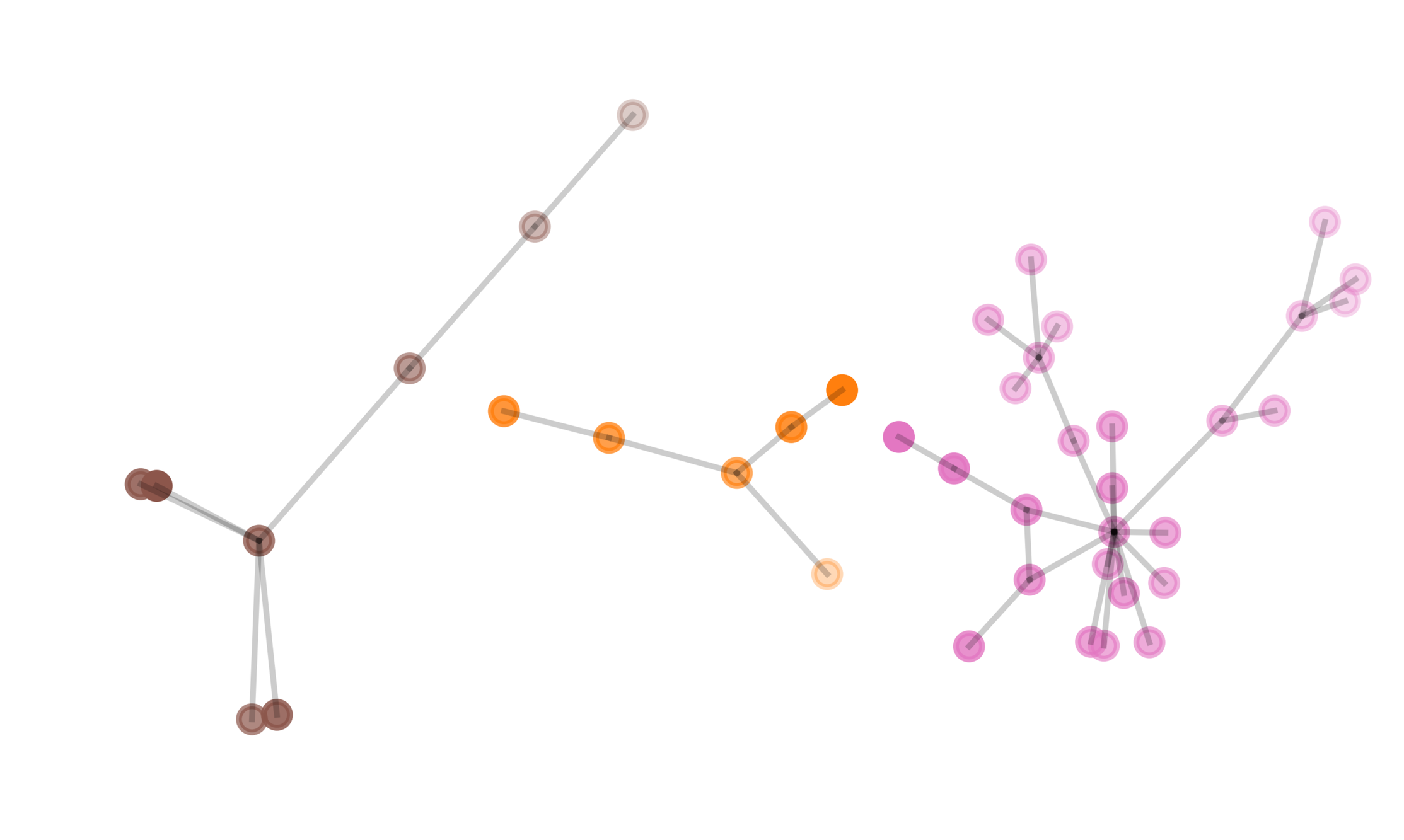}} \\
		\hline
\end{tabular}
\caption{Routine graphs when patients are removed from the dataset, i.e. PEP method. 
		Patient 4 and 6 were grouped with the active patients, as the socializing routine disappeared. Patients 3 and 5 were grouped as sedentary patients, because there is no change in the number of subgraphs for BRM-mined routines.}
\end{figure*}

\clearpage
\begin{figure*}
	\centering
	\setlength{\tabcolsep}{1pt}
	\begin{tabular}{|>{\centering\arraybackslash}m{0.19\textwidth} %
					|>{\vspace {4pt}\centering\arraybackslash}m{0.4\textwidth}%
					|>{\vspace {4pt}\centering\arraybackslash}m{0.4\textwidth}|%
				}
		\multicolumn{3}{c}{\includegraphics[width=\textwidth]{figures/legend_frm.png}}\\
		\hline
		{} & BRM  &  FRM \\
		\hline		
		Without user 07 &
		\Centerstack{\includegraphics[width=0.4\textwidth]{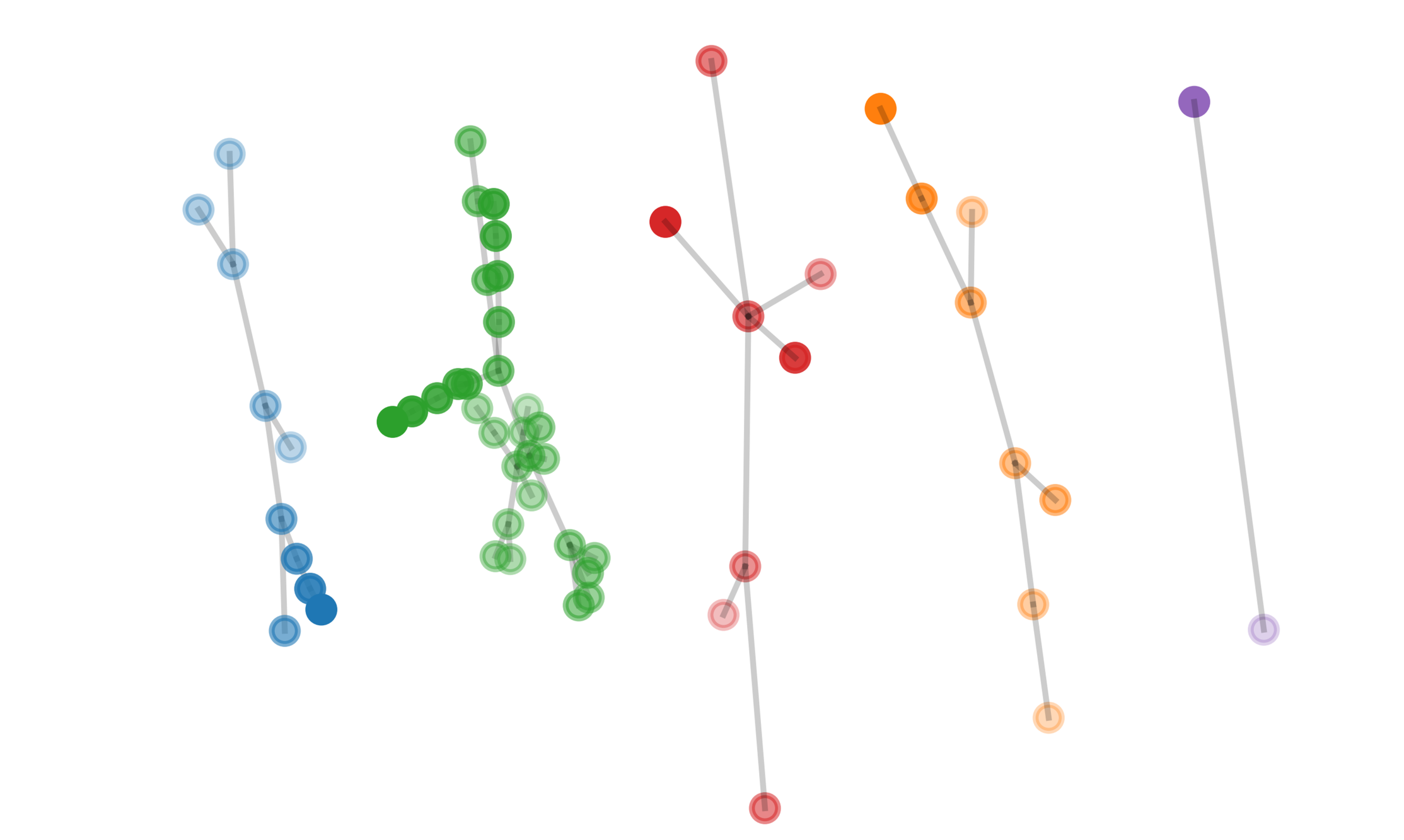}} &
		\Centerstack{\includegraphics[width=0.4\textwidth]{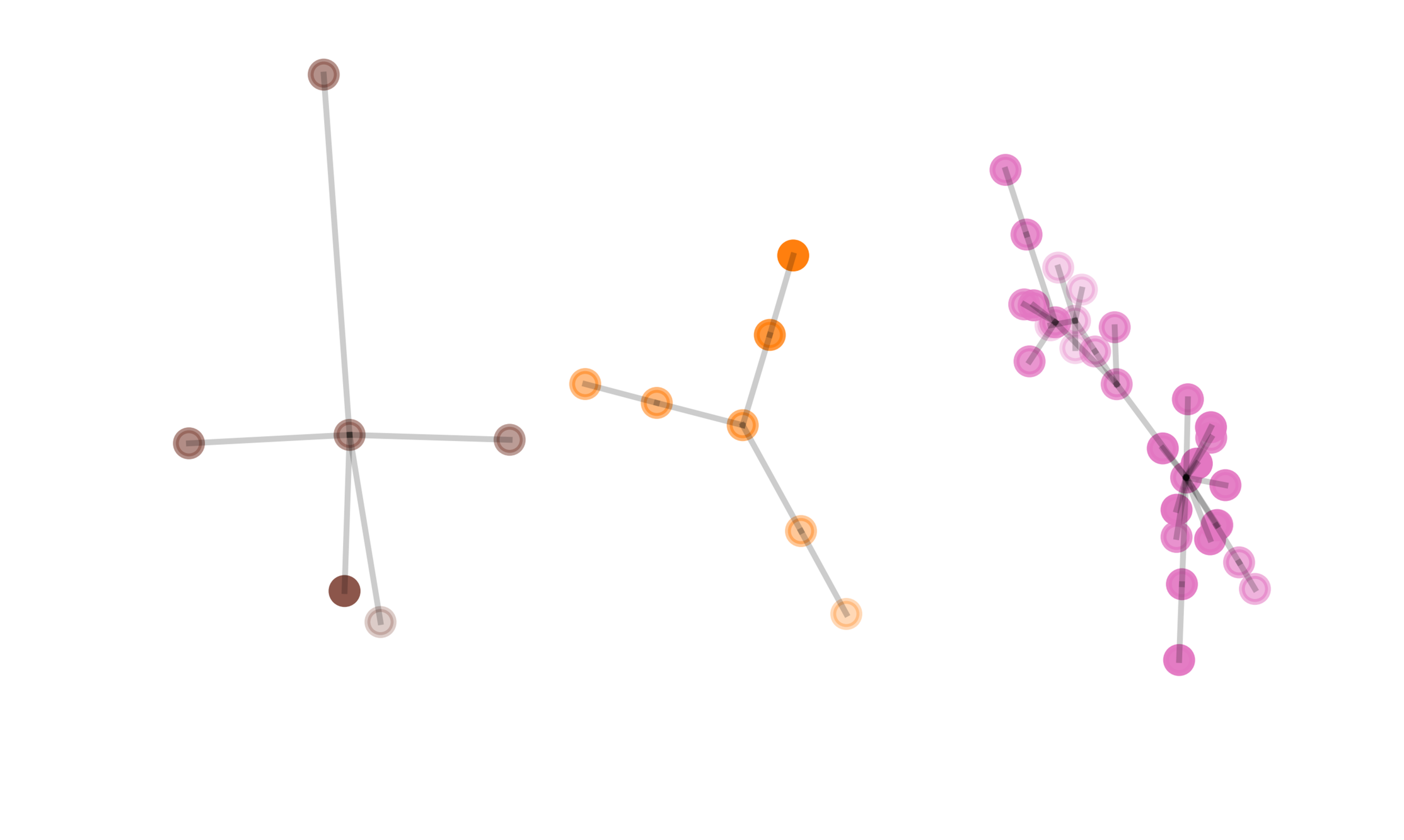}} \\
		\hline
		Without user 08 &
		\Centerstack{\includegraphics[width=0.4\textwidth]{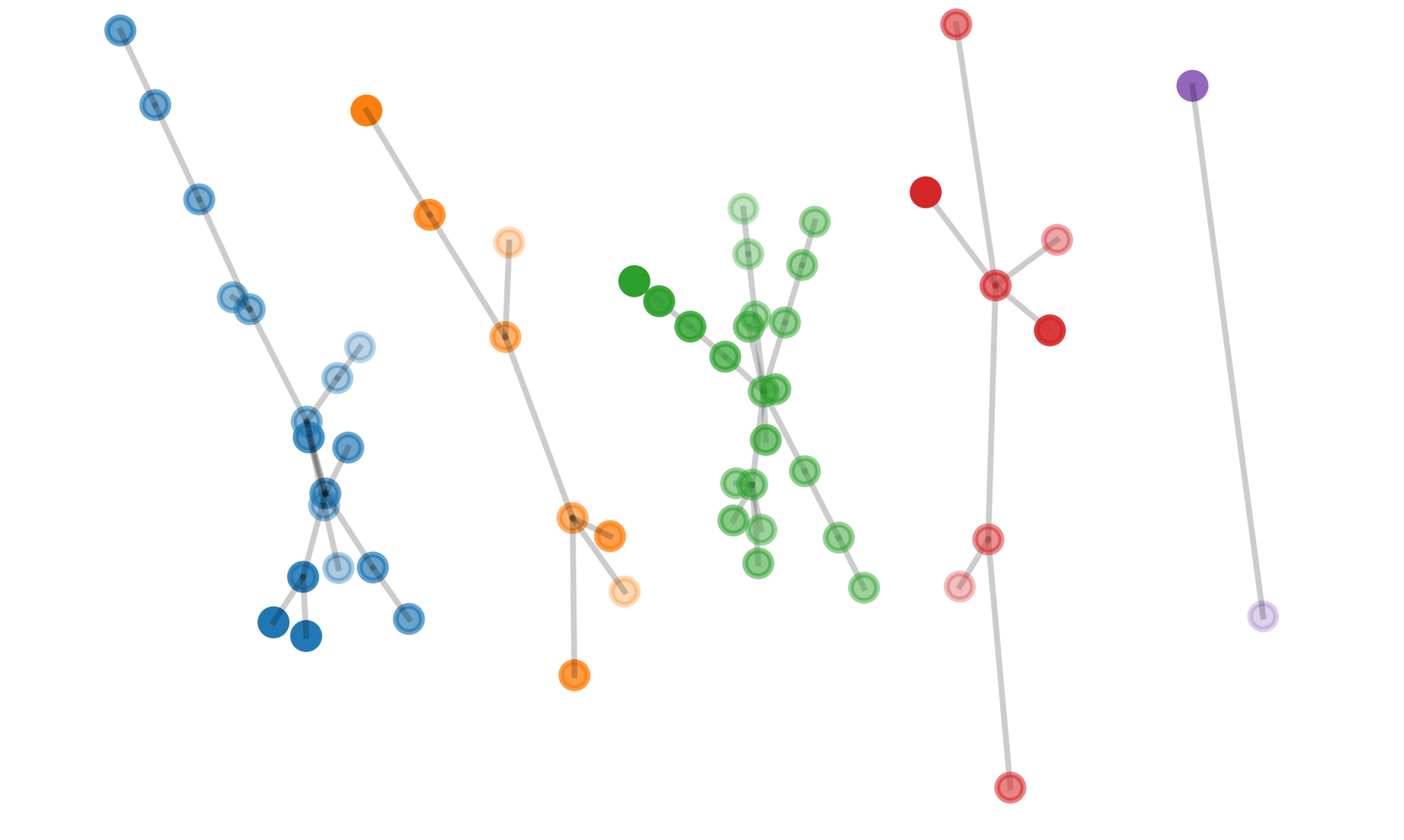}} &
		\Centerstack{\includegraphics[width=0.4\textwidth]{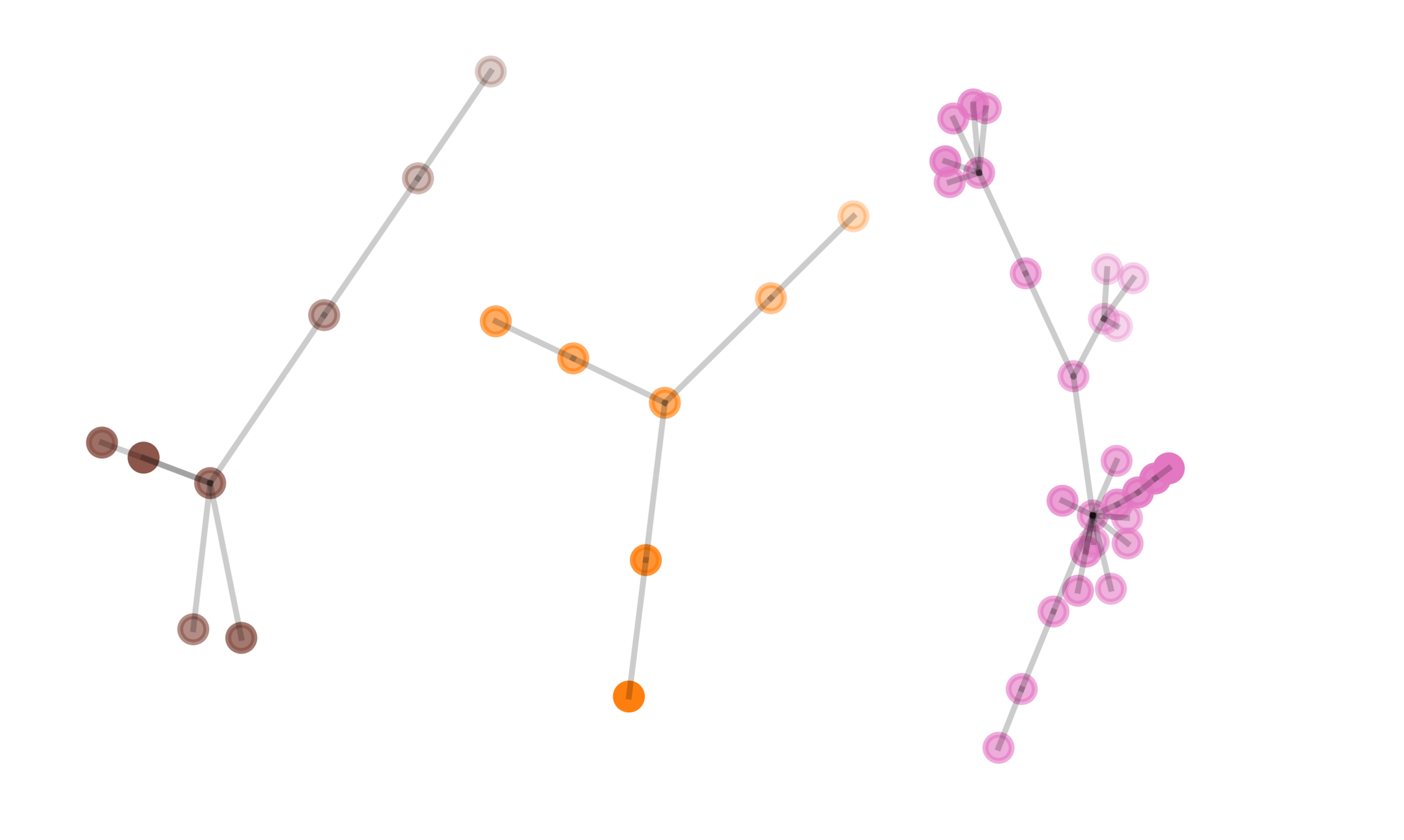}} \\
		\hline
		Without user 09 &
		\Centerstack{\includegraphics[width=0.4\textwidth]{figures/results_lopo_10.png}} &
		\Centerstack{\includegraphics[width=0.4\textwidth]{figures/results_lopo_frm_10.png}} \\
		\hline
	\end{tabular}
	\caption{Routine graphs when patients are removed from the dataset, i.e. PEP method.
		With BRM, when removing patients 9 socialising and intense training merge into one routine graph. Therefore, patient 9 is in the active patient group. Patient 7 and 8 were grouped as sedentary patients, because there is no change in the number of subgraphs for BRM-mined routines.}
\end{figure*}

\clearpage
\begin{figure*}
	\centering
	\setlength{\tabcolsep}{1pt}
	\begin{tabular}{|>{\centering\arraybackslash}m{0.19\textwidth} %
			|>{\vspace {4pt}\centering\arraybackslash}m{0.4\textwidth}%
			|>{\vspace {4pt}\centering\arraybackslash}m{0.4\textwidth}|%
		}
		\multicolumn{3}{c}{\includegraphics[width=\textwidth]{figures/legend_frm.png}}\\
		\hline
		{} & BRM  &  FRM \\
		\hline	
		Without user 10 & 
		\Centerstack{\includegraphics[width=0.4\textwidth]{figures/results_lopo_11.png}} &
		\Centerstack{\includegraphics[width=0.4\textwidth]{figures/results_lopo_frm_11.png}} \\
		\hline	
		Without user 11 &
		\Centerstack{\includegraphics[width=0.4\textwidth]{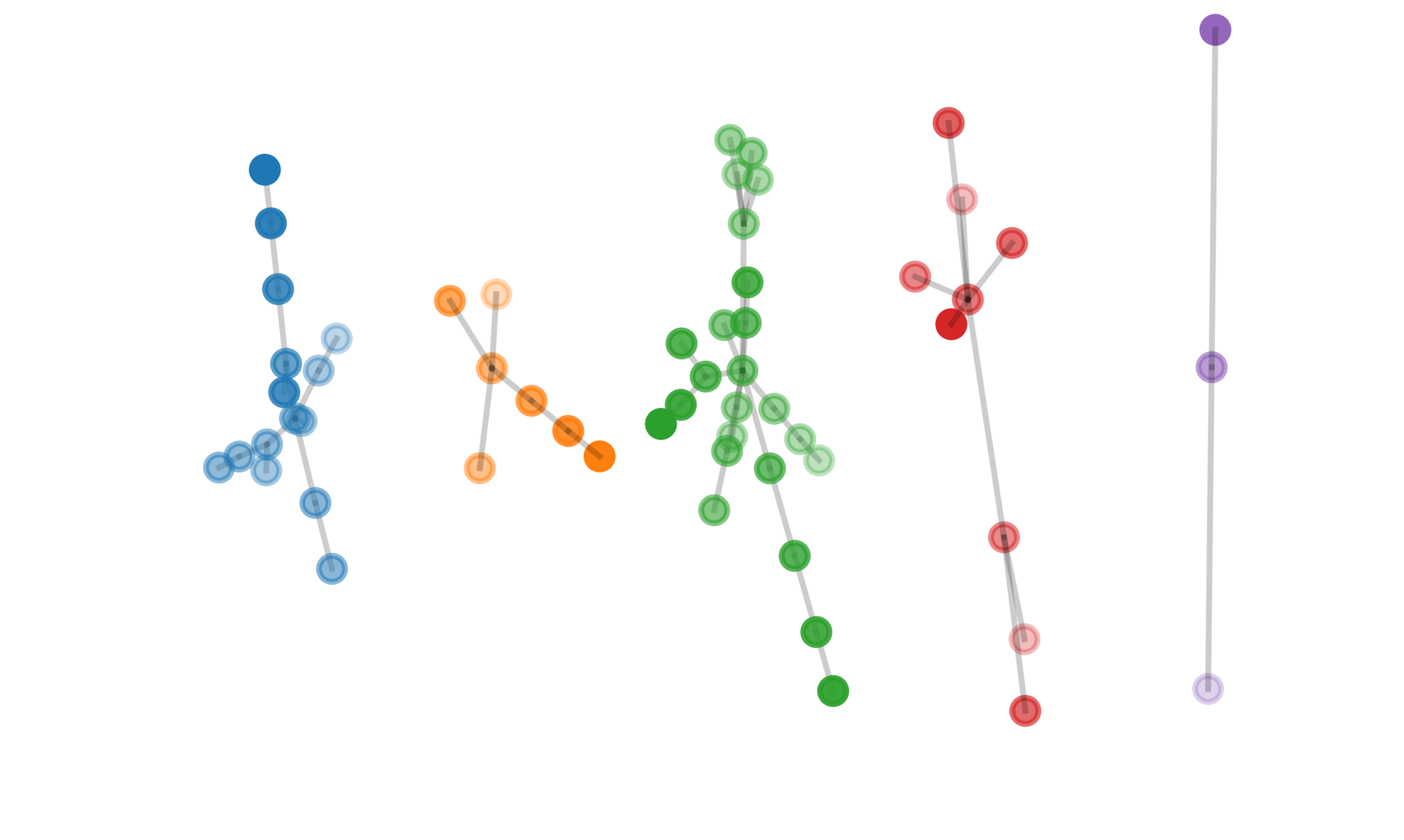}} &
		\Centerstack{\includegraphics[width=0.4\textwidth]{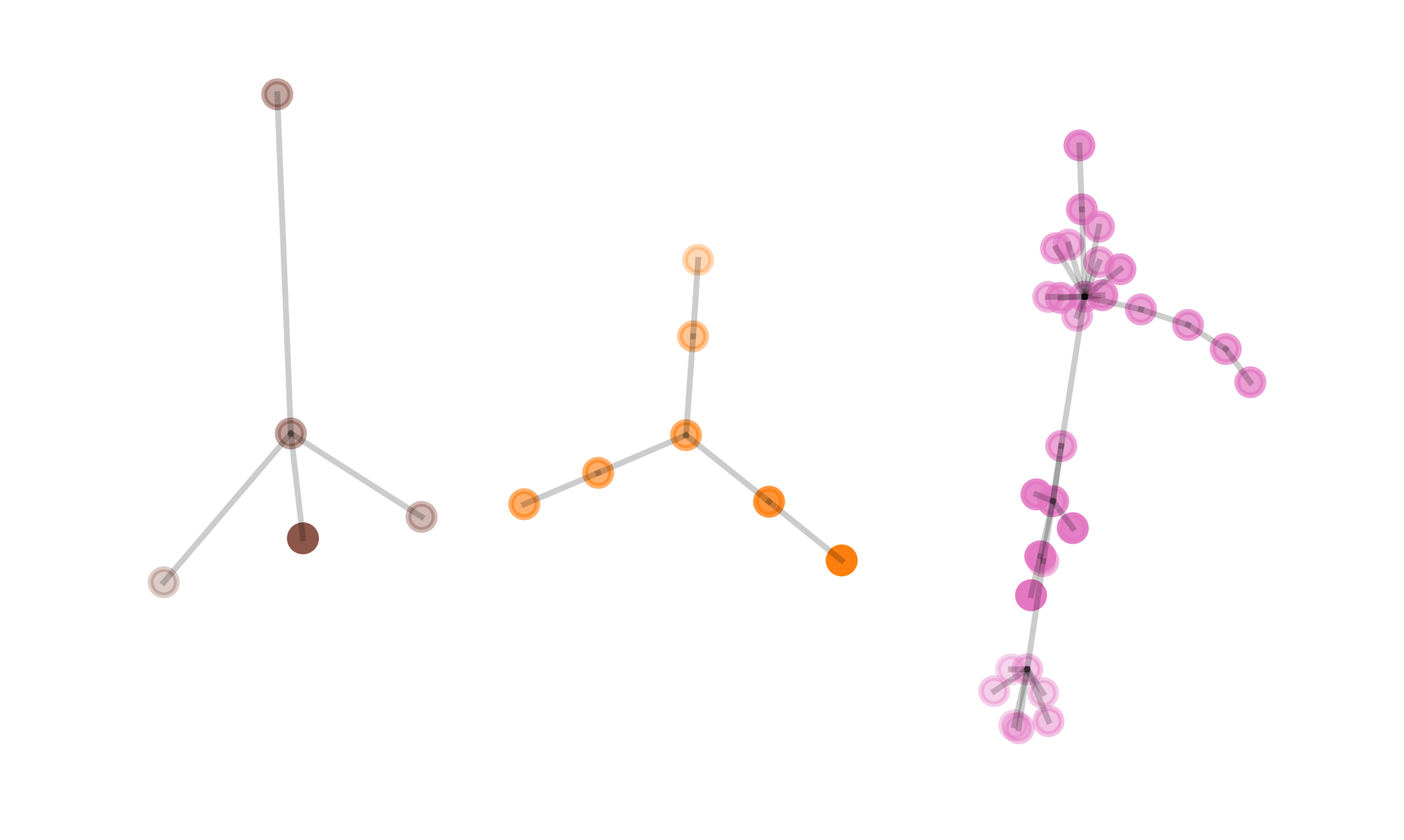}} \\
		\hline
		Without user 12 &
		\Centerstack{\includegraphics[width=0.4\textwidth]{figures/results_lopo_13.png}} &
		\Centerstack{\includegraphics[width=0.4\textwidth]{figures/results_lopo_frm_13.png}}\\
		\hline
	\end{tabular}
	\caption{Routine graphs when patients are removed from the dataset, i.e. PEP method.
		With BRM, when removing patients 10 socialising and intense training merge into one routine, and using phone routine disappears.  Therefore, patient 10 is in the active patient group. Patient 11 and 12 were grouped as sedentary patients, because there is no change in the number of subgraphs for BRM-mined routines.}
\end{figure*}

\end{document}